\newtheorem*{rep@theorem}{\rep@title}
\newcommand{\newreptheorem}[2]{
\newenvironment{rep#1}[1]{
 \def\rep@title{#2 \ref{##1}}
 \begin{rep@theorem}\itshape}
 {\end{rep@theorem}}}
\theoremstyle{plain}
\newenvironment{proofof}[1]{\par
  \pushQED{\qed}%
  \normalfont \topsep6\p@\@plus6\p@\relax
  \trivlist
  \item[\hskip\labelsep
\emph{    Proof of #1\@addpunct{.}}]\ignorespaces
}{%
  \popQED\endtrivlist\@endpefalse
}
\newcommand{\ignore}[1]{}
\def\colorful{1}
\newcommand{\red}[1]{{\color{red} {#1}}}
\newcommand{\gray}[1]{{\color{gray}{#1}}}
\newcommand{\red}[1]{{{#1}}}
\newcommand{\gray}[1]{{{#1}}}
\newtheorem*{theorem*}{Theorem}
\newtheorem*{noclaim*}{Claim}
\newcommand{\cycles}{\mathrm{cycles}}
\newcommand{\marg}{\mathrm{marginal}}
\newcommand{\learn}{\mathrm{learn}}
\newcommand{\Ind}{\mathds{1}}
\newcommand{\coord}{\mathrm{Coord}}
\newcommand{\constr}{\mathrm{Constr}}
\renewcommand{\N}{\mathds{N}}
\renewcommand{\R}{\mathds{R}}
\newcommand{\paths}{\mathrm{Paths}}
\newcommand{\trans}{{\mathrm{trans}}}
\newcommand{\ol}[1]{\overline{#1}}
\newcommand{\rnote}[1]{\footnote{{\bf \color{red}Rocco}: {#1}}}
\newcommand{\partn}[2]{\mathsf{Partial}_{{#1}}(#2)}
\begin{document}

\title{
Learning sparse mixtures of rankings\\
from noisy\ignore{and partial} information
}
\author{
Anindya De\thanks{Supported by NSF grant CCF-1814706}\\
Northwestern University\\
{\tt anindya@eecs.northwestern.edu}
\and
Ryan O'Donnell\thanks{Supported by NSF grants CCF-1618679 and CCF-1717606}\\
Carnegie Mellon University\\
{\tt odonnell@cs.cmu.edu}
\and
\and Rocco A.~Servedio\thanks{Supported by NSF grants CCF-1563155 and CCF-1814873 and by the Simons Collaboration on Algorithms and Geometry. This material is based upon work supported by the National Science Foundation under grant numbers listed above. Any opinions, findings and conclusions or recommendations expressed in this material are those of the authors and do not necessarily reflect the views of the National Science Foundation.}\\
Columbia University \\
{\tt rocco@cs.columbia.edu}
}

\begin{titlepage}

\maketitle

\begin{abstract}

We study the problem of learning an unknown mixture of $k$ rankings over $n$ elements, given access to noisy\ignore{or incomplete} samples drawn from the unknown mixture.  We consider a range of different noise\ignore{ and deletion} models, including natural variants of the ``heat kernel'' noise framework and the Mallows model.  For each of these noise models we give an algorithm which, under mild assumptions, learns the unknown mixture to high accuracy and runs in $n^{O(\log k)}$ time.
The best previous algorithms for closely related problems have running times which are exponential in $k$.
\end{abstract}

\thispagestyle{empty}

\end{titlepage}


\section{Introduction}

This paper considers the following natural scenario:  there is a large heterogeneous population which consists of $k$ disjoint subgroups, and for each subgroup there is a ``central preference order'' specifying a ranking over a fixed set of $n$ items (equivalently, specifying a permutation in the symmetric group $\mathbb{S}_n$). For each $i \in \{1,\dots,k\}$, the preference order of each individual in subgroup $i$ is assumed to be a noisy version of the central preference order (the permutation corresponding to subgroup $i$).  A natural learning task  which arises in this scenario is the following:  given access to the preference order of randomly selected members of the population, is it possible to learn the central preference orders of the $k$ sub-populations, as well as the relative sizes of these $k$ sub-populations within the overall population?

Worst-case formulations of the above problem typically tend to be (difficult) variants of the feedback arc set problem, which is known to be NP-complete \cite{GareyJohnson79}.  In view of the practical importance of problems of this sort, though, there has been considerable recent research interest in studying various generative models corresponding to the above scenario (we discuss some of the recent work which is most closely related to our results in~\Cref{sec:priorwork}).  In this paper we will model the above general problem schema as follows:  The $k$ ``central preference orders'' of the subgroups are given by $k$ unknown permutations $\sigma_1, \ldots, \sigma_k \in \mathbb{S}_n$.  The fraction of the population belonging to the $i$-th subgroup, for $1 \leq i \leq k$, is given by an unknown $w_i \geq 0$ (so $w_1 + \cdots + w_k = 1$). Finally, the noise is modeled by some family of distributions $\{\mathcal{K}_{\theta}\}$, where each distribution ${\cal K}_\theta$ is supported on $\mathbb{S}_n$, and the preference order of a random individual in the $i$-th subgroup is given by $\bpi \sigma_i$, where $\bpi \sim \mathcal{K}_\theta$. Here $\theta $ is a model parameter capturing the ``noise rate'' (we will have much more to say about this for each of the specific noise models we consider below).  The  learning task is to recover the central rankings $\sigma_1, \ldots, \sigma_k$ and their proportions $w_1, \ldots, w_k$, given access to preference orders of randomly chosen individuals from the population. In other words, each sample provided to the learner is independently generated  by first choosing a random permutation $\bsigma$, where $\bsigma$ is chosen to be $\sigma_i$ with probability $w_i$; then independently drawing a random $\bpi \sim {\cal K}_\theta$; and finally, providing the learner with the permutation $\bpi \bsigma \in \mathbb{S}_n.$ 
 Let $f: \mathbb{S}_n \rightarrow \mathbb{R}^{\geq 0}$ denote
the function which is $w_i$ at $\sigma_i$ and $0$ otherwise. 
With this notation, 
we write ``${\cal K}_\theta \ast f$'' to denote the distribution over noisy samples described above, and our goal is to approximately recover $f$ given such noisy samples. The reader may verify that the distribution defined by $\bpi \bsigma$ is precisely given by the group convolution ${\cal K}_\theta \ast f$   (and hence the notation).

\subsection{The noise\ignore{ (and deletion)} models that we consider}

We consider a range of different noise models, corresponding to different choices for the parametric family $\{{\cal K}_\theta\}$, and for each one we give an efficient algorithm for recovering the population in the presence of that kind of noise.  In this subsection we detail the three specific noise models that we will work with (though as we discuss later, our general mode of analysis could be applied to other noise models as well).

\medskip

(A.) {\bf Symmetric noise.} In the \emph{symmetric noise} model, the parametric family of distributions over $\mathbb{S}_n$ is denoted $\{{\cal S}_{\ol{p}}\}_{\ol{p} \in \Delta^{n}}$.  Given a vector $\ol{p}=(p_0,\dots,p_n) \in \Delta^n$ (so each $p_i \geq 0$ and $\sum_{i=0}^n p_i=1$), a draw of $\bpi \sim {\cal S}_{\ol{p}}$ is obtained as follows:

\begin{enumerate}
\item Choose $0 \le \bj \le n$, where value $j$ is chosen with probability ${p}_j$.
\item Choose a uniformly random subset $\bA \subseteq [n]$ of size exactly $\bj$. Draw $\bpi$ uniformly from $\mathbb{S}_{\bA}$; in other words, $\bpi$ is a uniformly random permutation over the set $\mathcal{A}$ and is the identity permutation on elements in $[n]\setminus \bA$.  (We denote this uniform distribution over $\mathbb{S}_{\bA}$ by $\mathbb{U}_{\bA}$.)
\end{enumerate}

\noindent Note that in this model, if the noise vector $\ol{p}$ has $p_n=1$, then every draw from $ {\cal S}_{\ol{p}} \ast f$ is a uniform random permutation and there is no useful information available to the learner.

\medskip

In order to define the next two noise models that we consider, let us recall the notion of a \emph{right-invariant} metric on $\mathbb{S}_n$. Such a metric $d(\cdot,\cdot)$ is one that satisfies $d(\sigma,\pi)=d(\sigma \tau, \pi \tau)$ for all $\sigma,\pi,\tau \in \mathbb{S}_n$.  We note that a metric is right-invariant if and only if it is invariant under relabeling of the items $1,\dots,n$, and that most metrics considered in the literature satisfy this condition (see~\cite{kumar2010generalized, diaconis-chap6} for discussions of this point).  In this paper, for technical convenience we restrict our attention to the metric $d(\cdot,\cdot)$ being the \emph{Cayley distance} over $\mathbb{S}_n$ (though see~\Cref{sec:discussion} for a discussion of how our methods and results could potentially be generalized to other right-invariant metrics):

\begin{definition}~\label{def:Cayley-distance}
Let $G$ be the undirected graph with vertex set $\mathbb{S}_n$ and an edge between permutations
$\sigma$ and $\pi$ if there is a transposition $\tau$ such that $\sigma = \tau \cdot \pi$. The \emph{Cayley distance} over $\mathbb{S}_n$ is the metric induced by this graph; in other words, $d(\pi,\sigma)=t$ where $t$ is the smallest value such that there are transpositions $\tau_1, \ldots, \tau_t$ satisfying $\sigma = \tau_1 \cdots \tau_t \pi$.
\end{definition}

Now we are ready to define the next two parameterized families of noise distributions that we consider.  We note that each of the noise distributions ${\cal K}$ considered below has the natural property that $\Pr_{\bpi \sim {\cal K}}[\bpi=\pi]$ decreases with $d(\pi,e)$ where $e$ is the identity distribution.

\medskip

(B.) {\bf Heat kernel random walk under Cayley distance.}  Let ${\cal L}$ be the Laplacian of the graph $G$ from~\Cref{def:Cayley-distance}.  Given a ``temperature'' parameter $t \in \R^+$, the \emph{heat kernel} is the $n! \times n!$ matrix $H_t = e^{-t{\cal L}}.$  It is well known that $H_t$ is the transition matrix of the random walk induced by choosing a Poisson-distributed time parameter $\bT \sim \mathsf{Poi}(t)$ and then taking $\bT$ steps of a uniform random walk in the graph $G$.  With this motivation, we define the \emph{heat kernel noise model} as follows:  the parametric family of distributions is $\{{\cal H}_t\}_{t \in \R^+}$, where the probability weight that ${\cal H}_t$ assigns to permutation $\pi$ is the probability that the above-described random walk, starting at the identity permutation $e \in \mathbb{S}_n$, reaches $\pi$.  (Observe that higher temperature parameters $t$ correspond to higher rates of noise.  More precisely, it is well known that the mixing time of a uniform random walk on $G$ is $\Theta(n \log n)$ steps, so if $t$ grows larger than $n \log n$ then the distribution $\calH_t$ converges rapidly to the uniform distribution on $\mathbb{S}_n$; see  \cite{DS81} for detailed results along these lines.)  We note that these probability distributions (or more precisely, the associated heat kernel $H_t$) have been previously studied in the context of learning rankings, see e.g.~\cite{kondor2002diffusion, KB10, jiao2018kendall}.  In some of this work, a different underlying distance measure was used over $\mathbb{S}_n$ rather than the Cayley distance; see our discussion of related work in~\Cref{sec:priorwork}.

\medskip (C.) {\bf Mallows-type model under Cayley distance (Cayley-Mallows / Ewens model).} While the heat kernel noise model arises naturally from an analyst's perspective, a somewhat different model, called the \emph{Mallows model}, has been more popular in the statistics and machine learning literature.  The Mallows model is defined using the ``Kendall $\tau$-distance'' $K(\cdot,\cdot)$ between permutations (defined in~\Cref{sec:priorwork}) rather than the Cayley distance $d(\cdot,\cdot)$; the Mallows model with parameter $\theta > 0$ assigns probability weight $e^{-\theta K(\pi,e)}/Z_K(\theta)$ to the permutation $\pi$, where $Z_k(\theta)=\sum_{\pi \in \mathbb{S}_n} e^{-\theta K(\pi,e)}$ is a normalizing constant. As proposed by Fligner and Verducci \cite{fligner1986distance}, it is natural to consider generalizations of the Mallows model in which other distance measures take the place of the Kendall $\tau$-distance.  The model which we consider is one in which the Cayley distance is used as the distance measure; so given $\theta>0$, the noise distribution ${\cal M}_\theta$ which we consider assigns weight $e^{-\theta d(\pi,e)}/Z(\theta)$ to each permutation $\pi \in \mathbb{S}_n$, where $Z(\theta) = \sum_{\pi \in \mathbb{S}_n} e^{-\theta d(\pi,e)}$ is a normalizing constant.  In fact, this noise model was already proposed in 1972 by W. Ewens in the context of population genetics \cite{Ewens72} and has been intensively studied in that field (we note that \cite{Ewens72} has been cited more than 2000 times according to Google Scholar).  To align our terminology with the strand of research in machine learning and theoretical computer science which deals with the Mallows model, in the rest of this paper we  refer to ${\cal M}_\theta$ as the \emph{Cayley-Mallows} model. For the same reason, we will also refer to the usual Mallows model (with the Kendall $\tau$-disance) as the \emph{Kendall-Mallows} model.\ignore{\rnote{I also propose we take the ``usual'' notation for Ewens, which is that you're proportional to $\theta^{\text{cyc}(\pi)}$, where cyc denotes the number of cycles.  This kinda fixes the increasing/decreasing thing.  Also, we can explicitly state the normalizing constant, $Z(\theta)$ in this case; it's $\theta^{\uparrow n} = \theta(\theta+1)\cdots(\theta+n-1)$.}} We observe that for the Cayley-Mallows model ${\cal M}_\theta$, in contrast with the heat kernel noise model now \emph{smaller} values of $\theta$ correspond to higher levels of noise, and that when $\theta=0$ the distribution ${\cal M}_\theta$ is simply the uniform distribution over $\mathbb{S}_n$ and there is no useful information available to the learner.  \ignore{(We note that in the original Mallows model, which was first analyzed by Mallows~\cite{mallows1957non} and subsequently studied in several works in algorithms and  machine learning~\cite{awasthi2014learning, liu2018, braverman2008noisy, lu2011learning}, the underlying metric on $\mathbb{S}_n$ is taken to be the ``Kendall $\tau$-distance'' rather than the Cayley distance as in~\Cref{def:Cayley-distance}. The more general setting, in which the underlying metric on $\mathbb{S}_n$ may be an arbitrary right invariant metric, was first defined by Diaconis~\cite{diaconis1988group} and subsequently\rnote{did not FV86 precede Dia88?} studied in several works~\cite{fligner1986distance, murphy2003mixtures, mukherjee2016estimation, DH92}.)}

\ignore{
}

\ignore{
\medskip

In addition to the noise frameworks described above, we also consider a related problem of learning rankings from \emph{partial information}, in which the central preference orders $\sigma_1,\dots,\sigma_k$ are corrupted by \emph{deletions} rather than by noise:

\medskip (D.) {\bf Learning from views of width $w$.}  In the \emph{width-$w$ deletion model}, each sample provided to the learner is independently generated as follows.  As before, first a random permutation $\bsigma$ is selected, where $\bsigma$ is chosen to be $\sigma_i=(\sigma_i(1),\dots,\sigma_i(n))$ with probability $w_i$.  Then a random subset of $w$ distinct elements $1 \leq \bj_1 < \cdots < \bj_w \leq n$ is chosen uniformly from all size-$w$ subsets of $[n]$.  Finally, the learner is given the sequence $(\bsigma(\bj_1),\dots,\bsigma(\bj_w)) \in [n]^w$; we refer to such a sequence of length $w$ as a \emph{view of width $w$}.\footnote{Note that in this ``deletion'' model, the learner receives less information than she would recei ve in an analogous ``erasure'' model in which each sample is an $n$-character string from $(n \cup \{\#\})^n$, where the $i$-th character is $\bsigma_i$ if $i=\bj_\ell$
for some $\ell \in [w]$ and is $\#$ otherwise.}
}

\subsection{Our results}

For each of the noise\ignore{ and deletion} models defined above, we give algorithms which, under a mild technical assumption (that no mixing weight $w_i$ is too small), provably recover the unknown central rankings $\sigma_1,\dots,\sigma_k$ and associated mixing weights $w_1,\dots,w_k$ up to high accuracy.  A notable feature of our results is that the sample and running time dependence is only \emph{quasipolynomial} in the number of elements $n$ and the number of sub-populations $k$; as we detail in~\Cref{sec:priorwork} below, this is in contrast with recent results for similar problems in which the dependence on $k$ is exponential.

Below we give detailed statements of our results.  The following notation and terminology will be used in these statements:  for $f$ a distribution over $\mathbb{S}_n$ (or any function from $\mathbb{S}_n$ to $\R$) we write $\supp(f)$ to denote the set of permutations $\sigma \in \mathbb{S}_n$ that have $f(\sigma) \neq 0$.  For a given noise model $\cal K$, we write ``$ {\cal K} \ast f$'' to denote the distribution over noisy samples that is provided to the learning algorithm as described earlier.  Given two functions $f,g: \mathbb{S}_n \to \R$, we write ``$\|f-g\|_1$'' to denote $\sum_{\pi \in \mathbb{S}_n} |f(\pi) - g(\pi)|$, the $\ell_1$ distance between $f$ and $g$. If $f$ and $g$ are both distributions then we write $\dtv(f,g)$ to denote the \emph{total variation distance} between $f$ and $g$, which is ${\frac 1 2} \|f-g\|_1.$  Finally, if $f$ is a distribution over $\mathbb{S}_n$ in which $f(\sigma)>\eps$ for every $\sigma$ such that $f(\sigma)>0$, we say that $f$ is \emph{$\eps$-heavy}.

\medskip
\noindent {\bf Learning from noisy rankings:  Positive and negative results.} Our first algorithmic result is for the symmetric noise model (A) defined earlier.  \Cref{thm:ub-symmetric}, stated below, gives an efficient algorithm as long as the vector $\ol{p}$ is ``not too extreme'' (i.e.~not too biased towards putting almost all of its weight on large values very close to $n$):

\begin{theorem} [Algorithm for symmetric noise] \label{thm:ub-symmetric}
There is an algorithm with the following guarantee: Let
$f$ be an unknown $\eps$-heavy distribution over $\mathbb{S}_n$ with $|\supp(f)| \leq k$. Let $\ol{p}=(p_0,\dots,p_n) \in \Delta^n$ be such that
\[
\sum_{j=0}^{n - \log k} p_j \geq {\frac 1 {n^{O(\log k)}}}
.
\]  Given ${\ol{p}}$, the value of $\eps>0$, a confidence parameter $\delta>0$, and access to random samples from ${\cal S}_{\ol{p}} \ast f$, the algorithm runs in time ${\poly(n^{\log k},1/\eps,\log(1/\delta))}$ and with probability $1-\delta$  outputs a distribution $g: \mathbb{S}_n \to \R$ such that $\dtv(f,g) \leq \eps.$
\end{theorem}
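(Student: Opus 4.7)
The plan is to work in three stages: estimating the noisy marginals of ${\cal S}_{\ol{p}} \ast f$ on small coordinate subsets; inverting the noise to obtain the corresponding clean marginals of $f$; and reconstructing $f$ from these marginals using its sparsity. Throughout, I will set $s := \Theta(\log k)$. For each $T \subseteq [n]$ with $|T| \le s$ and each injective $\tau : T \to [n]$, write
\[
f_T(\tau) := \Pr_{\bsigma \sim f}[\bsigma|_T = \tau]
\qquad\text{and}\qquad
\widetilde f_T(\tau) := \Pr[(\bpi \bsigma)|_T = \tau].
\]
Since there are only $n^{O(\log k)}$ such pairs $(T,\tau)$, Chernoff plus union bounds show that $\mathrm{poly}(n^{\log k}, 1/\eps, \log(1/\delta))$ samples from ${\cal S}_{\ol{p}} \ast f$ suffice to estimate every $\widetilde f_T(\tau)$ to additive error $1/n^{\omega(\log k)}$ with probability at least $1-\delta$, which completes the first stage.

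The technical heart of the argument is inverting the noise to pass from $\widetilde f_T$ to $f_T$. Since $\bpi$ acts on values independently of $\bsigma$, the two marginals are related by an explicit linear map $\widetilde f_T = N_{\ol{p},T}\, f_T$, whose entries depend only on $\ol{p}$, $n$, and $|T|$. I would decompose $N_{\ol{p},T}$ according to the noise support size $\bj$ and the overlap $|\bA \cap \tau(T)|$ between the noise support and the value set of $\tau$. The hypothesis $\sum_{j \le n - s} p_j \ge 1/n^{O(\log k)}$ guarantees, via a short combinatorial calculation, that with probability at least $1/n^{O(\log k)}$ the random set $\bA$ is disjoint from $\tau(T)$, contributing an ``identity'' piece of the same weight to $N_{\ol{p},T}$. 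Using the permutation-symmetry of the noise to diagonalize $N_{\ol{p},T}$ on a representation-theoretic basis, I expect to derive a lower bound of $1/n^{O(\log k)}$ on its smallest singular value, which would let me recover each $f_T$ from the estimated $\widetilde f_T$ to additive error $\eps/n^{O(\log k)}$. Establishing this conditioning bound is the main obstacle I anticipate.

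Finally, given accurate estimates of all low-order clean marginals $\{f_T\}_{|T| \le s}$, I would reconstruct $f$ via an iterative ``heavy atom'' extraction that exploits both sparsity and $\eps$-heaviness. At each step, I identify a partial assignment $\tau^*$ of estimated weight $w^* \ge \eps$ that is consistent across the relevant marginals, extend $\tau^*$ to a full permutation $\sigma^* \in \mathbb{S}_n$ by inspecting one-coordinate conditional marginals on $[n] \setminus \mathrm{dom}(\tau^*)$, output the atom $w^* \delta_{\sigma^*}$, and subtract its contribution from every marginal before iterating. Since $|\supp(f)| \le k$, this process terminates after at most $k$ iterations, and a standard error-propagation argument using $\eps$-heaviness then gives $\dtv(f,g) \le \eps$ for the distribution $g$ so produced.
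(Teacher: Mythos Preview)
Your Stages~1--2 line up with the paper's approach: estimate $\widehat{{\cal K}\ast f}$ on the permutation representation and invert by $\widehat{{\cal K}}$; the paper's Theorem~3 and Lemma~1 carry exactly this out. One point in Stage~2 is incomplete, though. Writing $N_{\ol{p},T}=\kappa I+(\text{rest})$ with $\kappa\ge 1/n^{O(\log k)}$ and then ``diagonalizing via representation theory'' does not by itself give $\sigma_{\min}(N_{\ol{p},T})\ge\kappa$: the ``rest'' could have negative eigenvalues that cancel the identity piece. What makes symmetric noise special is that for each fixed $A$, the operator $\frac{1}{|\mathbb{S}_A|}\sum_{\pi\in\mathbb{S}_A}\rho(\pi)$ is an orthogonal projection, so every $N_j$ is an average of projections and hence PSD; this is the missing observation. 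The paper obtains the same conclusion by a different route (the branching rule gives $\widehat{{\cal S}_{\ol{p}}}(\rho_\mu)=\mathsf{c}(\ol{p},\mu)\,\mathsf{Id}$ with $\mathsf{c}(\ol{p},\mu)=\sum_j p_j\,\paths(\mathsf{Triv}_j,\mu)/\dim(\rho_\mu)\ge 0$), so either argument would close the gap you flagged as ``the main obstacle.''

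Stage~3 is where there is a genuine problem, and it is not the route the paper takes. Your extension step ``extend $\tau^*$ to a full $\sigma^*$ via one-coordinate conditional marginals'' only works if $\tau^*$ is consistent with a \emph{single} element of $\supp(f)$; otherwise the conditional on each coordinate $i\notin\mathrm{dom}(\tau^*)$ is a nontrivial mixture, and picking the heaviest value coordinate-by-coordinate can stitch together a $\sigma^*\notin\supp(f)$. There is no guarantee that some $\tau^*$ on $O(\log k)$ coordinates isolates a support element (and growing $\mathrm{dom}(\tau^*)$ one coordinate at a time would eventually require $n$-way marginals). The paper avoids this entirely: it proves a Wigderson--Yehudayoff--type structural lemma (any $k$-sparse $g$ has an $O(\log k)$-junta correlating with it at level $k^{-O(\log k)}$), uses it to certify that an LP over a candidate support recovers $f_\ell$ to $\ell_1$-accuracy from its $(2\log k)$-way marginals, and then builds the support of $f$ by extending prefixes $f_1,\dots,f_n$ one coordinate at a time while pruning with the LP so the candidate set never exceeds size $k$. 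Replacing your heavy-atom extraction with that LP/prune loop (or otherwise supplying an argument that a uniquely-extending $\tau^*$ always exists at size $O(\log k)$, which is the nonobvious part) is what is needed to finish.
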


Our second algorithmic result, which is similar in spirit to~\Cref{thm:ub-symmetric}, is for the heat kernel noise model: 

\begin{theorem} [Algorithm for heat kernel noise] \label{thm:ub-heat}
There is an algorithm with the following guarantee: Let $f$ be an unknown $\eps$-heavy distribution over $\mathbb{S}_n$ with $|\supp(f)| \leq k$.  Let $t \in \R^+$ be any value that is $O(n \log n)$.  Given $t$, the value of $\eps>0$, a confidence parameter $\delta>0$, and access to random samples from ${\cal H}_{t} \ast f$, the algorithm runs in time ${\poly(n^{\log k},1/\eps,\log(1/\delta))}$ and with probability $1-\delta$ outputs a distribution $g: \mathbb{S}_n \to \R$ such that $\dtv(f,g) \leq \eps.$
\end{theorem}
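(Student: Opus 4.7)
The plan is to mirror the proof of Theorem~\ref{thm:ub-symmetric}, adapting the Fourier-deconvolution step to the heat-kernel noise. Both $\calS_{\ol{p}}$ and $\calH_t$ are class functions on $\mathbb{S}_n$ (they depend only on cycle type), so by Schur's lemma the Fourier coefficient $\widehat{\calH_t}(\rho_\lambda)$ at each irreducible representation $\rho_\lambda$ of $\mathbb{S}_n$ (indexed by a partition $\lambda \vdash n$) is a scalar multiple of the identity. Thus the same inversion step used in the symmetric-noise algorithm should apply to the heat-kernel setting, provided the relevant scalars are not too small on the representations used by the reconstruction subroutine.

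Concretely, the algorithm would proceed in three steps. First, for every partition $\lambda \vdash n$ with $n - \lambda_1 \leq C\log k$ (for a suitable absolute constant $C$), estimate the Fourier coefficient $\widehat{\calH_t \ast f}(\rho_\lambda) = e^{-t\mu_\lambda}\,\hat f(\rho_\lambda)$ from samples, where $\mu_\lambda$ denotes the Laplacian eigenvalue on the $\lambda$-isotypic component of the Cayley graph $G$. Since the number of such ``low-complexity'' $\lambda$ and the total dimension sum of the corresponding irreducibles are both $n^{O(\log k)}$, a Chernoff bound over the estimator entries achieves the required concentration with sample complexity $\poly(n^{\log k},1/\eps,\log(1/\delta))$. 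Second, divide each estimate by $e^{-t\mu_\lambda}$ to recover $\hat f(\rho_\lambda)$ up to controlled additive error. Third, feed the recovered Fourier data into the reconstruction subroutine developed for Theorem~\ref{thm:ub-symmetric}, which under the $\eps$-heavy and $k$-sparse assumptions on $f$ outputs a distribution $g$ with $\dtv(f,g)\leq \eps$.

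The main obstacle is the numerical stability of the deconvolution step: one must verify that $e^{-t\mu_\lambda} \geq 1/n^{O(\log k)}$ for every relevant partition $\lambda$. To this end, I would invoke the classical Diaconis--Shahshahani analysis of the random transposition walk, which identifies $\mu_\lambda$ with a multiple of the ``content sum'' of the Young diagram $\lambda$. In particular, this yields $\mu_\lambda = O((n-\lambda_1)/n)$ whenever $n - \lambda_1$ is small compared to $n$, so that for $t = O(n\log n)$ and $n - \lambda_1 \leq C\log k$ one obtains $e^{-t\mu_\lambda} \geq n^{-O(C\log k)}$. This precisely matches, in terms of ``effective noise survival,'' the hypothesis placed on $\ol{p}$ in Theorem~\ref{thm:ub-symmetric}. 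A secondary point to check is that the reconstruction subroutine really only needs the low-complexity Fourier data; this should follow because a $k$-sparse $\eps$-heavy distribution on $\mathbb{S}_n$ is determined, up to $\ell_1$-distance $\eps$, by its Fourier transform on partitions with $n - \lambda_1 \leq O(\log k)$, an analog of the ``sparse functions are low-degree concentrated'' phenomenon on the Boolean hypercube.
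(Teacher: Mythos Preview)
Your proposal is correct and follows essentially the same approach as the paper: both reduce to the noise-agnostic reconstruction subroutine (the paper's Theorem~\ref{thm:algo-rec}/\ref{thm:min-sing-val}) and then bound the heat-kernel Fourier eigenvalues on partitions with $n-\lambda_1\le O(\log k)$ via the Diaconis--Shahshahani content-sum formula, obtaining $e^{-t\mu_\lambda}\ge n^{-O(\log k)}$ when $t=O(n\log n)$. The only cosmetic differences are that the paper phrases the deconvolution in terms of marginals and permutation representations (rather than irreducible Fourier coefficients directly) and bounds $\mathsf{c}(t,\mu)$ via the Poisson median rather than the closed form $e^{t(\mathsf{c}_{\trans,\mu}-1)}$; your Laplacian-eigenvalue framing is if anything cleaner.
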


Recalling that the uniform random walk on the Cayley graph of $\mathbb{S}_n$ mixes in $\Theta(n \log n)$ steps, we see that the algorithm of~\Cref{thm:ub-heat} is able to handle quite high levels of noise and still run quite efficiently (in quasi-polynomial time).

\medskip

Our third positive result, for the Cayley-Mallows model, displays an intriguing qualitative difference from Theorems~\ref{thm:ub-symmetric} and~\ref{thm:ub-heat}.  To state our result, let us define the function $\mathsf{dist} : \R^+ \times \mathbb{N} \rightarrow \R^+$ as follows:
\[
\mathsf{dist}(\theta, \ell) := \min_{j \in \{1,\dots,\ell\}} \big| e^\theta - j \big|,
\]
so $\dist(\theta,\ell)$ measures the minimum distance between $e^\theta$ and any integer in $\{1,\dots,\ell\}$. \Cref{thm:ub-Mallows} gives an algorithm which can be quite efficient for the Cayley-Mallows noise model if the noise parameter $\theta$ is such that $\dist(\theta,\log k)$ is not too small:

\begin{theorem} [Algorithm for the Cayley-Mallows model]\label{thm:ub-Mallows}
There is an algorithm with the following guarantee: Let $f$ be an unknown $\eps$-heavy distribution over $\mathbb{S}_n$ with $|\supp(f)| \leq k$.
Given $\theta>0$, the value of $\eps > 0$, a confidence parameter $\delta > 0$, and access to random samples from ${\cal M}_\theta \ast f$, the algorithm runs in time $\poly(n^{\log k},1/\eps,\log(1/\delta),\dist(\theta,\log k)^{-\sqrt{\log k}})$ and with probability $1-\delta$ outputs a distribution $g: \mathbb{S}_n \to \R$ such that $\dtv(f,g) \leq \eps.$

\ignore{
}
\end{theorem}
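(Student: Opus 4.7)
The plan is to follow the Fourier-analytic noise-inversion template that presumably underlies Theorems~\ref{thm:ub-symmetric} and~\ref{thm:ub-heat}, specialized to the Cayley--Mallows (Ewens) kernel $\mathcal{M}_\theta$.

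\textbf{Step 1 (reduction to low-level Fourier coefficients).} Since $f$ is supported on at most $k$ permutations, standard arguments show that $f$ is determined (to accuracy $\eps$ in total variation) by its matrix-valued Fourier coefficients $\widehat{f}(\lambda)$ at irreducible representations of $\mathbb{S}_n$ indexed by partitions $\lambda \vdash n$ with $n - \lambda_1 = O(\log k)$. There are only $n^{O(\log k)}$ such $\lambda$, each of dimension at most $n^{O(\log k)}$, so the object to estimate lives in dimension $n^{O(\log k)}$.

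\textbf{Step 2 (closed form for $\widehat{\mathcal{M}_\theta}$).} Because $\mathcal{M}_\theta(\pi) \propto e^{\theta \cdot \cyc(\pi)}$ is a class function, by Schur's lemma $\widehat{\mathcal{M}_\theta}(\lambda)$ is a scalar multiple of the identity on the $\lambda$-isotypic component. The classical Ewens formula gives this scalar as a product over the cells of the Young diagram,
\[
\widehat{\mathcal{M}_\theta}(\lambda) \;=\; \frac{1}{\prod_{i=0}^{n-1}(e^\theta + i)} \cdot \prod_{(i,j)\in \lambda}\bigl(e^\theta + (j-i) - 1\bigr),
\]
indexed by the contents $c(i,j) = j - i$.

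\textbf{Step 3 (deconvolution).} By the convolution theorem, $\widehat{\mathcal{M}_\theta \ast f}(\lambda) = \widehat{\mathcal{M}_\theta}(\lambda)\widehat{f}(\lambda)$. Draw enough samples to estimate $\widehat{\mathcal{M}_\theta \ast f}(\lambda)$ to high accuracy at every $\lambda$ with $n - \lambda_1 = O(\log k)$, then divide by the scalar $\widehat{\mathcal{M}_\theta}(\lambda)$. The running time and sample complexity are governed by the smallest value of $|\widehat{\mathcal{M}_\theta}(\lambda)|$ over the relevant $\lambda$'s. Every cell of $\lambda$ outside the first row and first column has $|c(i,j)| \leq n - \lambda_1 = O(\log k)$, so each potentially-vanishing factor $|e^\theta + c - 1|$ in the product is bounded below by $\mathsf{dist}(\theta, O(\log k))$. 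Counting how many such bad factors can co-occur in a single $\lambda$ yields the claimed dependence.

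\textbf{Step 4 (sparse recovery).} Given accurate approximations to $\widehat{f}(\lambda)$ at every low-level $\lambda$, reconstruct a sparse, $\eps$-heavy distribution $g$ with $\dtv(f, g) \le \eps$ by the same recovery/rounding procedure used in the proofs of Theorems~\ref{thm:ub-symmetric} and~\ref{thm:ub-heat}; this step uses only $k$-sparsity and $\eps$-heaviness and is agnostic to the noise model.

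The main obstacle is the tight bound in Step 3. A naive argument gives exponent $O(\log k)$, since a $\lambda$ with $n - \lambda_1 = O(\log k)$ can have up to $O(\log k)$ cells outside the first row, each contributing a factor as small as $\mathsf{dist}(\theta,\log k)$. Improving the exponent to $\sqrt{\log k}$ requires a structural observation about Young diagrams: the factors $e^\theta + c - 1$ that are simultaneously close to zero must come from cells whose contents $c$ all lie within distance $1$ of a single integer, i.e.\ from a narrow band of diagonals of $\lambda$. Because contents are constant on diagonals, and because the number of cells of $\lambda$ lying on any $O(1)$ consecutive diagonals, subject to the constraint that the total ``extra'' area $n - \lambda_1$ is only $O(\log k)$, is at most $O(\sqrt{\log k})$ (the worst case being a near-square block hanging below the first row), only $O(\sqrt{\log k})$ factors can be simultaneously near-vanishing; the remaining factors contribute at least a constant each. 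This shape-vs-sparsity trade-off is the technical heart of the proof and the source of the $\sqrt{\log k}$ exponent in the running time.
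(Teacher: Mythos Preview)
Your proposal is correct and follows essentially the same approach as the paper: the paper combines the marginal-recovery reduction (\Cref{thm:algo-rec}) with a singular-value lower bound for $\widehat{\mathcal{M}_\theta}$ (\Cref{lem:lower-bound-mallows}, \Cref{lem:eigen-mallows}) obtained via exactly the content-product formula and the ``at most $\sqrt{\ell}$ cells per diagonal'' argument you describe in Step~3. One cosmetic correction: the factor in Step~2 should be $e^\theta + c(u)$ with $c(u)=j-i$ (no extra $-1$), so that the near-vanishing factors correspond to $e^\theta$ near an integer in $\{1,\dots,\ell\}$, matching the definition of $\mathsf{dist}(\theta,\ell)$.
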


As alluded to earlier, as $\theta$ approaches $0$ the difficulty of learning in the ${\cal M}_\theta$ noise model increases (and indeed learning becomes impossible at $\theta = 0$); since for small $\theta$ we have $\dist(\theta,\ell) \approx \theta$, this is accounted for by the $\dist(\theta,\log k)^{-\sqrt{\log k}}$ factor in our running time bound above.  However, for larger values of $\theta$ the $\dist(\theta,\log k)^{-\sqrt{\log k}}$ dependence may strike the reader as an unnatural artifact of our analysis:  is it really hard to learn when $\theta$ is very close to $\ln 2 \approx 0.63147$, easy when $\theta$ is very close to $\ln 2.5 \approx 0.91629,$ and hard again when $\theta$ is very close to $\ln 3 \approx 1.09861$? Perhaps surprisingly, the answer is yes:  it turns out that the $\dist(\cdot,\cdot)$ parameter captures a \emph{fundamental barrier} to learning in the Cayley-Mallows model.  We establish this by proving the following lower bound for the Cayley-Mallows model, which shows that a dependence on $\dist$ as in~\Cref{thm:ub-Mallows} is in fact inherent in the problem:

\begin{theorem}~\label{thm:lower-bound-intro}
Given $j \in \N$, there are infinitely many values of $k$ and $m =m(k)\ignore{ \approx {\frac {\log k}{\log \log k}}}$ such that the following holds:  Let $\theta>0$ be such that $|e^\theta-j|\leq \eta \leq 1/2$, and let $A$ be any algorithm which, when given access to random samples from ${\cal M}_\theta \ast f$ where $f$ is a distribution over $\mathbb{S}_{{m}}$ with $|\supp(f)| \leq k$,  with probability at least 0.51 outputs a distribution $h$ over $\mathbb{S}_{{m}}$ that has $\dtv(f,h) \leq 0.99$.  Then $A$ must use $\eta^{-\Omega\left(\sqrt{\frac {\log k}{\log \log k}}\right)}$ samples.
\end{theorem}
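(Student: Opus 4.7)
The plan is to reduce the lower bound to a two-point hypothesis testing problem and to use the representation theory of $\mathbb{S}_m$ to exhibit two $k$-sparse distributions $f_+, f_-$ whose convolutions with $\mathcal{M}_\theta$ are nearly indistinguishable. Reparameterizing $\alpha := e^\theta$, the Cayley--Mallows weight becomes $\alpha^{c(\pi)}/\alpha^{(m)}$, where $c(\pi)$ counts cycles of $\pi$ and $\alpha^{(m)} := \alpha(\alpha+1) \cdots (\alpha+m-1)$; the assumption $|e^\theta - j| \leq \eta$ becomes $|\alpha - j| \leq \eta$. Since $\mathcal{M}_\theta$ is a class function, the Peter--Weyl decomposition $L^2(\mathbb{S}_m) = \bigoplus_{\mu \vdash m} V_\mu^{\oplus \dim V_\mu}$ diagonalizes the convolution operator $g \mapsto \mathcal{M}_\theta \ast g$, and by the classical Jucys--Murphy / content formula its eigenvalue on the $V_\mu$-isotype is
\[
\widehat{\mathcal{M}}_\theta(\mu) \;=\; \frac{\prod_{(a,b) \in \mu}(\alpha + b - a)}{\alpha^{(m)}}.
\]

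The next step is to pick the ``worst-case'' irreducible $\mu$ that makes this eigenvalue as small as possible. Each cell of $\mu$ with content $b-a = -j$ contributes a factor $|\alpha - j| \leq \eta$, and a partition containing $s$ such cells must contain the rectangle $s^{j+s}$ in its Young diagram. Taking $\mu := r^{j+r}$ (exactly this rectangle) gives $|\widehat{\mathcal{M}}_\theta(\mu)| \leq C_{m,j}\, \eta^r$. I would then set $r := c\sqrt{\log k/\log\log k}$ for a small constant $c$ and $m := r(j+r) = \Theta(r^2)$; the hook-length formula gives $\log\dim V_\mu \asymp r^2 \log r \asymp \log k$, so $\dim V_\mu \asymp k$, and these relationships are what force the exponent $\sqrt{\log k/\log\log k}$ in the final bound.

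The technical heart of the argument is to construct two $k$-sparse probability distributions $f_+, f_-$ on $\mathbb{S}_m$ with $\dtv(f_+, f_-) \geq 2 \cdot 0.99$ whose convolutions $P_\pm := \mathcal{M}_\theta \ast f_\pm$ satisfy $\chi^2(P_+, P_-) = O(\eta^{2r})$. Trying to force $f_+ - f_-$ to lie \emph{exactly} in the $\mu$-isotype fails by dimension counting, since $m!$ is much larger than the $(\dim V_\mu)^2 \asymp k^2$-dimensional isotype in our regime. The naive candidate of taking $f_\pm$ to be uniform distributions on two cosets of a subgroup $H$ of order $k$ is also too crude: while the trivial-rep component of $f_+ - f_-$ cancels automatically, by Young's rule a Young subgroup induces many other irreducibles into $\mathrm{Ind}_H^{\mathbb{S}_m}(\mathrm{triv})$, most of which are ``good'' (their $\widehat{\mathcal{M}}_\theta$-eigenvalue is not small). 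One therefore needs a more delicate construction, for instance a signed combination $f_+ - f_- = \sum_i c_i \mathbf{1}_{g_iH}$ of many coset indicators whose Fourier transform is annihilated on all good isotypes, or a randomized construction in which $f_\pm$ are drawn from a prior so that the expected second moment $\mathbb{E}[\chi^2(P_+, P_-)]$ is $O(\eta^{2r})$ and then a standard Paninski-style argument applies. Converting the Fourier bound to a $\chi^2$-bound also requires care, exploiting that every $P_\pm$ has a polynomially bounded Radon--Nikodym derivative with respect to the uniform measure on $\mathbb{S}_m$, rather than the lossy inequality $\|\cdot\|_1 \leq \sqrt{m!}\,\|\cdot\|_2$.

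With such $f_+, f_-$ in hand, the lower bound follows from the standard two-point Le Cam argument: any learner outputting $h$ with $\dtv(f, h) \leq 0.99$ with probability at least $0.51$ must distinguish $f_+$ from $f_-$, so by $\chi^2$-tensorization $\chi^2(P_+^{\otimes \ell}, P_-^{\otimes \ell}) = (1+\chi^2(P_+,P_-))^\ell - 1$ and Pinsker, the sample complexity is $\ell = \Omega(1/\chi^2(P_+,P_-)) = \eta^{-\Omega(r)} = \eta^{-\Omega(\sqrt{\log k/\log\log k})}$. The main obstacle is the third step above: finding a combinatorial (or randomized) construction of $f_+, f_-$ whose Fourier difference is concentrated on the bad $\mu$-isotype while maintaining $k$-sparsity, together with the quantitative $\chi^2$-conversion that does not lose the hard-won $\eta^r$ savings.
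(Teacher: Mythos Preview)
Your high-level plan (two-point Le~Cam, the content/Jucys--Murphy eigenvalue formula, and the rectangular partition $\mu = r^{j+r}$) matches the paper exactly. The gap is in the ``technical heart'' step: you talk yourself out of the simple construction that actually works.

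The key observation you are missing is that the theorem only asks for \emph{some} infinite sequence of $(k,m)$ pairs, and the paper simply takes $k = m!$. With that choice, \emph{every} distribution on $\mathbb{S}_m$ is trivially $k$-sparse, so there is no sparsity constraint to worry about. One then sets $f_+ - f_-$ to be a scalar multiple of the irreducible character $\chi_\mu$ itself: since $\chi_\mu$ is integer-valued on $\mathbb{S}_m$, one lets $f_+$ be its (normalized) positive part and $f_-$ its (normalized) negative part. Orthogonality to the trivial character gives $\sum_\sigma \chi_\mu(\sigma) = 0$, so the two normalizing constants agree, $f_\pm$ are genuine probability distributions with disjoint supports, and $g := f_+ - f_- = \chi_\mu / C$ lies \emph{exactly} in the $\mu$-isotype. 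Your sentence ``trying to force $f_+ - f_-$ to lie exactly in the $\mu$-isotype fails by dimension counting'' is therefore not an obstruction here; it only becomes one under your self-imposed choice $k \asymp \dim V_\mu$, which the theorem does not require. Since $k = m!$ gives $\log k \asymp m \log m$ and $r \asymp \sqrt{m}$, you still get $r = \Theta\bigl(\sqrt{\log k/\log\log k}\bigr)$, so nothing is lost in the exponent.

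A second simplification: once $g$ is proportional to $\chi_\mu$, convolution by the class function $\mathcal{M}_\theta$ acts as the scalar $\widehat{\mathcal{M}}_\theta(\mu)$, so
\[
\|\mathcal{M}_\theta \ast f_+ - \mathcal{M}_\theta \ast f_-\|_1 \;=\; |\widehat{\mathcal{M}}_\theta(\mu)| \cdot \|g\|_1 \;=\; 2\,|\widehat{\mathcal{M}}_\theta(\mu)| \;\leq\; 2\eta^r
\]
directly in $\ell_1$. There is no need for $\chi^2$-divergence, Radon--Nikodym bounds, coset constructions, or randomized priors; the Le~Cam bound then follows immediately from this TV estimate.
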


\ignore{
\begin{theorem} [Lower bound for the Cayley-Mallows model]\label{thm:lb-Mallows}
\red{revisit this}\rnote{We would like this lower bound theorem to align as closely as possible with the positive result.  In particular note that the positive result requires $f$ to have minimum weight at least $\eps$.  Can we align this statement in that way?} Let $\theta \in (0,1)$ such that $|\frac{1}{\theta}-j| \le \delta$.
Let $A$ be any algorithm with the following property:  Given access to random samples from $f \ast {\cal M}_\theta$, where $f$ is a distribution over $\mathbb{S}_n$ supported on $k$ elements, with probability at least $1/2$ $A$ outputs a function $g: \mathbb{S}_n \to \R$ with $\|f - g\|_1 \leq 1/2.$  Then $A$ must use at least \red{BLAH} many draws from $f \ast {\cal M}_\theta$.
\end{theorem}
}

\ignore{
\noindent {\bf Learning from partial rankings:  Positive and negative results.} As our last positive result, we give an algorithm for learning in the the partial information model (D).

\begin{theorem} [Algorithm for learning from views of width $w$]
\label{thm:ub-partial}
There is an algorithm with the following guarantee: Let
$f$ be an unknown $\eps$-heavy distribution over $\mathbb{S}_n$ with $|\supp(f)| \leq k$. Given the value of $\eps>0$, a confidence parameter $\delta>0$, and access to random views of $f$ of width \red{$2 \log k$} as described in (D) above, the algorithm runs in time $\red{YYY}$ and with probability $1-\delta$ outputs a function $g: \mathbb{S}_n \to \R$ such that $\|f - g\|_1 \leq \eps.$

\end{theorem}

Our final result is the following lower bound, which shows that~\Cref{thm:ub-partial} is essentially best possible in terms of the ``width'' parameter that it can handle:

\begin{theorem} \label{thm:lb-partial}
Let $A$ be any algorithm which is given access to random views of $f$ with width $\red{blah}$, where $f$ is a distribution over $\mathbb{S}_n$ supported on $k$ elements.  Then \red{$A$ does badly} no matter how many samples it uses.
\end{theorem}
}

\subsection{Relation to prior work} \label{sec:priorwork}
Starting with the work of 
Mallows~\cite{mallows1957non}, there is a rich line of work in machine learning and statistics on probabilistic models of ranking data, see e.g.~\cite{marden2014analyzing, lebanon2002cranking, busse2007cluster, mandhani2009tractable, meilua2010dirichlet, lu2011learning}. In order to describe the prior works which are most relevant to our paper, it will be useful for us to define the \emph{Kendall-Mallows} model (referred to in the literature just as the Mallows model) in slightly more detail than we gave earlier. Introduced by Mallows~\cite{mallows1957non}, the  {Kendall-Mallows} model is quite similar to the Cayley-Mallows model that we consider --- it is specified by a parametric family of distributions $\{\calM_{\tau, \theta}\}_{\theta \in \mathbb{R}^+}$ and a central permutation $\sigma \in \mathbb{S}_n$, and a draw from the model is generated as follows: sample $\bpi \sim \calM_{\tau, \theta}$ and output $\bpi \cdot \sigma$. The distribution $\calM_{\tau, \theta}$ assigns probability weight $e^{-\theta K(\pi, e)}/Z_K(\theta)$ to the permutation $\pi$ where $Z_K(\theta) = \sum_{\pi \in \mathbb{S}_n} e^{-\theta K(\pi,e)}$ is the normalizing constant and $K(\cdot, \cdot)$ 
is the Kendall $\tau$-distance (defined next): 
\begin{definition}~\label{def:KT}
The \emph{Kendall $\tau$-distance} $K: \mathbb{S}_n \times \mathbb{S}_n \rightarrow \mathbb{R}^{\ge 0}$ is a distance metric on $\mathbb{S}_n$ defined as 
\[
K(\pi, \pi') = \{(i,j): i <j \textrm{ and } ((\pi(i) < \pi(j)) \oplus (\pi'(i) < \pi'(j)) =1) \}
\]
In other words, $K(\pi,\pi')$ is the number of inversions between $\pi$ and $\pi'$. Like the Cayley distance, the Kendall $\tau$-distance is also a right-invariant metric. Another equivalent way to define $K(\cdot, \cdot)$ is to consider the undirected graph on $\mathbb{S}_n$ where vertices $\pi_1$ $\pi_2$ share an edge if and only $\pi_1 = \tau \cdot \pi_2$ where $\tau$ is an \emph{adjacent transposition} -- in other words, $\tau = (i, i+1)$ for some $1 \le   i < n$. Then $K(\cdot, \cdot)$ is defined as the shortest path metric on this graph. From this perspective, the difference between the Kendall $\tau$-distance and the Cayley distance is that the former only allows adjacent transpositions while the latter allows all transpositions. 
\end{definition}

\medskip
\noindent {\bf Learning mixture models:} As mentioned earlier, probabilistic models of ranking data have been studied extensively in probability, statistics and machine learning. Models that have been considered in this context include the Kendall-Mallows model~\cite{mallows1957non, lu2011learning, meilua2007consensus, gladkich2018}, the Cayley-Mallows model (and generalizations of it)~\cite{fligner1986distance, murphy2003mixtures, mukherjee2016estimation, DH92, diaconis1988group, Ewens72}  and the heat kernel random walk model~\cite{kondor2002diffusion, KB10, jiao2018kendall}, among others. In contrast, within theoretical computer science interest in probabilistic models of ranking data is somewhat more recent, and the best-studied model in this community is the  Kendall-Mallows model. Braverman and Mossel~\cite{braverman2008noisy} initiated this study and (among other results)  gave an efficient algorithm to recover a single Kendall-Mallows model from random samples. The question of learning mixtures of $k$ Kendall-Mallows models was raised soon thereafter, and 
and Awasthi \emph{et~al.}~\cite{awasthi2014learning} gave an efficient algorithm for the case $k=2$. We note two key distinctions between our work and that of \cite{awasthi2014learning}: (i) our results apply to the Cayley-Mallows model rather than the Kendall-Mallows model, and (ii) the work of \cite{awasthi2014learning} allows for the two components in the mixture to have two different noise parameters $\theta_1$ and $\theta_2$ whereas our mixture models allow for only one noise parameter $\theta$ across all the components.  

Very recently, Liu and Moitra~\cite{liu2018} extended the result of  \cite{awasthi2014learning} to any constant $k$. In particular, the running time of the \cite{liu2018} algorithm scales as $n^{\poly(k)}$.  It is interesting to contrast our results with those of \cite{liu2018}. Besides the obvious difference in the models treated (namely Kendall-Mallows in \cite{liu2018} versus Cayley-Mallows in this paper), another significant difference is that our running time scales only quasipolynomially in $k$ versus exponentially in $k$ for \cite{liu2018}.  (In fact, \cite{liu2018} shows that an exponential dependence on $k$ is necessary for the problem they consider.) Another difference is that their algorithm allows each mixture component to have a different noise parameter $\theta_i$ whereas our result requires the same noise parameter $\theta$ across the mixture components. We observe that one curious feature of the algorithm of \cite{liu2018} is the following: When all the noise parameters $\{\theta_i\}_{1\le i \le k}$ are \emph{well-separated} (meaning that for all $i \not = j$, $|\theta_i - \theta_j| \ge \gamma$), then the running time of \cite{liu2018} can be improved to $\poly(n) \cdot 2^{\poly(k)}$. This suggests that the case when all $\theta_i$ are the same might be the hardest for the Liu-Moitra~\cite{liu2018} algorithm.  

Finally, we note that while the analysis in this paper does not immediately extend to the Kendall-Mallows model (see Section~\ref{sec:discussion} for more details), we point out that there is a sense in which the Kendall-Mallows and Cayley-Mallows models are fundamentally incomparable. This is because, while the results of \cite{liu2018} show that mixtures of Kendall-Mallows models are identifiable whenever each $\theta_i \not =1$, Theorem~\ref{thm:lower-bound-intro} shows that mixtures of Cayley-Mallows models are not identifiable at various larger values of $\theta$ such as $\ln 2, \ln 3, \dots$, even when all of the noise parameters are the same value $\theta$ which is provided to the algorithm.

\subsection{Our techniques} \label{sec:techniques}

\ignore{
\noindent {\bf Learning from noisy rankings.}}
A key notion for our algorithmic approach is that of the \emph{marginal} of a distribution $f$ over $\mathbb{S}_n$:

\begin{definition} \label{def:marginal}
Fix $f: \mathbb{S}_n \to [0,1]$ to be some distribution over $\mathbb{S}_n$. Let $t \in \{1,\dots,n\}$, let $\bar{i}=(i_1,\dots,i_t)$ be a vector of $t$ distinct elements of $\{1,\dots,n\}$ and likewise $\bar{j}=(j_1,\dots,j_t)$. We say the  \emph{$(\bar{i},\bar{j})$-marginal of $f$} is the probability
\[
\Prx_{\bsigma \sim f}[\bsigma(i_1)=j_1 \text{~and~} \cdots \text{~and~} \bsigma(i_t)=j_t]
\]
that for all $\ell=1,\dots,t$, the $i_\ell$-th element of a random $\bsigma$ drawn from $f$ is $j_\ell$.  When $\bar{i}$ and $\bar{j}$ are of length $t$ we refer to such a probability as a \emph{$t$-way marginal of $f$}.
\end{definition}

The first key ingredient of our approach for learning from noisy rankings is a reduction from the problem of learning $f$ (the unknown distribution supported on $k$ rankings $\sigma_1,\dots,\sigma_k$) given access to samples from ${\cal K} \ast f$, to the problem of estimating $t$-way marginals (for a not-too-large value of $t$).  More precisely, in~\Cref{sec:alg-recovery} we give an algorithm which, given the ability to efficiently estimate $t$-way marginals of $f$, efficiently computes a high-accuracy approximation for an unknown $\eps$-heavy distribution $f$ with support size at most $k$ (see~\Cref{thm:algo-rec}).  This algorithm builds on ideas in the population recovery literature, suitably extended to the domain $\mathbb{S}_n$ rather than $\{0,1\}^n$. 

With the above-described reduction in hand, in order to obtain a positive result for a specific noise model $\cal K$ the remaining task is to develop an algorithm $A_\marg$ which, given access to noisy samples from ${\cal K} \ast f$, can reliably estimate the required marginals.  In~\Cref{sec:marginal} we show that if the noise distribution ${\cal K}$ (a distribution over $\mathbb{S}_n$) is efficiently samplable, then given samples from ${\cal K} \ast f$, the time required to estimate the required marginals essentially depends on the minimum, over a certain set of matrices arising from the Fourier transform (over the symmetric group $\mathbb{S}_n$) of the noise distribution, of the minimum singular value of the matrix. (See~\Cref{thm:min-sing-val} for a detailed statement.)  At this point, we have reduced the algorithmic problem of obtaining a learning algorithm for a particular noise model to the analytic task of lower bounding the relevant singular values.  We carry out the required analyses on a noise-model-by-noise-model basis in Sections~\ref{sec:symmetric},~\ref{sec:heat}, and~\ref{sec:generalized}. These analyses employ ideas and results from the representation theory of the symmetric group and its connections to enumerative combinatorics; we give a brief overview of the necessary background in~\Cref{sec:rep}.

To establish our lower bound for the Cayley-Mallows model,~\Cref{thm:lower-bound-intro}, we exhibit two distributions $f_1$ and $f_2$ over the symmetric group such that the distributions of noisy rankings ${\cal M}_\theta \ast f_1$ and ${\cal M}_\theta \ast f_2$ have very small statistical distance from each other. Not surprisingly, the inspiration for this construction also comes from the representation theory of the symmetric group; more precisely, the two above-mentioned distributions are obtained from the character (over the symmetric group) corresponding to a particular carefully chosen partition of $[n]$.  A crucial ingredient in the proof is the fact that characters of the symmetric group are rational-valued functions, and hence any character can be split into a positive part and a negative part; details are given in~\Cref{sec:lowerbound}.

\subsection{Discussion and future work} \label{sec:discussion}
In this paper we have considered three particular noise models --- symmetric noise, heat kernel noise, and Cayley-Mallows noise --- and given efficient algorithms for these noise models.  Looking beyond these specific noise models, though, our approach provides a general framework for obtaining algorithms for learning mixtures of noisy rankings.  Indeed, for essentially any efficiently samplable noise distribution ${\cal K}$, given access to samples from ${\cal K} \ast f$ our approach reduces the algorithmic problem of learning $f$ to the analytic problem of lower bounding the minimum singular values of matrices arising from the Fourier transform of ${\cal K}$ (see~\Cref{thm:min-sing-val}).  We believe that this technique may be useful in a broader range of contexts, e.g.~to obtain results analogous to ours for the original Kendall-Mallows model or for other noise models.  

As is made clear in Sections~\ref{sec:symmetric},~\ref{sec:heat}, and~\ref{sec:generalized}, the representation-theoretic analysis that we require for our noise models is  facilitated by the fact that each of the noise distributions considered in those sections is a  \emph{class function} (in other words, the value of the distribution on a given input permutation depends only on the cycle structure of the permutation).  Extending the kinds of analyses that we  perform to other noise models which are not class functions is a technical challenge that we leave for future work.


\section{Algorithmic recovery of sparse functions} \label{sec:alg-recovery}

The main result of this section is the reduction alluded to in~\Cref{sec:techniques}.  In more detail, we give an algorithm which, given the ability to efficiently estimate $t$-way marginals, efficiently computes a high-accuracy approximation for an unknown $\eps$-heavy distribution $f$ with support size at most $k$:

\begin{theorem}~\label{thm:algo-rec}
Let $f$ be an unknown $\eps$-heavy distribution over $\mathbb{S}_n$ with $|\supp(f)| \leq k$.
Suppose there is an algorithm $A_\marg$ with the following property:  given as input a value $\delta>0$ and two vectors $\bar{i}=(i_1,\dots,i_t)$ and $\bar{j}=(j_1,\dots,j_t)$ each composed of $t$ distinct elements of $\{1,\dots,n\},$ algorithm $A_\marg$ runs in time $T(\delta,t,k,n)$ and outputs an additively $\pm \delta$-accurate estimate of the $(\bar{i},\bar{j})$-marginal of $f$ (recall~\Cref{def:marginal}). Then there is an algorithm $A_\learn$ with the following property:  given the value of $\epsilon$, algorithm $A_\learn$ runs in time $\poly(n/\eps,n^{\log k})\cdot T({\frac {\eps}{2k^{O(\log k)}}}, 2\log k, k^2, n)$\ignore{
\rnote{This was ``$T\big(\frac{\epsilon}{k^{4 \log k}}, 2 \log k,  2k^2, n \big)$'' but I'm not sure why}} and returns a function $g: \mathbb{S}_n \rightarrow \mathbb{R}^+$ such that $\Vert f- g \Vert_1 \le \epsilon$. 
\end{theorem}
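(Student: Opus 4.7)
My plan is to adapt the \emph{population-recovery} paradigm, originally developed for distributions over $\{0,1\}^n$, to the symmetric group $\mathbb{S}_n$. I would design $A_\learn$ to proceed in two stages: first, construct a candidate list $\mathcal{C} \subseteq \mathbb{S}_n$ of size at most $n^{O(\log k)}$ that provably contains $\supp(f)$; second, fit weights $g(\sigma)$ for $\sigma \in \mathcal{C}$ by solving a linear program whose constraints are supplied by calls to $A_\marg$.

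For the first stage, I would iteratively grow ``heavy'' partial assignments. Starting from the empty partial assignment, at each step $s = 1, 2, \ldots, 2\log k$ I extend every partial assignment $\pi$ in the current list by each valid (position, value) pair, and keep the extension $\pi'$ only when a call to $A_\marg$ certifies that its marginal is at least roughly $\eps/2$. Because every sub-assignment of a support element $\sigma_m$ has marginal at least $f(\sigma_m) \ge \eps$, every $\sigma_m \in \supp(f)$ contributes surviving sub-assignments; at the same time, a counting argument bounds the list size at each step by $k \cdot n^{O(s)}$. To lift the resulting heavy size-$(2\log k)$ partial assignments to full candidate permutations I use the structural observation that for each $\sigma_m \in \supp(f)$ there is an ``identifier'' size-$(2\log k)$ partial assignment $\pi_m$ with $\{\sigma \in \supp(f) : \sigma \supseteq \pi_m\} = \{\sigma_m\}$; having detected such a $\pi_m$ from the heavy list, I can recover each remaining coordinate of $\sigma_m$ by querying $A_\marg$ on ``swapped'' size-$(2\log k)$ assignments obtained from $\pi_m$ by replacing one fixed coordinate with each candidate new one, and reading off which swap yields the expected marginal value $f(\sigma_m)$.

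For the second stage, given $\mathcal{C}$ I set up an LP with nonnegative variables $\{g(\sigma) : \sigma \in \mathcal{C}\}$, subject to $\sum_\sigma g(\sigma) = 1$ and the marginal-matching constraints $\big|\sum_{\sigma \supseteq (\bar i, \bar j)} g(\sigma) - A_\marg(\bar i, \bar j)\big| \le 2\delta$ for each queried pair $(\bar i, \bar j)$. Since the restriction of $f$ to $\mathcal{C}$ is feasible, the LP has a solution, and a stability argument that uses the $\eps$-heavy, $k$-sparse structure of $f$ shows that any feasible sparse $g$ satisfies $\|g - f\|_1 \le \eps$ provided $\delta = \eps/k^{O(\log k)}$. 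The main obstacle I anticipate is the lifting step: guaranteeing that identifier partial assignments of size $2\log k$ actually exist for every support element is nontrivial, since distinct permutations can differ in as few as two positions, and the greedy extension of partial assignments shrinks the ``confusion set'' only slowly in the worst case. Handling this via careful threshold choices and additional swap queries is precisely what forces the fine precision $\delta = \eps/k^{O(\log k)}$ in the guarantee, and the factor $k^2$ appearing as the support-size parameter in the call to $A_\marg$ reflects that intermediate convolved distributions encountered during lifting can have support up to $k^2$ rather than $k$.
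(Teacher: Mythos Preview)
Your two-stage outline---recover a candidate support, then fit weights via an LP constrained by low-order marginals---matches the paper's architecture, and your LP stage is essentially the paper's Claim~\ref{clm:known-support}. But the ``stability argument'' you invoke for the LP is not a consequence of $\eps$-heaviness and $k$-sparsity alone: the paper proves a separate structural lemma (Claim~\ref{clm:WY-n-ary}) that any function on $[n]^\ell$ with $\ell_1$-norm $1$ and support size $\le k$ has correlation at least $k^{-O(\log k)}$ with some $(\log k)$-junta. Applied to the difference of two LP-feasible solutions, this is what forces them close and is the actual source of the precision $\delta = \eps/k^{O(\log k)}$. You gesture at this but do not supply it.

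The more serious gap is your support-recovery stage. Identifier partial assignments of size $\log k$ do always exist (a simple splitting argument), so existence is not the obstacle you think it is; the real problem is that your procedure cannot \emph{recognize} which heavy partial assignments are identifiers, and the swap-query lift does not work as described: once you drop a coordinate from an identifier $\pi_m$, the resulting $(2\log k)$-assignment may become consistent with other support elements, so its marginal need not equal $f(\sigma_m)$ even for the correct new value and may be positive for incorrect ones. The paper avoids lifting altogether with a different and cleaner strategy. It iterates over positions $\ell = 1, \dots, n$, maintaining at each stage a set $S'_{(\ell)}$ of size $\le k$ containing $\supp(f_\ell)$, where $f_\ell$ is the projection of $f$ onto the first $\ell$ coordinates. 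To go from $\ell-1$ to $\ell$ it uses $1$-way marginals to determine the $\le k$ possible values at position $\ell$, appends each to each element of $S'_{(\ell-1)}$ to form a candidate set of size $\le k^2$ (this is the true source of the $k^2$ parameter, not ``convolved distributions''), runs the LP of Claim~\ref{clm:known-support} on that candidate set to estimate weights, and prunes elements with estimated weight below $\eps/4$ back down to size $\le k$. The key point you are missing is that the LP, powered by Claim~\ref{clm:WY-n-ary}, is invoked \emph{inside} the support-recovery loop at every one of the $n$ stages, not only once at the end; this is what keeps the candidate set small throughout without ever needing to lift short identifiers to full permutations.
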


Looking ahead, given~\Cref{thm:algo-rec}, in order to obtain a positive result for a specific noise model $\cal K$ the remaining task is to develop an algorithm $A_\marg$ which, given access to noisy samples from $ {\cal K} \ast f$, can reliably estimate the required marginals.  The algorithm is given in~\Cref{sec:marginal} and the detailed analyses establishing its efficiency for each of the noise models (by bounding minimum singular values of certain matrices arising from each specific noise distribution) is given in Sections~\ref{sec:symmetric},~\ref{sec:heat}, and~\ref{sec:generalized}.

\subsection{A useful structural result} \label{sec:structural}

The following structural result on functions from $\mathbb{S}_n$ to $\R$ with small support will be useful for us:

\begin{claim}
[Small-support functions are correlated with juntas]
\label{clm:WY-n-ary}
Fix $1 \leq \ell \leq n$ and let $g: [n]^\ell \rightarrow \mathbb{R}$ be such that $\Vert g \Vert_1=1$ and $|\supp(g)| \leq k$. There is a subset $U\subseteq [n]$ and a list of values $\alpha_1, \ldots, \alpha_{|U|} \in [n]$ such that $|U| \le \log k$ and 
\begin{equation} \label{eq:U}
\left|
\sum_{x \in [n]^\ell} g(x) \cdot \Ind[x_i = \alpha_i \text{~for all~}i \in U]
\right|
\ge k^{-O(\log k)}.
\end{equation}
\end{claim}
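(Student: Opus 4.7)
The plan is to proceed by induction on the support size $k$, establishing the claim with an explicit lower bound of the form $k^{-C\log k}$ for a suitable absolute constant $C$. The base case $k=1$ is immediate: the support is a single point $x_0$ with $|g(x_0)|=1$, and taking $U=\emptyset$ already gives $|\sum_x g(x)|=1$.

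For the inductive step with $k\ge 2$, I split into two cases based on the total signed sum $\sigma=\sum_x g(x)$. \emph{Case A:} if $|\sigma|\ge k^{-C\log k}$, taking $U=\emptyset$ suffices. \emph{Case B:} if $|\sigma|<k^{-C\log k}$, the positive and negative parts of $g$ nearly cancel, and I aim to find a coordinate/value pair $(i^*,\alpha^*)\in[\ell]\times[n]$ so that restricting $g$ to the slab $\{x:x_{i^*}=\alpha^*\}$ produces a function with support size at most $k/2$ and $\ell_1$ mass at least $1/(2k)$. Writing $S_{i,\alpha}=\{x\in\supp(g):x_i=\alpha\}$ and $w_{i,\alpha}=\sum_{x\in S_{i,\alpha}}|g(x)|$, such a pair lets me normalize the slab restriction to unit $\ell_1$ norm, apply the inductive hypothesis to obtain a further restriction $U'\subseteq[\ell]\setminus\{i^*\}$ of size at most $\log(k/2)=\log k-1$ achieving normalized signed sum at least $(k/2)^{-C\log(k/2)}$, and finally output $U=U'\cup\{i^*\}$ with signed sum at least $(1/(2k))\cdot(k/2)^{-C\log(k/2)}\ge k^{-C\log k}$ by tuning $C$.

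The existence of $(i^*,\alpha^*)$ is established by the following dichotomy. For each coordinate $i$, at most one value $\alpha_i^*$ can satisfy $|S_{i,\alpha_i^*}|>k/2$; call it the ``majority'' value. All other values of $\alpha$ automatically meet the support condition $|S_{i,\alpha}|\le k/2$, so one needs only locate one also satisfying $w_{i,\alpha}\ge 1/(2k)$. If no such ``minority'' pair exists, then on every coordinate the total minority $\ell_1$ mass is at most $k\cdot 1/(2k)=1/2$, so the majority mass $w_{i,\alpha_i^*}\ge 1/2$; and by sharpening the threshold in the minority clause from $1/(2k)$ down to $k^{-C'\log k}$ for a suitable $C'<C$, one can push this to $w_{i,\alpha_i^*}\ge 1-k^{-\Omega(\log k)}$ on every informative coordinate. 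A concentration argument then shows the $\ell_1$ mass of $g$ is almost entirely located at the ``central point'' $x^*=(\alpha_i^*)_{i\in[\ell]}$, giving $|g(x^*)|\ge 1-k^{-\Omega(\log k)}$ and hence $|\sigma|\ge 1-2k^{-\Omega(\log k)}$, contradicting Case B.

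The main obstacle is the concentration step: the naive union bound over coordinates is sensitive to $\ell$, which is not bounded in terms of $k$. The fix is to collapse coordinates inducing the same partition of $\supp(g)$ into equivalence classes (of which there are only a function of $k$), and bound the total deviation $\sum_{x\ne x^*}|g(x)|$ by summing minority contributions only over distinct informative partitions, while using Hamming distance $d(x,x^*)\ge 1$ to relate per-coordinate minority mass back to $1-|g(x^*)|$. Making this quantitative enough to produce a $k^{-\Omega(\log k)}$ contradiction with Case B is the delicate part of the argument; everything else is a clean recursion.
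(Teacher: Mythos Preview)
Your inductive strategy is sound in outline and is a genuinely different route from the paper's, but there is a real gap in the step where you ``sharpen the threshold.''  With minority threshold $\delta=k^{-C'\log k}$, the inductive step produces a signed sum of at least $\delta\cdot(k/2)^{-C\log(k/2)}$, and for this to exceed $k^{-C\log k}$ you need $\delta\ge 2^{-C(2\log k-1)}=\Theta(k^{-2C})$.  That forces $C'(\log k)^2\le C(2\log k-1)$, i.e.\ $C'=O(C/\log k)$, so no fixed $C'>0$ works for all $k$: sharpening the threshold as you suggest breaks the induction, not the concentration.  The resolution is both simpler and avoids the $\ell$-dependence you flag as the main obstacle: keep $\delta=\Theta(1/k)$ (say $\delta=1/(4k)$), and in the ``no minority pair'' case argue \emph{per support element} rather than per coordinate.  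If no minority pair with $w\ge\delta$ exists then every coordinate has a strict majority value $\alpha_i^*$; for any support element $x$ with $x_i\ne\alpha_i^*$ at even one $i$, the bucket $S_{i,x_i}$ is a minority bucket, so $|g(x)|\le w_{i,x_i}<\delta$.  Hence at most one support element can have $|g(\cdot)|\ge\delta$, the remaining $\le k-1$ contribute total mass $<(k-1)\delta<1/4$, and $|\sigma|>1/2$, contradicting Case~B.  No equivalence classes, no dependence on~$\ell$.

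The paper's argument is organized differently.  It first passes to at most $k$ coordinates that distinguish all support elements (this is where its $\ell$-dependence is removed), and then runs an \emph{iterative} process rather than an induction.  The key device absent from your proposal is its restriction rule: at each step it picks the non-majority value $\alpha$ that \emph{maximizes the ratio} $\mathsf{FracWt}(j,\alpha)/\mathsf{Frac}(j,\alpha)$.  Since the non-majority fractional weights sum to at least $1/(10k')$ while the non-majority fractions sum to at most $1$, this ratio is always $\ge 1/(10k')$, so the surviving weight after $T\le\log k$ steps is at least $(10k')^{-T}\prod_t\mathsf{Frac}(j_t,\alpha_t)\ge (10k)^{-\log k}\cdot(1/k)$.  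Tracking the weight-to-fraction ratio in this way sidesteps the threshold tension entirely; your approach instead needs the per-element concentration argument above to make a pure weight threshold work.
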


\Cref{clm:WY-n-ary} is reminiscent of analogous structural results for functions over $\{0,1\}^\ell$ which are implicit in the work of~\cite{WY12} (specifically, Theorem~1.5 of that work), and indeed~\Cref{clm:WY-n-ary} can be proved by following the techniques of~\cite{WY12}.  Michael Saks~\cite{Saks:18comm} has communicated to us an alternative, and arguably simpler, argument for the relevant structural result over $\zo^\ell$; here we follow that alternative argument (extending it in the essentially obvious way to the domain $[n]^\ell$ rather than $\{0,1\}^\ell$).

\begin{proof}
 Let the support of $g$ be $S \subseteq [n]^\ell$. Note that since $|S| \le k$, there must exist some set of $k' := \min\{k,\ell\}$ coordinates such that any two elements of $S$ differ in at least one of those coordinates. Without loss of generality, we assume that this set is the first $k'$ coordinates $\{1,\dots,k'\}.$

We  prove~\Cref{clm:WY-n-ary} by analyzing an iterative process that iterates over the coordinates $1,\dots,k'$.  At the beginning of the process, we initialize a set $\coord_{\mathsf{live}}$ of ``live coordinates'' to be $[k']$, initialize a set $\constr$ of constraints to be initially empty, and initialize a set  $S_{\mathsf{live}}$ of ``live support elements'' to be the entire support $S$ of $g$.
We will see that the iterative process maintains the following invariants:

\begin{itemize}
\item [(I1)] The coordinates in $\coord_{\mathsf{live}}$ are sufficient to distinguish between the elements in ${S}_{\mathsf{live}}$, i.e.~any two distinct strings in ${S}_{\mathsf{live}}$ have distinct projections onto the coordinates in $\coord_{\mathsf{live}}$;
\item [(I2)] The only elements of $S$ that satisfy all the constraints in $\constr$ are the elements of ${S}_{\mathsf{live}}$. 
\end{itemize} 

Before presenting the iterative process we need to define some pertinent quantities.
For each coordinate $j \in \coord_{\mathsf{live}}$ and each index $\alpha \in [n]$, we define
\[
\mathsf{Wt}(j, \alpha) := \sum_{x \in {S}_{\mathsf{live}}: x_j = \alpha} |g(x)|,
\]
the \emph{weight} under $g$ of the live support elements $x$ that have $x_j=\alpha$, and we define
\[
\mathsf{Num}(j, \alpha) := |\{x \in {S}_{\mathsf{live}} \, : \, x_j = \alpha\}|,
\]
the number of live support elements $x$ that have $x_j=\alpha$ (note that $\mathsf{Num}(j,\alpha)$ has nothing to do with $g$).
It will also be useful to have notation for fractional versions of each of these quantities, so we define
\[
\mathsf{FracWt}(j,\alpha) := {\frac {\mathsf{Wt}(j,\alpha)} {\sum_{x \in {S}_{\mathsf{live}}} |g(x)|}}.
\quad \quad \quad \text{and} \quad \quad \quad
\mathsf{Frac}(j,\alpha) := {\frac {\mathsf{Num}(j,\alpha)} {|{S}_{\mathsf{live}}|}}
\]
Note that for any $j \in \coord_{\mathsf{live}}$ we have that $\sum_\alpha \mathsf{Num}(j,\alpha) = |{S}_{\mathsf{live}}|$, or equivalently $\sum_\alpha \mathsf{Frac}(j,\alpha) = 1.$

For each coordinate $j \in \coord_{\mathsf{live}}$, we write $\mathsf{MAJ}(j)$ to denote the element $\beta \in [n]$ which is such that $\mathsf{Num}(j,\beta) \ge \mathsf{Num}(j,\alpha)$ for all $\alpha \in [n]$ (we break ties arbitrarily).   Finally, we let $\mathsf{FracWtMaj}(j) = \mathsf{FracWt}(j,\mathsf{MAJ}(j))$.

Now we are ready to present the iterative process:
 
\begin{enumerate}
\item If every $j \in \coord_{\mathsf{live}}$ has $\mathsf{FracWtMaj}(j) > 1- \frac{1}{10k'}$\footnote{Note that this means almost all of the weight under $g$ of the live support elements is on elements that all agree with the majority value on coordinate $j$. Note further that if $\coord_{\mathsf{live}}$ is empty then this condition trivially holds.}, then halt the process. Otherwise, let $j$ be any element of $\coord{\mathsf{live}}$ for which $\mathsf{FracWtMaj}(j) \le 1- \frac{1}{10k'}$. 

\item For this coordinate $j$, choose $\alpha \in [n]$ which maximizes the ratio $\frac{\mathsf{FracWt}(j, \alpha)}{\mathsf{Frac}(j,\alpha)}$ (or equivalently, maximizes $\frac{\mathsf{FracWt}(j, \alpha)}{\mathsf{Num}(j,\alpha)}$) subject to $\mathsf{Frac}(j,\alpha) \not = 0$ and $\alpha \not = \mathsf{MAJ}(j)$. 

\item Add the constraint $x_j = \alpha$ to $\constr$, remove $j$ from $\coord_{\mathsf{live}}$, and remove all $x$ such that $x_j \not = \alpha$ from ${S}_{\mathsf{live}}$. Go to Step~1.   

\end{enumerate} 

When the iterative process ends, suppose that the set $\constr$ is $\{x_{j_1} = \alpha_1, \ldots, x_{j_\ell} = \alpha_\ell\}$. Then we claim that~\Cref{eq:U} holds for $U = \{j_1, \ldots, j_\ell\}$.

To argue this, we first observe that both invariants (I1) and (I2) are clearly maintained by each round of the iterative process. We next observe that each time a pair $(j,\alpha)$ is processed in Step~3, it holds that $\mathsf{Frac}(j,\alpha) \le \frac{1}{2}$, and hence each round shrinks ${S}_{\mathsf{live}}$ by a factor of at least $2$. Thus, after $\log k$ steps, the set ${S}_{\mathsf{live}}$ must be of size at most $1$ and hence the process must halt.  (Note that the claimed bound $|U| \leq \log k$ follows from the fact that the process runs for at most $\log k$ stages.)

Next, note that when the process halts, by a union bound over the at most $k'$ coordinates in $\coord_{\mathsf{live}}$ it holds that  
\[
\sum_{x \in {S}_{\mathsf{live}}: x_j = \mathsf{MAJ}(j) \text{~for all~}j \in \coord_{\mathsf{live}}} |g(x)| \ge \frac{9}{10} \cdot \sum_{x \in {S}_{\mathsf{live}}} |g(x)|. 
\]
On the other hand, by the first invariant (I1),
the cardinality of the set $\{x \in {S}_{\mathsf{live}}: x_j = \mathsf{MAJ}(j)$ for all $j \in \coord_{\mathsf{live}}\}$ is precisely $1$. This immediately implies that almost all of the weight of $g$, across elements of ${S}_{\mathsf{live}}$, is on a single element; more precisely, that
\[
\left|\sum_{x \in {S}_{\mathsf{live}}} g(x)\right| \ge \frac{4}{5} \cdot \sum_{x \in {S}_{\mathsf{live}}} |g(x)| ,
\]
from which it follows that 
\begin{equation}~\label{eq:lowerbound-1}
\left|
\sum_{x \in [n]^\ell} g(x) \cdot \Ind[x_i = \alpha_i \text{~for all~}i \in U]
\right| \ge \frac{4}{5} \cdot \sum_{x \in {S}_{\mathsf{live}}} |g(x)| . 
\end{equation}

So to establish~\Cref{eq:U}, it remains only to establish a lower bound on $\sum_{x \in {S}_{\mathsf{live}}} |g(x)|$ when the process terminates. To do this, let us suppose that the process runs for $T$ steps where in the $t^{th}$ step the coordinate chosen is  $j_t$. Now, at any stage $t$, we have
$$
\frac{\sum_{\beta \in \coord_{\mathsf{live}}: \beta \not = \mathsf{MAJ}(j_t)} \mathsf{FracWt}(j_t,\beta)}{\sum_{\beta \in \coord_{\mathsf{live}}: \beta \not = \mathsf{MAJ}(j_t)} \mathsf{Frac}(j_t,\beta)}
 \ge \frac{1}{10k'}.$$
(because the denominator is at most $1$ and since the process does not terminate, the numerator is at least $\frac{1}{10k}$). As a result, we get that if the constraint chosen at time $t$ is $x_{j_t} = \alpha_t$, then 
\begin{equation}~\label{eq:proportion}
\frac{\mathsf{FracWt}(j_t,\alpha_t)}{\mathsf{Frac}(j_t,\alpha_t)} \ge \frac{1}{10k'}.
\end{equation}
By~\Cref{eq:proportion}, when the process halts we have
 $$
 \sum_{x \in {S}_{\mathsf{live}}} |g(x)| =  \prod_{t =1}^T \mathsf{FracWt}(j_t, \alpha_t) \ge \frac{1}{(10k')^T} \prod_{t =1}^T \mathsf{Frac}(j_t, \alpha_t).
  $$
But since at least one element remains, we have that $\prod_{t =1}^T \mathsf{Frac}(j_t, \alpha_t) \ge \frac{1}{k}$, and since $T \le \log k$, we conclude (recalling that $k' \leq k$) that 
  $$
  \sum_{x \in {S}_{\mathsf{live}}} |g(x)| \ge k^{-O(\log k)}. 
  $$
  Combining with (\ref{eq:lowerbound-1}), this yields the claim. 
\end{proof}

\subsection{Proof of~\Cref{thm:algo-rec}}

The idea of the proof is quite similar to the algorithmic component of several recent works on population recovery~\cite{moitra2013polynomial, WY12, lovett2015improved, de2016noisy}. Given any function $f: \mathbb{S}_n \rightarrow \mathbb{R}$ and any integer $i \in \{1,\dots,n\}$, we define the function $f_{i}:[n]^i \rightarrow \mathbb{R}$ as follows: 
\begin{equation}~\label{eq:def-f-ell}
f_i(x_1, \ldots, x_i) := \sum_{\sigma \in \mathbb{S}_n} f(\sigma) \cdot \Ind[\sigma(1) = x_1 \wedge \ldots \wedge \sigma(i) =x_i]. 
\end{equation}

At a high level, the algorithm~$A_\learn$ of~\Cref{thm:algo-rec} works in stages, by successively reconstructing $f_0, \ldots, f_n$.  In each stage it uses the procedure described in the following claim, which says that high-accuracy approximations of the $(\log k)$-marginals \emph{together with the support of $f_\ell$} (or a not-too-large superset of it) suffices to reconstruct $f_\ell$:
\begin{claim}~\label{clm:known-support}
Let $f_\ell$ be an unknown  distribution over $[n]^\ell$ supported on a given set $S$ of size $k$. 
There is an algorithm $A_{\mathrm{one-stage}}$ which has the following guarantee:  The algorithm is given as input $\delta>0$,  and parameters $\beta_{J,y}$ (for every set $J \subseteq [\ell]$ of size at most $\log k$ and every $y \in [n]^J$) which satisfy
\[
\left|\beta_{J,y} - \sum_{x \in S} f(x) \cdot \Ind[x_i = y_i \text{~for all~}i\in J] \right| \le \delta.
\] 
$A_{\mathrm{one-stage}}$ runs in time $\mathsf{poly}(n, \ell^{\log k})$
and outputs a function $\tilde{f}: [n]^\ell \to [0,1]$ such that
$\|f-\tilde{f}\|_1 \leq \delta \cdot k^{O(\log k)}.$
\end{claim}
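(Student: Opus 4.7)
The plan is to reduce the reconstruction task to a linear-programming feasibility problem and then bound the error of any feasible solution via the structural result~\Cref{clm:WY-n-ary}. Concretely, $A_{\mathrm{one-stage}}$ introduces a variable $g(x)$ for each $x \in S$ and imposes the constraints
\[
0 \le g(x) \le 1 \text{\ \ for every } x \in S,
\qquad
\Bigl| \sum_{x \in S} g(x)\, \Ind[x_i = y_i\ \forall i \in J] - \beta_{J,y} \Bigr| \le \delta
\]
for every $J \subseteq [\ell]$ with $|J| \le \log k$ and every $y \in [n]^J$. The unknown distribution $f_\ell$ itself satisfies these constraints by hypothesis, so the LP is feasible; I return any feasible point, extended by $0$ on $[n]^\ell \setminus S$, as $\tilde f$. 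There are $k$ variables and $O((n\ell)^{\log k})$ constraints, so a standard LP solver runs in time $\mathsf{poly}(n, \ell^{\log k})$, matching the claimed running time.

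To bound $\| f_\ell - \tilde f \|_1$, set $h := f_\ell - \tilde f$. Since both $f_\ell$ and $\tilde f$ are supported on $S$ we have $|\supp(h)| \le k$; and since both satisfy the $(J,y)$-constraint, the triangle inequality yields
\[
\Bigl| \sum_{x \in [n]^\ell} h(x)\, \Ind[x_i = y_i\ \forall i \in J] \Bigr| \le 2\delta
\]
for every $J$ of size at most $\log k$ and every $y \in [n]^J$. If $\|h\|_1 = 0$ there is nothing to prove; otherwise, apply~\Cref{clm:WY-n-ary} to the normalized function $h / \|h\|_1$, which has $\ell_1$-norm $1$ and support size at most $k$. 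The claim produces a set $U \subseteq [\ell]$ with $|U| \le \log k$ and values $\alpha \in [n]^U$ such that
\[
\Bigl| \sum_{x \in [n]^\ell} \frac{h(x)}{\|h\|_1}\, \Ind[x_i = \alpha_i\ \forall i \in U] \Bigr| \ge k^{-O(\log k)}.
\]
Combining this with the previous display (specialized to $J = U$ and $y = \alpha$) yields $\|h\|_1 \le 2\delta \cdot k^{O(\log k)}$, which is the desired $\ell_1$ error bound.

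Essentially all of the substance is already in~\Cref{clm:WY-n-ary}: it furnishes precisely the ``short identifying marginal'' whose absence would otherwise allow the error function $h$, supported on a known $k$-element set, to carry large $\ell_1$ mass while being invisible at all $\le (\log k)$-wise marginals. The only real design choice in the algorithm is to insist that $\tilde f$ be supported on the \emph{given} set $S$, so that $h$ inherits the support-size bound; without this, the structural claim would not apply. Consequently, I expect no substantive technical obstacle beyond invoking~\Cref{clm:WY-n-ary} correctly and writing down the LP.
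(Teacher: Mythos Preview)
Your proposal is correct and essentially identical to the paper's own proof: both set up a linear program over the variables indexed by $S$, observe that $f_\ell$ is feasible, and then bound the $\ell_1$ distance between any two feasible points by applying \Cref{clm:WY-n-ary} to their (normalized) difference together with the triangle-inequality bound $2\delta$ on all low-order marginals. The only cosmetic difference is that the paper includes the constraint $\sum_{x\in S} s_x = 1$ in place of your box constraints $0 \le g(x) \le 1$, but neither constraint is actually used in the error analysis, so this is immaterial.
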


\begin{proof}
We consider a linear program which has a variable $s_x$ for each $x \in S$ (representing the probability that $f$ puts on $x$) and is defined by the following constraints: 

\begin{enumerate}
\item $s_x \ge 0$ and $\sum_{x \in S} s_x=1$. 
\item For each $J \subseteq [\ell]$ of size at most $\log k$ and each $y \in [n]^J$, include the constraint 
\begin{equation} \label{eq:constraint}
\left| \beta_{J,y} - \sum_{x \in S} s_x \cdot  \Ind[x_i = y_i \text{~for all~}i\in J] \right| \le \delta.
\end{equation}
\end{enumerate}
Algorithm~$A_{\mathsf{one-stage}}$ sets up and solves the above linear program (this can clearly be done in time $\mathsf{poly}(n, \ell^{\log k})$).  We observe that the linear program is feasible since by definition $s_x=f_\ell(x)$ is a feasible solution. To prove the claim it suffices to show that every feasible solution is $\ell_1$-close to $f_\ell$; so let $f^\ast(x)$ denote any other feasible solution to the linear program, and let $\eta$ denote $\|f^\ast-f_\ell\|_1.$ Define $h(x) =  f^{\ast}(x) - f_\ell(x),$ so $\|h\|_1=\eta.$ By~\Cref{clm:WY-n-ary}, we have that there is a subset $J \subseteq [\ell]$ of size at most $\log k$ and a $y \in [n]^\ell$ such that
\begin{equation} \label{eq:goodie}
\left|\sum_x h(x) \cdot  \Ind[x_i = y_i \text{~for all~}i\in J]
\right| 
\geq \eta \cdot k^{-O(\log k)}. 
\end{equation} \label{eq:bag}
On the other hand, since both $f_\ell(x)$ and $f^\ast(x)$ are feasible solutions to the linear program, by the triangle inequality it must be the case that
\begin{equation} 
\left|\sum_x h(x) \cdot  \Ind[x_i = y_i \text{~for all~}i\in J]
\right|  \leq 2 \delta. 
\end{equation}
Equations~\ref{eq:goodie} and~\ref{eq:bag} together give the desired upper bound on $\eta$, and the claim is proved.
\end{proof}

\ignore{
\begin{claim}~\label{clm:known-support}
Let $f_\ell$ be an unknown  distribution over $[n]^\ell$ supported on a given set $S$ of size $k$. 
There is an algorithm $A_{\mathrm{one-stage}}$ which has the following guarantee:  The algorithm is given as input $\delta>0$, \red{a set $S'$ which contains the support $S$ of $f$}, and parameters $\beta_{J,y}$ (for every set $J \subseteq [\ell]$ of size at most $\log k$ and every $y \in [n]^J$) which satisfy
\[
\left|\beta_{J,y} - \sum_{x \in S'} f(x) \cdot \Ind[x_i = y_i \text{~for all~}i\in J] \right| \le \delta.
\] 
$A_{\mathrm{one-stage}}$ runs in time $\mathsf{poly}(\red{|S'|},n, \ell^{\log k})$
and outputs a function $\tilde{f}: [n]^\ell \to [0,1]$ such that
$\|f-\tilde{f}\|_1 \leq \delta \cdot k^{O(\log k)}.$
\end{claim}

\begin{proof}
We consider a linear program which has a variable $s_x$ for each $x \in \red{S'}$ (representing the probability that $f$ puts on $x$) and is defined by the following constraints: 

\begin{enumerate}
\item $s_x \ge 0$ and $\sum_{x \in \red{S'}} s_x=1$. 
\item For each $J \subseteq [\ell]$ of size at most $\log k$ and each $y \in [n]^J$, include the constraint 
\begin{equation} \label{eq:constraint}
\left| \beta_{J,y} - \sum_{x \in \red{S'}} s_x \cdot  \Ind[x_i = y_i \text{~for all~}i\in J] \right| \le \delta.
\end{equation}
\end{enumerate}
Algorithm~$A_{\mathsf{one-stage}}$ sets up and solves the above linear program (this can clearly be done in time $\mathsf{poly}(\red{|S'|},n, \ell^{\log k})$).  We observe that the linear program is feasible since by definition $s_x=f_\ell(x)$ is a feasible solution. To prove the claim it suffices to show that every feasible solution is $\ell_1$-close to $f_\ell$; so let $f^\ast(x)$ denote any other feasible solution to the linear program, and let $\eta$ denote $\|f^\ast-f_\ell\|_1.$ Define $h(x) =  f^{\ast}(x) - f_\ell(x),$ so $\|h\|_1=\eta.$ By~\Cref{clm:WY-n-ary}, we have that there is a subset $J \subseteq [\ell]$ of size at most $\log k$ and a $y \in [n]^\ell$ such that
\begin{equation} \label{eq:goodie}
\left|\sum_x h(x) \cdot  \Ind[x_i = y_i \text{~for all~}i\in J]
\right| 
\geq \eta \cdot k^{-O(\log k)}. 
\end{equation} \label{eq:bag}
On the other hand, since both $f_\ell(x)$ and $f^\ast(x)$ are feasible solutions to the linear program, by the triangle inequality it must be the case that
\begin{equation} 
\left|\sum_x h(x) \cdot  \Ind[x_i = y_i \text{~for all~}i\in J]
\right|  \leq 2 \delta. 
\end{equation}
Equations~\ref{eq:goodie} and~\ref{eq:bag} together give the desired upper bound on $\eta$, and the claim is proved.
\end{proof}
}

Essentially the only remaining ingredient required to prove~\Cref{thm:algo-rec} is a procedure to find (a not-too-large superset of) the support of $f$.  This is given by the following claim, which inductively uses the algorithm $A_{\mathrm{one-stage}}$ to successively construct suitable (approximations of) the support sets for $f_1,\dots,f_n.$

\begin{claim} \label{claim:a}
Under the assumptions of~\Cref{thm:algo-rec}, there is an algorithm $A_{\mathrm{support}}$ with the following property: given as input a value $\delta>0$, algorithm $A_{\mathrm{support}}$ runs in time $\poly(n/\eps,n^{\log k}) \cdot T({\frac {\eps}{2k^{O(\log k)}}}, 2\log k, k^2, n)$ and for each $\ell=1,\dots,n$ outputs a set $S'_{(\ell)}$ of size at most $k$ which contains the support of $f_\ell$.
\end{claim}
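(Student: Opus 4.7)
The plan is to construct $S'_{(1)},\dots,S'_{(n)}$ inductively, starting from $S'_{(0)} := \{()\}$ and maintaining the invariant that $S'_{(\ell-1)} \supseteq \mathrm{supp}(f_{\ell-1})$ has size at most $k$. In each stage $\ell$, first I would use cheap $1$-way marginal queries to shrink the candidate values for position $\ell$ to at most $k$, and then run an LP in the style of~\Cref{clm:known-support} on a candidate support of size at most $k^2$ to approximate $f_\ell$ well enough that thresholding returns its exact support.

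In stage $\ell$, for each $j \in [n]$ I would call $A_{\marg}$ to estimate $\Pr_{\bsigma \sim f}[\bsigma(\ell)=j]$ to additive accuracy $\eps/3$. Since $f$ is $\eps$-heavy and supported on at most $k$ permutations, this marginal lies in $\{0\} \cup [\eps,1]$ and is nonzero on a set $J^*_\ell \subseteq [n]$ of size at most $k$; thresholding the estimates at $\eps/2$ therefore isolates $J^*_\ell$ exactly. Because every element of $\mathrm{supp}(f_\ell)$ extends some $x \in \mathrm{supp}(f_{\ell-1}) \subseteq S'_{(\ell-1)}$ by some $j \in J^*_\ell$, the candidate set
\[
S_\ell := \{(x,j) : x \in S'_{(\ell-1)},\ j \in J^*_\ell,\ j \notin \{x_1,\dots,x_{\ell-1}\}\}
\]
has size at most $k^2$ and contains $\mathrm{supp}(f_\ell)$.

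Next I would set up the LP of~\Cref{clm:known-support} with $S = S_\ell$, except that $J$ now ranges over subsets of $[\ell]$ of size at most $2\log k$ (and $y$ over $[n]^J$). Each $\beta_{J,y}$ is a $|J|$-way marginal of $f$ with $|J| \le 2\log k$, which $A_{\marg}$ can estimate to accuracy $\delta := \eps / (C k^{C' \log k})$ for suitable constants $C,C'$. Feasibility is witnessed by $s_{(x,j)} := f_\ell(x,j)$; for the uniqueness-up-to-$\ell_1$ bound, any other feasible $\tilde f_\ell$ yields $h := \tilde f_\ell - f_\ell$ supported in $S_\ell$ of size at most $k^2$, so~\Cref{clm:WY-n-ary} applied with parameter $k^2$ produces a distinguishing junta of size $\le \log(k^2) = 2\log k$ and correlation at least $\|h\|_1 \cdot (k^2)^{-O(\log k^2)} = \|h\|_1 \cdot k^{-O(\log k)}$. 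Combined with the LP constraints (a doubled triangle inequality) this gives $\|\tilde f_\ell - f_\ell\|_1 \le 2\delta \cdot k^{O(\log k)} \le \eps/4$. Because the true values of $f_\ell$ on $S_\ell$ lie in $\{0\} \cup [\eps, 1]$, thresholding $\tilde f_\ell$ at $\eps/2$ returns $\mathrm{supp}(f_\ell)$ exactly, which I take as $S'_{(\ell)}$, maintaining the induction.

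The main obstacle this plan addresses is the need to keep the LP's support parameter at $k^2$ rather than the naive $kn$: the naive candidate $S'_{(\ell-1)} \times [n]$ would force $\log(kn)$-way marginals and a $(kn)^{O(\log(kn))}$ loss, breaking the promised $(2\log k, k^2)$ profile. The $\eps$-heaviness of $f$ is exactly the structural handle that makes the filtering step work, cutting the $\ell$-th coordinate to at most $k$ admissible values. Summing costs across the $n$ stages gives $n\cdot n^{O(\log k)}$ marginal queries and $n$ LPs of size $\poly(n^{\log k})$; a union bound absorbs the individual failure probabilities of the $A_{\marg}$ calls, and the claimed running time $\poly(n/\eps,n^{\log k}) \cdot T(\eps/(2k^{O(\log k)}), 2\log k, k^2, n)$ follows.
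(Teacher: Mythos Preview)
Your proposal is correct and follows essentially the same approach as the paper's proof: both argue inductively, use cheap $1$-way marginal estimates together with $\eps$-heaviness to restrict the $\ell$-th coordinate to at most $k$ values, form a candidate support of size at most $k^2$, run the LP of~\Cref{clm:known-support} with $(2\log k)$-way marginals (since $\log(k^2)=2\log k$), and threshold to recover $\mathrm{supp}(f_\ell)$ exactly. Your writeup is in fact more explicit than the paper's about \emph{why} the marginal width is $2\log k$ and the support parameter is $k^2$.
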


\begin{proof}
The algorithm $A_{\mathrm{support}}$ works inductively, where at the start of stage $\ell$ (in which it will construct the set $S'_{(\ell)}$) it is assumed to have a set $S'_{(\ell-1)}$ with $|S'_{(\ell-1)}| \leq {k}$ which contains the support of $f_{\ell-1}.$  (Note that at the start of the first stage $\ell=1$ this holds trivially since $f_0$ trivially has empty support).

Let us describe the execution of the $\ell$-th stage of $A_{\mathrm{support}}$.  For $1 \le \ell \le n$, we define the set $S_{\textsf{marg},\ell}$ as follows: 
\[
S_{\textsf{marg},\ell} = \big\{ t: \sum_{\sigma \in \mathbb{S}_n} f(\sigma) \cdot \Ind[\sigma(\ell) = t] >0\big\}.
\]
Observe that in time $\mathsf{poly}(n/\epsilon) \cdot T(\frac{\epsilon}{4}, 1, k , n)$,  we can compute $f(\sigma) \cdot \Ind[\sigma(\ell) = t]$ up to error $\pm \epsilon/4$ (denote this estimate by $\beta_{\ell,t}$) for all $1 \le t \le n$. Since $f$ is $\epsilon$-heavy, we have that $$ 
t \in S_{\textsf{marg},\ell} \textrm{ implies } \beta_{\ell,t} \ge  \frac{3\epsilon}{4} \quad \textrm{and} \quad  t \not \in S_{\textsf{marg},\ell} \textrm{ implies } \beta_{\ell,t} \le  \frac{\epsilon}{4}. 
$$ 
Consequently, we can compute the set $S_{\textsf{marg},\ell} $ 
in time  $\mathsf{poly}(n/\epsilon) \cdot T(\frac{\epsilon}{4}, 1, k , n)$. The final observation is that 
the set $S^\ast_{(\ell)}$ (of cardinality at most $k^2$) obtained by appending each final $\ell$-th character from $S_{\textsf{marg},\ell}$ to each element of $S'_{(\ell-1)}$  must contain the support $S_{(\ell)}$ of $f_\ell$.  Set $\delta = \frac{\epsilon}{2 k^{O(\log k)}}$; by the assumption of~\Cref{thm:algo-rec}, in time $T({\frac {\eps}{2k^{O(\log k)}}}, 2\log k, k^2, n)$\ignore{\rnote{This was ``$T\big(\frac{\epsilon}{(nk)^{\log k + \log n}}, \log k + \log n, n \cdot k, n \big)$'' but I'm not sure why}} it is possible to obtain additively $\pm \delta$-accurate estimates of each of the $(2 \log k)$-way marginals of $f_\ell$.  In the $\ell$-th stage, algorithm $A_{\mathrm{support}}$ runs $A_{\mathrm{one-stage}}$ using $S^\ast_{(\ell)}$ and these estimates of the marginals; by~\Cref{clm:known-support}, this takes time $\poly(n/\eps,n^{\log k})$ and yields a function $\tilde{f}_\ell: [n]^\ell \to [0,1]$ such that $\|f_\ell-\tilde{f}_\ell\|_1 \leq {\frac {\delta} {2 k^{O(\log k)}}} \cdot k^{O(\log k)} = \eps/4.$ Since by assumption $f$ is $\eps$-heavy, it follows that any element $x$ in the support of $\tilde{f}_\ell$ such that $\tilde{f}_\ell(x)\le\eps/4$ must not be in the support of $f_\ell$; so the algorithm removes all such elements $x$ from $S^\ast_{(\ell)}$ to obtain the set $S'_{(\ell)}.$  This resulting $S'_{(\ell)}$ is precisely the support of $f_\ell$, and is clearly of size at most $k.$
\ignore{
The algorithm $A_{\mathrm{support}}$ works inductively, where at the start of stage $\ell$ (in which it will construct the set $S'_{(\ell)}$) it is assumed to have a set $S'_{(\ell-1)}$ with $|S'_{(\ell-1)}| \leq \red{2/\eps}$ which contains the support of $f_{\ell-1}.$  (Note that at the start of the first stage $\ell=1$ this holds trivially since $f_0$ trivially has empty support).

Let us describe the execution of the $\ell$-th stage of $A_{\mathrm{support}}$.  A first key observation that the set $S^\ast_{(\ell)}$ (of cardinality at most \red{$2n/\eps$}) obtained by appending each final $\ell$-th character from $[n]$ to each element of $S'_{(\ell-1)}$ must contain the support $S_{(\ell)}$ of $f_\ell$.  Set $\delta = \frac{\epsilon}{2 k^{O(\log k)}}$; by the assumption of~\Cref{thm:algo-rec}, in time \red{$T({\frac {\eps}{2k^{O(\log k)}}}, \log k, k, n)$}\rnote{This was ``$T\big(\frac{\epsilon}{(nk)^{\log k + \log n}}, \log k + \log n, n \cdot k, n \big)$'' but I'm not sure why} it is possible to obtain additively $\pm \delta$-accurate estimates of each of the $(\log k)$-way marginals of $f_\ell$.  In the $\ell$-th stage, algorithm $A_{\mathrm{support}}$ runs $A_{\mathrm{one-stage}}$ using $S^\ast_{(\ell)}$ and these estimates of the marginals; by~\Cref{clm:known-support}, this takes time \red{$\poly(n/\eps,n^{\log k})$} and yields a function $\tilde{f}_\ell: [n]^\ell \to [0,1]$ such that $\|f_\ell-\tilde{f}_\ell\|_1 \leq {\frac {\delta} {2 k^{O(\log k)}}} \cdot k^{O(\log k)} = \eps/4.$ Since by assumption $f$ is $\eps$-heavy, it follows that any element $x$ in the support of $\tilde{f}_\ell$ such that $\tilde{f}_\ell(x)\le \eps/4$ must not be in the support of $f_\ell$; so the algorithm removes all such elements $x$ from $S^\ast_{(\ell)}$ to obtain the set $S'_{(\ell)}.$  This resulting $S'_{(\ell)}$ {\red{is precisely}} the support of $f_\ell$, and is clearly of size at most $k.$}
\end{proof}

Finally, the overall algorithm $A_{\mathrm{learn}}$ works by running $A_{\mathrm{support}}$ 
to get the set $S'=S'_{(n)}$ of size at most $k$ which is the support of $f_n=f$, and then uses $S'$ and the algorithm $A_{\mathrm{marginal}}$ from the assumptions of~\Cref{thm:algo-rec}) to run algorithm $A_{\mathrm{one-stage}}$ and obtain the required $\eps$-accurate approximator $g$ of $f$. This concludes the proof of~\Cref{thm:algo-rec}.

\ignore{
\gray{
}
}


\section{Computing limited way marginals from noisy samples}~\label{sec:marginal}

 Recall that the noisy ranking learning problems we consider are of the following sort: There is a known noise distribution ${\cal K}$ supported on $\mathbb{S}_n$, and an unknown $k$-sparse $\eps$-heavy distribution $f: \mathbb{S}_n \rightarrow [0,1]$. Each sample provided to the learning algorithm is generated by  the following probabilistic process: independent draws of $\bpi \sim \mathcal{K}$ and $\bsigma \sim f$ are obtained, and the sample given to the learner is $(\bpi \bsigma) \in \mathbb{S}_n$.  By the reduction established in~\Cref{thm:algo-rec}, in order to give an algorithm that learns the distribution $f$ in the presence of a particular kind of noise ${\cal K}$, it suffices to give an algorithm that can efficiently estimate $t$-way marginals given samples $\bpi \bsigma \sim  {\cal K} \ast f.$

The main result of this section,~\Cref{thm:min-sing-val}, gives such an algorithm.  Before stating the theorem we need some terminology and notation and we need to recall some necessary background from representation theory of the symmetric group (see~\Cref{sec:rep} for a detailed overview of all of the required background).

First, let ${\cal K}$ be a distribution over $\mathbb{S}_n$ (which should be thought of as a noise distribution as described earlier). We say that ${\cal K}$ is \emph{efficiently samplable} if there is a $\poly(n)$-time randomized algorithm which takes no input and, each time it is invoked, returns an independent draw of $\bpi \sim {\cal K}.$

Next, we recall that a \emph{partition} $\lambda$ of the natural number $n$ (written ``$\lambda \vdash n$'')
is a vector of natural numbers $(\lambda_1,\dots,\lambda_k)$ where $\lambda_1 \ge \lambda_2 \ge \ldots \ge  \lambda_k>0$ and $\lambda_1 + \ldots + \lambda_k=n$ (see~\Cref{sec:repsym} for more detail).  For two partitions $\lambda$ and $\mu$ of $n$, we say that \emph{$\mu$ dominates $\lambda$}, written $\mu \rhd \lambda$, if $\sum_{j \le i} \mu_j \ge \sum_{j \le i} \lambda_j$ for all $i >0$ (see~\Cref{def:dominance-order}).  Given any $\lambda \vdash n$, let $\mathsf{Up}(\lambda)$ denote the set of all partitions $\mu \vdash n$ such that $\mu \rhd \lambda.$

We recall that a \emph{representation} of the symmetric group $\mathbb{S}_n$ is a group homomorphism from $\mathbb{S}_n$ to $\mathbb{C}^{m \times m}$ (see~\Cref{sec:rep}).  We further recall that for each partition $\lambda \vdash n$ there is a corresponding \emph{irreducible} representation, denoted $\rho_\lambda$ (see~\Cref{sec:repsym}).  For a matrix $M$ we write $\sigma_{\min}(M)$ to denote the smallest singular value of $M$.  Given a partition $\lambda \vdash n$ we define the value $\sigma_{\min, \mathsf{Up}(\lambda),{\cal K}}$ to be
\begin{equation} \label{eq:sigma-min}
\sigma_{\min, \mathsf{Up}(\lambda),{\cal K}} := \min_{\mu \in \mathsf{Up}(\lambda)} \sigma_{\min} (\widehat{\mathcal{K}}(\rho_{\mu})), 
\end{equation}
the smallest singular value across all Fourier coefficients of the noise distribution of irreducible representations corresponding to partitions that dominate $\lambda$.  (We recall that the Fourier coefficients of functions over the symmetric group, and indeed over any finite group, are matrices; see~\Cref{sec:repsym}.)

Finally, for $0 \leq \ell \leq n-1$ we define the partition $\lambda_{\mathsf{hook},\ell} \vdash n$ to be 
\[
\lambda_{\mathsf{hook},\ell} := (n-\ell,1,\ldots, 1).
\]

Now we can state the main result of this section:
\begin{theorem} \label{thm:min-sing-val}
Let ${\cal K}$ be an efficiently samplable distribution over $\mathbb{S}_n$. Let $f$ be an unknown distribution over $\mathbb{S}_n$. There is an algorithm $A_{\mathrm{marginal}}$ with the following properties:  $A_{\mathrm{marginal}}$ receives as input a parameter $\delta>0$, a confidence parameter $\tau > 0$, a pair of ${\ell}$-tuples $\bar{i}=(i_1,\dots,i_\ell) \in [n]^\ell$, $\bar{j}=(j_1,\dots,j_\ell) \in [n]^\ell$ each composed of $\ell$ distinct elements, and has access to random samples from ${\cal K} \ast f$. Algorithm $A_{\mathrm{marginal}}$ runs in time $\mathsf{poly}(\binom{n}{\ell}, \delta^{-1}, \sigma^{-1}_{\min, \mathsf{Up}(\lambda_{\mathsf{hook},\ell}),{\cal K},}\log(1/\tau))$ and outputs a value $\kappa_{\ol{i},\ol{j}}$ which with probability at least $1-\tau$ is a $\pm \delta$-accurate estimate of the $(\ol{i},\ol{j})$-marginal of $f$.\ignore{ 
$$
\left| \kappa_{\ol{i},\ol{j}} - \sum_{\sigma \in \mathbb{S}_n} f(\sigma) \cdot \Ind[\sigma(i_1)=j_1 \text{~and~}
\sigma(i_\ell)=j_\ell] \right| \le \delta. 
$$}
\end{theorem}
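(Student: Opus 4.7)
The plan is to use Fourier analysis on $\mathbb{S}_n$ to reduce marginal estimation to matrix-inversion ``deconvolution'' one irreducible representation at a time. Write the $(\bar i,\bar j)$-marginal as $\langle f,\,h_{\bar i,\bar j}\rangle$, where $h_{\bar i,\bar j}(\sigma)=\Ind[\sigma(i_1)=j_1\wedge\cdots\wedge\sigma(i_\ell)=j_\ell]$. By the Plancherel identity on $\mathbb{S}_n$, this inner product equals $\tfrac{1}{n!}\sum_{\lambda\vdash n} d_\lambda\,\mathrm{tr}\bigl(\hat f(\rho_\lambda)\,\hat h_{\bar i,\bar j}(\rho_\lambda)^*\bigr)$, so only those $\lambda$ for which $\hat h_{\bar i,\bar j}(\rho_\lambda)\neq 0$ contribute to the marginal.

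The first step is to identify this Fourier support. The function $h_{\bar i,\bar j}$ is right-invariant under the pointwise stabilizer of $\{i_1,\ldots,i_\ell\}$---a Young subgroup of type $\lambda_{\mathsf{hook},\ell}=(n-\ell,1^\ell)$---and similarly left-invariant. Young's rule decomposes the associated permutation module as $M^{\lambda_{\mathsf{hook},\ell}}=\bigoplus_\mu K_{\mu,\lambda_{\mathsf{hook},\ell}}\,S^\mu$, and the Kostka number $K_{\mu,\lambda_{\mathsf{hook},\ell}}$ is nonzero iff $\mu\trianglerighteq\lambda_{\mathsf{hook},\ell}$; hence $\hat h_{\bar i,\bar j}(\rho_\mu)=0$ whenever $\mu\notin\mathsf{Up}(\lambda_{\mathsf{hook},\ell})$. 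Inspecting the dominance condition, $\mu\in\mathsf{Up}(\lambda_{\mathsf{hook},\ell})$ iff $\mu_1\geq n-\ell$, so the cardinality of this set and the maximum irreducible dimension $d_\mu$ inside it are both bounded by $\mathsf{poly}(\binom{n}{\ell})$.

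For each $\mu\in\mathsf{Up}(\lambda_{\mathsf{hook},\ell})$ I would now estimate $\hat f(\rho_\mu)$ by deconvolution. Since $\bsigma'=\bpi\bsigma$ with $\bpi\sim\mathcal{K}$ and $\bsigma\sim f$ independent, the convolution theorem gives $\widehat{\mathcal{K}\ast f}(\rho_\mu)=\hat{\mathcal{K}}(\rho_\mu)\,\hat f(\rho_\mu)$. The algorithm draws $N$ samples $\bsigma'_1,\ldots,\bsigma'_N\sim\mathcal{K}\ast f$ and forms the empirical mean $\widetilde M_\mu:=\tfrac{1}{N}\sum_t\rho_\mu(\bsigma'_t)$; since $\rho_\mu$ is unitary, each entry has modulus at most $1$, so entrywise Hoeffding plus a union bound over the $d_\mu^2$ entries gives $\|\widetilde M_\mu-\widehat{\mathcal{K}\ast f}(\rho_\mu)\|_{\mathrm{op}}\leq\epsilon'$ from $N=\mathsf{poly}(d_\mu,1/\epsilon',\log(1/\tau'))$ samples, with probability $1-\tau'$. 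Using the efficient samplability of $\mathcal{K}$, I form an analogous estimate $\widetilde K_\mu\approx\hat{\mathcal{K}}(\rho_\mu)$ from fresh samples of $\mathcal{K}$ alone, and set $\widetilde F_\mu:=\widetilde K_\mu^{-1}\widetilde M_\mu$. Standard matrix-inversion perturbation bounds (using $\|\hat f(\rho_\mu)\|_{\mathrm{op}}\leq 1$, which follows because $f$ is a probability distribution) then give $\|\widetilde F_\mu-\hat f(\rho_\mu)\|_{\mathrm{op}}=O(\epsilon'/\sigma_{\min}(\hat{\mathcal{K}}(\rho_\mu)))$.

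Finally, substituting the $\widetilde F_\mu$ into the Plancherel expression yields the estimate $\kappa_{\bar i,\bar j}$; summing the per-representation errors with the $\tfrac{d_\mu}{n!}\|\hat h_{\bar i,\bar j}(\rho_\mu)\|_F$ weights from Plancherel, choosing $\epsilon'=\delta\cdot\sigma_{\min,\mathsf{Up}(\lambda_{\mathsf{hook},\ell}),\mathcal{K}}/\mathsf{poly}(\binom{n}{\ell})$, and union-bounding failure over the $\mathsf{poly}(\binom{n}{\ell})$ relevant $\mu$ yields $\pm\delta$ accuracy with confidence $1-\tau$ within the claimed running time. The main obstacle I expect is the clean bookkeeping around the Fourier-support claim---in particular matching the convention $\widehat{\mathcal{K}\ast f}=\hat{\mathcal{K}}\hat f$ to left- versus right-invariance so that the relevant representations really are the dominance up-set $\mathsf{Up}(\lambda_{\mathsf{hook},\ell})$---together with carrying the matrix-inversion perturbation bound through Plancherel without picking up spurious $d_\mu$ factors that would degrade the polynomial dependence on $\binom{n}{\ell}$.
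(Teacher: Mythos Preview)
Your approach is correct in spirit and shares the core idea with the paper---Fourier deconvolution on $\mathbb{S}_n$, with the relevant Fourier support being $\mathsf{Up}(\lambda_{\mathsf{hook},\ell})$---but the paper takes a noticeably shorter route that sidesteps the bookkeeping you flagged as an obstacle.

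Rather than decomposing into irreducibles and reassembling via Plancherel, the paper works directly with the \emph{reducible} permutation representation $\tau_{\lambda_{\mathsf{hook},\ell}}$ itself. Its rows and columns are indexed by the Young tabloids of shape $(n-\ell,1,\dots,1)$, i.e.\ by ordered $\ell$-tuples of distinct elements, and one checks from the definition that the $(\bar i,\bar j)$ entry of $\widehat{f}(\tau_{\lambda_{\mathsf{hook},\ell}})$ \emph{is} the $(\bar i,\bar j)$-marginal of $f$. So a single application of the deconvolution estimate (your paragraph three, which is exactly the paper's \Cref{clm:representation}) to this one representation suffices; the answer is then read off as a single matrix entry. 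The minimum singular value of $\widehat{\mathcal K}(\tau_{\lambda_{\mathsf{hook},\ell}})$ is controlled by $\sigma_{\min,\mathsf{Up}(\lambda_{\mathsf{hook},\ell}),\mathcal K}$ because, by the James submodule theorem, $\tau_{\lambda_{\mathsf{hook},\ell}}$ block-diagonalizes into precisely the irreducibles $\rho_\mu$ with $\mu\in\mathsf{Up}(\lambda_{\mathsf{hook},\ell})$.

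What the paper's route buys is that you never need to compute $\widehat{h}_{\bar i,\bar j}(\rho_\mu)$ for each irreducible $\mu$. In your plan that matrix is $\sum_{\sigma:\sigma(i_k)=j_k}\rho_\mu(\sigma)$, a sum over a coset of size $(n-\ell)!$, and you have not said how to evaluate it in time $\mathsf{poly}\bigl(\binom{n}{\ell}\bigr)$; nor have you said how to write down explicit matrix entries of the $\rho_\mu$ in that time. These are not insurmountable (one can exploit that the sum is $\rho_\mu(\pi_0)$ times $|H|$ times the projector onto $H$-invariants and compute that projector via branching), but the permutation-representation shortcut makes all of it unnecessary: $\tau_{\lambda_{\mathsf{hook},\ell}}(\sigma)$ is an explicit $0/1$ permutation matrix, so every step---sampling, averaging, inverting, reading an entry---is immediate.
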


We will use the following claim to prove~\Cref{thm:min-sing-val}:

\begin{claim}~\label{clm:representation}
Let $\rho: \mathbb{S}_n \rightarrow \mathbb{C}^{m \times m}$ be any unitary representation of $\mathbb{S}_n$, let $\mathcal{K}$ be any efficiently samplable distribution over $\mathbb{S}_n$, and let $\sigma_{\min}$ denote the smallest singular value of $\widehat{\mathcal{K}}(\rho)$. Let $f$ be an unknown distribution over $\mathbb{S}_n$.  There is an algorithm which, given random samples from ${\cal K} \ast f$  and an error parameter $0<\delta<1$, runs in time $\mathsf{poly}(m,n, \sigma_{\min}^{-1},\delta^{-1})$ and with high probability\ignore{\rnote{Do we want to get into the whole error probability analysis or just leave it at this?}} outputs a matrix $M_{f,\rho}$ such that $\Vert M_{f,\rho} - \widehat{f}(\rho) \Vert \le \delta$. 
\end{claim}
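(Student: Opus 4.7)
The strategy is to exploit the convolution theorem for the Fourier transform over $\mathbb{S}_n$ (discussed in~\Cref{sec:rep}): for any representation $\rho$ of $\mathbb{S}_n$, $\widehat{\mathcal{K} \ast f}(\rho) = \widehat{\mathcal{K}}(\rho) \cdot \widehat{f}(\rho)$. Since $\sigma_{\min} > 0$ the matrix $\widehat{\mathcal{K}}(\rho)$ is invertible, and we in principle have $\widehat{f}(\rho) = \widehat{\mathcal{K}}(\rho)^{-1} \cdot \widehat{\mathcal{K} \ast f}(\rho)$. The algorithm carries this out empirically: it draws samples $\bpi \bsigma \sim \mathcal{K} \ast f$ from the oracle and forms an estimate $\widehat{M}_1$ of $\widehat{\mathcal{K} \ast f}(\rho)$; separately it draws samples $\bpi \sim \mathcal{K}$ (possible because $\mathcal{K}$ is efficiently samplable) and forms an estimate $\widehat{M}_2$ of $\widehat{\mathcal{K}}(\rho)$; and it outputs $M_{f,\rho} := \widehat{M}_2^{-1} \widehat{M}_1$.

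For the estimation step, the key observation is that for any distribution $p$ over $\mathbb{S}_n$, $\widehat{p}(\rho) = \mathbb{E}_{\bpi \sim p}[\rho(\bpi)]$, so the natural estimator is the empirical average of $\rho(\bpi)$ across the samples. Since $\rho$ is unitary, every entry of $\rho(\bpi)$ has magnitude at most $1$. A standard matrix concentration inequality (or an entrywise Hoeffding bound followed by a union bound over the $m^2$ entries) shows that with $N = \mathrm{poly}(m, \epsilon^{-1}, \log(1/\tau))$ samples the empirical mean is within operator-norm distance $\epsilon$ of the true Fourier coefficient, except with failure probability at most $\tau/3$. Each sample contributes the matrix $\rho(\bpi)$, which can be computed in $\mathrm{poly}(n,m)$ time using standard algorithms for representations of $\mathbb{S}_n$, so the total time for this phase is $\mathrm{poly}(n, m, \epsilon^{-1}, \log(1/\tau))$.

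The final step is to bound the error in $M_{f,\rho} = \widehat{M}_2^{-1} \widehat{M}_1$ once both estimates are $\epsilon$-accurate. Writing $\widehat{M}_2 = \widehat{\mathcal{K}}(\rho) + E_2$ and $\widehat{M}_1 = \widehat{\mathcal{K}}(\rho)\widehat{f}(\rho) + E_1$ with $\Vert E_1\Vert, \Vert E_2\Vert \leq \epsilon$, a routine matrix perturbation argument --- using the Neumann expansion $(\widehat{\mathcal{K}}(\rho) + E_2)^{-1} = \widehat{\mathcal{K}}(\rho)^{-1} - \widehat{\mathcal{K}}(\rho)^{-1} E_2 \widehat{\mathcal{K}}(\rho)^{-1} + \cdots$, valid provided $\epsilon \leq \sigma_{\min}/2$, together with the bound $\Vert \widehat{f}(\rho)\Vert \leq 1$ that follows from $f$ being a distribution and $\rho$ unitary --- yields $\Vert M_{f,\rho} - \widehat{f}(\rho)\Vert = O(\epsilon/\sigma_{\min})$. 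Choosing $\epsilon = \Theta(\delta \cdot \sigma_{\min})$ then delivers the claimed $\delta$-accuracy, and a union bound over the two concentration events produces overall failure probability $\leq \tau$, giving total running time $\mathrm{poly}(m, n, \sigma_{\min}^{-1}, \delta^{-1}, \log(1/\tau))$. The main (and really only) obstacle is this error-propagation step: one must be careful that the $1/\sigma_{\min}$ amplification from matrix inversion is absorbed correctly into the sample complexity, but given the convolution theorem and standard matrix concentration the argument is otherwise entirely routine.
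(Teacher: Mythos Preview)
Your proposal is correct and follows essentially the same approach as the paper: estimate $\widehat{\mathcal K}(\rho)$ and $\widehat{\mathcal K\ast f}(\rho)$ by empirical averages of $\rho(\cdot)$ over samples, then output the product of the (approximate) inverse with the second estimate, controlling the error via a standard matrix perturbation bound (the paper cites Stewart's inequality where you invoke the Neumann series) together with $\Vert \widehat{g}(\rho)\Vert\le \Vert g\Vert_1$ for unitary $\rho$. The only cosmetic differences are that your $\widehat M_1,\widehat M_2$ are the paper's $M_2,M_1$, and that the Neumann expansion as you wrote it naively yields $O(\epsilon/\sigma_{\min}^2)$ rather than $O(\epsilon/\sigma_{\min})$---but either way the sample complexity is $\mathrm{poly}(\sigma_{\min}^{-1},\delta^{-1})$, so the claim stands.
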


\begin{proof}
Let $\eta_1,\eta_2>0$ denote two error parameters that will be fixed later.
Since $f$ is a distribution, the Fourier coefficient $\hat{f}(\rho)$ is equal to $\E_{\bsigma \sim f}[\rho(\bsigma)]$.
Consequently, since ${\cal K}$ is assumed to be efficiently samplable and the algorithm is given samples from ${\cal K} \ast f$, by sampling from ${\cal K}$ and from $ {\cal K} \ast f$ it is straightforward to obtain matrices
$M_1,M_2$ in time $\poly(m,n,\log(1/\tau))$ which with probability $1-\tau$ satisfy
\[
\Vert M_1- \widehat{\mathcal{K}}(\rho) \Vert_2 \le \eta_1 \ \textrm{and} \ \Vert M_2- \widehat{ \mathcal{K}\ast f}(\rho) \Vert_2 \le \eta_2. 
\]
Now we recall the following matrix perturbation inequality (see 
Theorem~2.2 of \cite{stewart1977}):

\begin{lemma}~\label{lem:matrix-inv}
Let $A \in \mathbb{R}^{n \times n}$ be a non-singular matrix and further let
$\Delta A \in \mathbb{R}^{n \times n}$ be such that $\Vert \Delta A\Vert_2 \cdot \Vert A^{-1} \Vert_2<1$. Then $A+\Delta A$ is non-singular. Further, if $\gamma=1-\Vert A^{-1} \Vert_2 \Vert \Delta A\Vert_2 $, then 
\[
\Vert A^{-1} - (A+\Delta A)^{-1} \Vert_2 \leq  \frac{\Vert A^{-1} \Vert_2^2 \Vert \Delta A \Vert_2 }{\gamma}. 
\]
\end{lemma}
Let us now set the error parameters $\eta_1$ and $\eta_2$ as follows (recall that $\delta < 1$):
\begin{equation}~\label{eq:eta-error-bounds}
\eta_1 = \min\big\{\frac{\delta \cdot \sigma_{\min}^2}{4}
,  
\frac{\delta \cdot \sigma_{\min}}{4}\big\} \ \  \textrm{and} \ \ \eta_2 = \min\{\frac{\delta \cdot \sigma_{\min}}{4}, 1\}. 
\end{equation}
Applying Lemma~\ref{lem:matrix-inv} with $\widehat{{\cal K}}(\rho)$ in place of $A$ and $M_1 - \widehat{{\cal K}}(\rho)$ in place of $\Delta A$, using (\ref{eq:eta-error-bounds}) 
(more precisely, the upper bound $\eta_1 \leq \delta \cdot \sigma_{\min}^2/4$ in 
the numerator and the upper bound $\eta_1 \leq \delta \cdot \sigma_{\min}/4$ in the denominator) 
we get that 
\begin{equation}~\label{eq:inv-diff}
\Vert M_1^{-1} - \widehat{\mathcal{K}}(\rho)^{-1} \Vert_2 \leq \frac{\Vert \widehat{\mathcal{K}}(\rho)^{-1} \Vert_2^2 \cdot \Vert M_1 - \widehat{\mathcal{K}} (\rho)\Vert_2}{1-\Vert \widehat{\mathcal{K}}(\rho)^{-1} \Vert_2 \cdot \Vert M_1 - \widehat{\mathcal{K}} (\rho)\Vert_2} \le
\frac{\delta}{3}. 
\end{equation}
Now using $\widehat{ \mathcal{K} \ast f} (\rho) = \widehat{\mathcal{K}}(\rho) \cdot \widehat{f}(\rho)$, we get 

\begin{eqnarray}
\Vert M_1^{-1} \cdot M_2 - \widehat{f}(\rho) \Vert_2  &=& 
\Vert M_1^{-1} \cdot M_2 - \widehat{\mathcal{K}}(\rho)^{-1} \cdot \widehat{ \mathcal{K} \ast f}(\rho) \cdot   \Vert_2 \nonumber\\
&\le&  \Vert  M_1^{-1} \cdot M_2 - M_1^{-1} \cdot \widehat{ \mathcal{K} \ast f}(\rho) \Vert_2  + \Vert M_1^{-1} \cdot \widehat{ \mathcal{K} \ast f}(\rho) - \widehat{\mathcal{K}}(\rho)^{-1} \cdot \widehat{ \mathcal{K} \ast f}(\rho)\Vert_2 \nonumber
\\
&\le&  \Vert M_1^{-1} \Vert_2 \cdot \Vert M_2- \widehat{ \mathcal{K} \ast f}(\rho) \Vert_2   +   \Vert M_1^{-1} -  \widehat{\mathcal{K}}(\rho)^{-1} \Vert_2  \cdot \Vert \widehat{ \mathcal{K} \ast f}(\rho) \Vert_2
\nonumber\\
&\le&  \Vert M_1^{-1} \Vert_2 \cdot \eta_2 +  \Vert \widehat{ \mathcal{K} \ast f}(\rho) \Vert_2 \cdot \frac{\delta}{3}. \ \  \quad (\textrm{using (\ref{eq:inv-diff})}) \nonumber\\
&\le& \eta_2 \left(\|\widehat{{\cal K}}(\rho)^{-1}\|_2 + 
\|M^{-1} - \widehat{{\cal K}}(\rho)^{-1}\|_2 
\right) +  \Vert \widehat{ \mathcal{K} \ast f}(\rho) \Vert_2 \cdot \frac{\delta}{3}. \ \  \quad (\textrm{using (\ref{eq:inv-diff})}) \nonumber\\
&\le& \sigma_{\min}^{-1} \cdot \eta_2 + {\frac \delta 3} \cdot \eta_2 + 
\Vert \widehat{ \mathcal{K} \ast f}(\rho) \Vert_2 \cdot \frac{\delta}{3}. \label{eq:error-bound-3}
\end{eqnarray}

Next we use the following fact, which is an easy consequence of the triangle inequality and the assumption that $\rho$ is unitary: 
\begin{fact}~\label{fact:unitary}
Let $\rho: \mathbb{S}_n \rightarrow \mathbb{C}^{m \times m}$ be a unitary representation and let $g: \mathbb{S}_n \rightarrow \mathbb{R}^+$. Then we have that $\Vert \widehat{g}(\rho) \Vert_2 \le \Vert g \Vert_1$.
\end{fact}
Combining this fact with (\ref{eq:error-bound-3}) and (\ref{eq:eta-error-bounds}), since $\Vert {\cal K} \ast f \Vert_1 = 1$, we get that 
\[
\Vert M_1^{-1} \cdot M_2 - \widehat{f}(\rho) \Vert_2  \leq \sigma_{\min}^{-1} \cdot \eta_2 +  {\frac \delta 3} \cdot \eta_2 +  \frac{\delta}{3} \le \frac{\delta}{4} + {\frac \delta 3} + \frac{\delta}{3} <\delta. 
\]
This concludes the proof of ~\Cref{clm:representation}.
\end{proof}

With~\Cref{clm:representation} in hand we are ready to prove~\Cref{thm:min-sing-val}:

\medskip
\noindent {\emph Proof of~\Cref{thm:min-sing-val}.}
Let $\tau_{\lambda_{\mathsf{hook},\ell}}$ be the permutation representation corresponding to the partition $\lambda_{\mathsf{hook},\ell}$; for conciseness we subsequently write $\rho$ for $\tau_{\lambda_{\mathsf{hook},\ell}}$.  \Cref{def:replambda} immediately gives that the dimension of $\rho$ is $\binom{n}{\ell}$. Observe that $\rho$ is a unitary representation.
Let $\sigma_{\min}$ denote the smallest singular value of $\widehat{\mathcal{K}}(\rho)$;  applying~\Cref{clm:representation}, we get an algorithm running in time $\mathsf{poly}(\binom{n}{\ell}, \sigma_{\min}^{-1}, \delta)$ which outputs a matrix $M_{f,\rho}$ such that $\Vert M_{f,\rho} - \widehat{f}(\rho) \Vert \le \delta$. Next, we observe that the Young tableaux corresponding to the partition $\lambda_{\mathsf{hook},\ell}$ (which, recalling~\Cref{def:replambda}, index the rows and columns of $\rho(\cdot)$) correspond precisely to ordered $t$-tuples of distinct entries of $[n]$. If $\mathsf{Y}_{\lambda_{\mathsf{hook},\ell}, i} = \ol{i}$ and $\mathsf{Y}_{\lambda_{\mathsf{hook},\ell}, j} = \ol{j}$, then it follows that
\[
\widehat{f}(\rho) (i,j) = \sum_{\sigma \in \mathbb{S}_n} f(\sigma) \cdot \Ind[f(i_1) = j_1 \text{~and~} \cdots
\text{~and~} 
f(i_\ell) = j_\ell)],
\]
which is the $(\ol{i},\ol{j})$-marginal of $f$ as desired; so the output of the algorithm is $M_{f,\rho}(\ol{i},\ol{j})$.

To finish the correctness argument it remains only to argue that $\sigma^{-1}_{\min}$ is at most $\poly(\sigma^{-1}_{\min, \mathsf{Up}(\lambda_{\mathsf{hook},\ell})}).$ To see that this is indeed, the case, we observe that by~\Cref{thm:James-submodule}, the permutation representation $\tau_{\lambda_{\mathsf{hook},\ell}}$ block diagonalizes into a direct sum of irreducible representations $\rho_{\mu}$ where each $\mu$ belongs to $\mathsf{Up}(\lambda_{\mathsf{hook},\ell})$. This finishes the proof of~\Cref{thm:min-sing-val}. \qed

\subsection{Efficient samplability of our noise distributions} \label{sec:samplability}

In order to apply~\Cref{thm:min-sing-val} to a particular noise distribution ${\cal K}$ we need to confirm that ${\cal K}$ is efficiently samplable; we now do this for each of the three noise models that we consider.  It is immediate from the definition that it is straightforward (given $\ol{p}$ ) to efficiently generate a random $\bsigma$ drawn from the symmetric noise distribution ${\cal S}_{\ol{p}}$, and the same is true for the heat kernel noise distribution ${\cal H}_t$.

For the generalized Mallows model ${\cal M}_\theta$, the characterization $\Pr_{\bsigma \sim {\cal M}_{\theta}}[\bsigma = \pi] = e^{-\theta d(\pi,e)}/Z(\theta)$ given earlier does not directly yield an efficient sampling algorithm, since it may be hard to compute or approximate the normalizing factor $Z(\theta) = \sum_{\pi \in \mathbb{S}_n} e^{-\theta d(\pi,e)}.$  Instead, we recall (see e.g.~Section~2.1 of \cite{DSC98}) that the Metropolis algorithm can be used to efficiently perform a random walk on $\mathbb{S}_n$ whose unique stationary distribution is the generalized Mallows distribution ${\cal M}_{\theta}$.  (Each step of the random walk can be carried out efficiently because it is computationally easy to compute the Cayley distance between two permutations:  if $\pi$ is the permutation that brings $\sigma$ to $\tau$, then the Cayley distance $d(\sigma,\tau)$ is $n-\cycles(\pi)$ where $\cycles(\pi)$ is the number of cycles in $\pi$.)  It is known (see e.g. Theorem~2 of \cite{DH92}) that this random walk has rapid convergence, and consequently it is indeed possible to sample efficiently from ${\cal M}_\theta$ (up to an exponentially small statistical distance which can be ignored in our applications since our algorithms use a sub-exponential number of samples).


\section{Representations of symmetric noise}~\label{sec:symmetric}

In this section we establish lower bounds on the smallest singular value for the relevant matrices corresponding to ``symmetric noise'' ${\cal S}_{\ol{p}}$ on $\mathbb{S}_n$. In more detail,  the main result of this section is the following lower bound:

\begin{lemma} \label{lem:lower-bound-symmetric}
Let $\ell \in \{1,\dots,n\}$ and let $\ol{p}=(p_0,\dots,p_n) \in \Delta^n$ (i.e.~$\ol{p}$ is a non-negative vector whose entries sum to 1) which is such that
\[
\sum_{j=0}^{n-\ell} p_j \geq \kappa.
\]  Then (recalling~\Cref{eq:sigma-min}) we have that
\begin{equation} \label{eq:sigma-min-symmetric-rep1}
\sigma_{\min,\mathsf{Up}(\lambda_{\mathsf{hook},\ell}),{\cal S}_{\ol{p}}} \geq {\frac {\kappa} {n^\ell}}.
\end{equation}
\end{lemma}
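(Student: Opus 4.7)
\medskip

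\noindent\textbf{Proof plan.} The plan is to exploit the fact that the symmetric noise distribution ${\cal S}_{\ol{p}}$ is a \emph{class function} on $\mathbb{S}_n$: indeed, both the choice of the subset $\bA$ (uniform over size-$\bj$ subsets) and the choice of $\bpi \sim \mathbb{U}_{\bA}$ are invariant under conjugation, so ${\cal S}_{\ol{p}}(\pi)$ depends only on the cycle type of $\pi$. By Schur's lemma, for every irreducible $\rho_\mu$ the Fourier coefficient $\widehat{{\cal S}_{\ol{p}}}(\rho_\mu)$ is a scalar multiple of the identity, $c_\mu \, I_{d_\mu}$, where
\[
c_\mu \;=\; \frac{1}{d_\mu}\,\E_{\bpi \sim {\cal S}_{\ol{p}}}\!\big[\chi_\mu(\bpi)\big]
\;=\; \frac{1}{d_\mu}\sum_{j=0}^{n} p_j \cdot \E_{\bpi \sim \mathbb{U}_{[j]}}\!\big[\chi_\mu(\bpi)\big],
\]
since $\chi_\mu$ is a class function. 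The smallest singular value of $\widehat{{\cal S}_{\ol{p}}}(\rho_\mu)$ is therefore exactly $|c_\mu|$, and the task reduces to giving a lower bound on $c_\mu$.

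\medskip

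The next step is to identify the inner expectation as a non-negative integer with a clean combinatorial meaning. The quantity $\E_{\bpi \sim \mathbb{U}_{[j]}}[\chi_\mu(\bpi)]$ is the average of $\chi_\mu$ over the subgroup $\mathbb{S}_j \leq \mathbb{S}_n$ (acting on $[j]$ and fixing $[n]\setminus[j]$), which by character orthogonality equals the multiplicity of the trivial representation of $\mathbb{S}_j$ in $\rho_\mu\!\uhr_{\mathbb{S}_j}$. By Frobenius reciprocity this equals the multiplicity of $\rho_\mu$ in $\mathrm{Ind}_{\mathbb{S}_j}^{\mathbb{S}_n}(\mathbf{1})$, and since $\mathbb{S}_j$ is the Young subgroup associated with the composition $(j, 1, 1, \ldots, 1)$ (with $n-j$ ones), the induced representation is exactly the permutation module $\tau_{\lambda_{\mathsf{hook},\, n-j}}$. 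By Young's rule (a consequence of~\Cref{thm:James-submodule}) this multiplicity is the Kostka number $K_{\mu,\lambda_{\mathsf{hook},\, n-j}}$, which is a non-negative integer and is at least $1$ whenever $\mu \rhd \lambda_{\mathsf{hook},\, n-j}$.

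\medskip

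It remains to verify two combinatorial facts. First: for every $\mu \in \mathsf{Up}(\lambda_{\mathsf{hook},\ell})$ and every $j \in \{0,1,\dots,n-\ell\}$, we have $\mu \rhd \lambda_{\mathsf{hook},\, n-j}$, so $K_{\mu,\lambda_{\mathsf{hook},\, n-j}} \geq 1$. This is a direct dominance check: the hypothesis $\mu_1 \geq n-\ell \geq j$ handles the $i=1$ inequality, and for $i \geq 2$ one uses either that $\mu$ has at least $i$ nonzero parts (so $\mu_1 + \cdots + \mu_i \geq (n-\ell) + (i-1) \geq j + i - 1$) or that $\mu$ has fewer than $i$ parts (so the partial sum is already $n$, which dominates $j + (i-1)$ throughout the relevant range $i \leq n-j+1$). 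Second: $d_\mu \leq n^\ell$ for every $\mu \in \mathsf{Up}(\lambda_{\mathsf{hook},\ell})$; since such a $\mu$ has at most $\ell$ boxes outside its first row, any standard Young tableau of shape $\mu$ is determined by choosing which of the $n$ entries appear below the first row and how to arrange them, giving $d_\mu \leq \binom{n}{n-\mu_1}\cdot (n-\mu_1)! \leq n^\ell$.

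\medskip

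Putting the pieces together, since every summand $p_j K_{\mu, \lambda_{\mathsf{hook}, n-j}}$ is non-negative and each term with $j \leq n-\ell$ contributes at least $p_j \cdot 1$,
\[
c_\mu \;\geq\; \frac{1}{d_\mu}\sum_{j=0}^{n-\ell} p_j \cdot K_{\mu,\lambda_{\mathsf{hook},\, n-j}} \;\geq\; \frac{1}{d_\mu}\sum_{j=0}^{n-\ell} p_j \;\geq\; \frac{\kappa}{d_\mu} \;\geq\; \frac{\kappa}{n^\ell},
\]
which is exactly~\eqref{eq:sigma-min-symmetric-rep1}. I expect the main (and essentially only non-routine) obstacle to be the representation-theoretic identification of $\E_{\bpi \sim \mathbb{U}_{[j]}}[\chi_\mu(\bpi)]$ with the Kostka number $K_{\mu,\lambda_{\mathsf{hook},\, n-j}}$ via Frobenius reciprocity and Young's rule; the dominance check and the dimension bound are then short combinatorial arguments.
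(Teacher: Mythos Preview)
Your proof is correct and follows essentially the same architecture as the paper's: show that ${\cal S}_{\ol{p}}$ is a class function so that $\widehat{{\cal S}_{\ol{p}}}(\rho_\mu)=c_\mu\,\mathsf{Id}$, identify $d_\mu c_\mu$ as a nonnegative integer for each $j$, lower bound it by $1$ whenever $j\le n-\ell$, and upper bound $d_\mu\le n^\ell$.

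The one noteworthy difference is in the representation-theoretic identification of $\E_{\bpi\sim\mathbb{U}_{[j]}}[\chi_\mu(\bpi)]$. The paper restricts $\rho_\mu$ to $\mathbb{S}_j$ and applies the branching rule (\Cref{thm:branch}) to obtain $\paths(\mathsf{Triv}_j,\mu)$, the number of upward paths from the one-row partition $(j)$ to $\mu$ in Young's lattice; the lower bound ``$\ge 1$ when $j\le n-\ell$'' then follows because $\mu_1\ge n-\ell\ge j$ ensures $(j)\subseteq\mu$. You instead go through Frobenius reciprocity and Young's rule, arriving at the Kostka number $K_{\mu,(j,1^{n-j})}$, and then check dominance directly. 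These two integers are in fact equal (the $j$ ones in a semistandard tableau of content $(j,1^{n-j})$ must fill the first $j$ cells of row~$1$, and the remaining distinct entries give exactly a standard skew tableau of shape $\mu/(j)$), so the two routes are dual rather than genuinely different. Your dominance verification is a little more explicit than the paper's one-line containment observation; conversely, the paper's dimension bound goes through $\dim(\rho_\mu)\le\dim(\tau_\mu)=\binom{n}{\mu_1,\dots,\mu_r}\le n!/\mu_1!$, which is the same inequality you wrote as $\binom{n}{n-\mu_1}(n-\mu_1)!$.
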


\subsection{Setup}
To analyze the smallest singular value of $\widehat{{\cal S}_{\ol{p}}}(\rho_\mu)$ (as required by the definition of $\sigma_{\min,\mathsf{Up}(\lambda_{\mathsf{hook},\ell}),{\cal S}_{\ol{p}}}$), we start by observing that symmetric noise is a \emph{class function} (meaning that it is invariant under conjugation, see~\Cref{def:class-function}):

\begin{claim}~\label{clm:inv-conj-symmetric}
For any vector $\ol{p}=({p}_0, \ldots, {p}_n) \in \Delta^n$, the distribution ${\cal S}_{\ol{p}}$ (viewed as a function from $\mathbb{S}_n$ to $[0,1]$) is a class function ( i.e.~${\cal S}_{\ol{p}}(\pi)=
{\cal S}_{\ol{p}}(\tau \pi \tau^{-1})$ for every $\pi,\tau \in \mathbb{S}_n$).
\end{claim}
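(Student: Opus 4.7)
The plan is to give a direct distributional argument: show that applying conjugation by $\tau$ to a draw $\bpi \sim {\cal S}_{\ol{p}}$ yields another valid draw from ${\cal S}_{\ol{p}}$. It suffices to track how conjugation interacts with each of the three steps in the sampling process.

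First I would fix $\tau \in \mathbb{S}_n$ arbitrary and consider the random variable $\tau \bpi \tau^{-1}$ when $\bpi$ is drawn from ${\cal S}_{\ol p}$. Let $\bj$ and $\bA$ be the auxiliary random variables from Steps~1 and~2 of the sampling definition, so that conditioned on $\bA$, the permutation $\bpi$ is uniform on $\mathbb{S}_{\bA}$ (i.e.\ permutes $\bA$ arbitrarily and fixes $[n]\setminus \bA$). The key observation is that for any fixed subset $A \subseteq [n]$, the conjugate $\tau \pi \tau^{-1}$ of a permutation $\pi \in \mathbb{S}_A$ is an element of $\mathbb{S}_{\tau(A)}$: indeed, $\tau \pi \tau^{-1}$ fixes every element outside $\tau(A)$, and the map $\pi \mapsto \tau \pi \tau^{-1}$ is a bijection from $\mathbb{S}_A$ to $\mathbb{S}_{\tau(A)}$. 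Hence a uniform draw from $\mathbb{S}_{\bA}$ becomes, after conjugation, a uniform draw from $\mathbb{S}_{\tau(\bA)}$.

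Next I would handle the randomness of $\bA$ itself. Since $\tau$ is a bijection on $[n]$, if $\bA$ is uniformly distributed over all size-$\bj$ subsets of $[n]$, then $\tau(\bA)$ is also uniformly distributed over all size-$\bj$ subsets of $[n]$. Combined with the previous step, this shows that the joint distribution of $(\bj, \tau(\bA), \tau \bpi \tau^{-1})$ is identical to the joint distribution of $(\bj, \bA, \bpi)$ obtained from the sampling procedure defining ${\cal S}_{\ol p}$. In particular, $\tau \bpi \tau^{-1}$ has the same marginal distribution as $\bpi$, which is the statement ${\cal S}_{\ol p}(\pi) = {\cal S}_{\ol p}(\tau \pi \tau^{-1})$ for all $\pi$.

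I do not foresee any real obstacle here — the claim is essentially a symmetry check, and the only thing to verify cleanly is that conjugation by $\tau$ commutes with the randomization over both the subset $\bA$ and the permutation on that subset, which follows from the fact that $\tau$ acts as a bijection at both levels. (As an aside that may be useful later, one can also read off the explicit formula ${\cal S}_{\ol p}(\pi) = \sum_{j \geq m} p_j \cdot \binom{n-m}{j-m}/\bigl(\binom{n}{j} \cdot j!\bigr)$, where $m$ is the number of non-fixed points of $\pi$, which depends only on the cycle type of $\pi$ and thus immediately yields the class-function property; but the distributional argument above is the cleanest route.)
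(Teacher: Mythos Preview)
Your proof is correct and rests on the same core observation as the paper's: conjugation by $\tau$ sends $\mathbb{U}_A$ to $\mathbb{U}_{\tau(A)}$, and the uniform distribution over size-$j$ subsets of $[n]$ is invariant under $A \mapsto \tau(A)$. The paper's proof first reduces by linearity to a single $j$, then further reduces to conjugation by a single transposition $\tau=(1,2)$, and finishes with a short case analysis on $|A \cap \{1,2\}|$. Your version skips both reductions and handles arbitrary $\tau$ in one stroke via the bijection $\mathbb{S}_A \to \mathbb{S}_{\tau(A)}$; this is cleaner and slightly more general in presentation, though the underlying idea is identical. Your parenthetical explicit formula (depending only on the number of non-fixed points) is also a valid alternative one-line proof.
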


\begin{proof}

For $0 \le j \le n$, let $\ol{e}_j$ denote the vector in $\R^{n+1}$ which has a 1 in the $j$-th position and a 0 in every other position. 
By linearity, to prove~\Cref{clm:inv-conj-symmetric} it suffices to prove that ${\cal S}_{\ol{e}_j}$ is invariant under conjugation for every $j$; to establish this, it suffices to show that ${\cal S}_{\ol{e}_j}$ is invariant under conjugation by any transposition $\tau$. By symmetry, it suffices to consider the transposition $\tau=(1,2)$.

We observe that ${\cal S}_{\ol{e}_j}$ is a uniform average of $\mathbb{U}_A$ over all ${n \choose j}$ subsets $A$ of $[n]$ of size exactly $j$.  Now we consider two cases:  the first is that $|A \cap \{1,2\}|$ is 0 or 2. In this case it is easy to see that $\mathbb{U}_A$ does not change under conjugation by the transposition $(1,2)$. The remaining case is that $|A \cap \{1,2\}|=1$; in this case it is easy to see that conjugation by $(1,2)$ converts $\mathbb{U}_A$ into $\mathbb{U}_{A \Delta \{1,2\}}$.  Since the collection of size-$j$ sets $A$ with $A \cap \{1,2\}=\{1\}$ are in 1-1 correspondence with the collection of size-$j$ sets $A$ with $A \cap \{1,2\}=\{2\}$, it follows that ${\cal S}_{\ol{e}_j}$ is invariant under conjugation by $\tau=(1,2)$, and the proof is complete.
\ignore{
\gray{
Consequently, we just need to prove that $\tau \cdot \mathcal{S}_{\mathbf{j}} \cdot \tau^{-1} = \mathcal{S}_{\mathbf{j}}$. Because of symmetry, we can just assume that $\tau = (1,2)$. 

We now make the following two observations. 
\begin{enumerate}
\item If $|\mathcal{A} \cap \{1,2\}| = 0$ or $2$, then 
$\tau \cdot \mathbb{U}_{\mathcal{A}} \cdot \tau^{-1} = \mathbb{U}_{\mathcal{A}}$ -- in other words, $\mathbb{U}_{\mathcal{A}}$ does not change under conjugation by $\{1,2\}$. 
\item If $|\mathcal{A} \cap \{1,2\}|= 1$, then $\tau \cdot \mathbb{U}_{\mathcal{A}} \cdot \tau^{-1} = \mathbb{U}_{\mathcal{A} \oplus \{1,2\}}$. 
\end{enumerate}
Combining the two items along with the fact that the operator $\oplus \{1,2\}$ is an involution on sets, we obtain that $\mathcal{S}_{\mathbf{j}}$ is invariant under conjugation by $(1,2)$. This finishes the proof. 
}
}
\end{proof}

Before stating the next lemma we remind the reader that for partitions $\mu \vdash m, \lambda \vdash n$ where $m\leq n$, we write $\paths(\mu,\lambda)$ to denote the number of paths from $\mu$ to $\lambda$ in Young's lattice (see~\Cref{sec:repsym} and~\Cref{thm:branch}). We write $\mathsf{Triv}_j$ to denote the trivial partition $(j)$ of $j$.

\begin{lemma}~\label{lem:eigen-sym}
Let $\lambda \vdash n$ and let $\rho_{\lambda}$ be the corresponding irreducible representation of $\mathbb{S}_n$. Given $\ol{p}=(p_0,\dots,p_n) \in \Delta^n$, we have that
\begin{equation} \label{eq:cplambda}
\widehat{\mathcal{S}_{\ol{p}}}(\rho_{\lambda}) =\mathsf{c}(\ol{p},\lambda) \cdot \mathsf{Id} \textrm{ where } \ \ \mathsf{c}(\ol{p},\lambda):=\frac{ \sum_{j=0}^n p_j \cdot \paths(\mathsf{Triv}_j, \lambda)}{\mathsf{dim}(\rho_{\lambda})}. 
\end{equation}
\end{lemma}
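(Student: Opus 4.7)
The plan is to prove the two parts separately: first show that $\widehat{\mathcal{S}_{\ol{p}}}(\rho_\lambda)$ is a scalar multiple of the identity, and then compute that scalar via the trace.

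For the first part, by Claim~\ref{clm:inv-conj-symmetric}, the distribution $\mathcal{S}_{\ol{p}}$ is a class function on $\mathbb{S}_n$. A standard consequence of Schur's lemma is that the Fourier coefficient of a class function at any irreducible representation is a scalar multiple of the identity matrix (the scalar commutes with every $\rho_\lambda(\tau)$, and $\rho_\lambda$ is irreducible). Hence we may write $\widehat{\mathcal{S}_{\ol{p}}}(\rho_\lambda) = \mathsf{c}(\ol{p},\lambda)\cdot \mathsf{Id}$ for some scalar, and taking traces gives
\[
\mathsf{c}(\ol{p},\lambda) \;=\; \frac{1}{\dim(\rho_\lambda)}\sum_{\pi\in \mathbb{S}_n} \mathcal{S}_{\ol{p}}(\pi)\,\chi_\lambda(\pi),
\]
where $\chi_\lambda$ is the character of $\rho_\lambda$.

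For the second part, I would use linearity in $\ol{p}$ to reduce to showing, for each $0\leq j\leq n$,
\[
\sum_{\pi\in \mathbb{S}_n} \mathcal{S}_{\ol{e}_j}(\pi)\,\chi_\lambda(\pi) \;=\; \paths(\mathsf{Triv}_j,\lambda).
\]
Unpacking the definition of $\mathcal{S}_{\ol{e}_j}$ as a uniform average, over all $j$-subsets $A\subseteq [n]$, of the uniform distribution $\mathbb{U}_A$ on the subgroup $\mathbb{S}_A\leq \mathbb{S}_n$, the inner sum becomes
\[
\sum_{\pi} \mathcal{S}_{\ol{e}_j}(\pi)\,\chi_\lambda(\pi) \;=\; \frac{1}{\binom{n}{j}}\sum_{A:\,|A|=j} \frac{1}{j!}\sum_{\pi\in \mathbb{S}_A}\chi_\lambda(\pi).
\]
Because $\chi_\lambda$ is a class function on $\mathbb{S}_n$ and any two copies $\mathbb{S}_A, \mathbb{S}_{A'}$ (with $|A|=|A'|=j$) are conjugate in $\mathbb{S}_n$, the inner sum over $\pi\in \mathbb{S}_A$ is independent of $A$, so the $A$-average collapses.

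This reduces the computation to a single restriction-of-character question: what is $\frac{1}{j!}\sum_{\pi\in \mathbb{S}_j}\chi_\lambda(\pi)$ (taking $\mathbb{S}_j$ to be the standard subgroup fixing $\{j+1,\dots,n\}$)? By character orthogonality for $\mathbb{S}_j$, this quantity equals the multiplicity of the trivial representation of $\mathbb{S}_j$ in $\rho_\lambda\!\uhr_{\mathbb{S}_j}$. Iterating the Young branching rule (Theorem~\ref{thm:branch}) $n-j$ times, the restriction decomposes as $\bigoplus_{\mu\vdash j}\paths(\mu,\lambda)\,\rho_\mu$, and the multiplicity of the trivial representation $\rho_{\mathsf{Triv}_j}$ is exactly $\paths(\mathsf{Triv}_j,\lambda)$. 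Assembling the pieces gives the claimed formula for $\mathsf{c}(\ol{p},\lambda)$.

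I do not expect a real obstacle here; the only subtlety is lining up the two conventions (the restriction of $\rho_\lambda$ from $\mathbb{S}_n$ to an arbitrary $\mathbb{S}_A$ versus the standard $\mathbb{S}_j$), which is handled by the conjugation-invariance of $\chi_\lambda$. Everything else is a direct application of Schur's lemma, linearity, character orthogonality, and the branching rule, each of which is already available from Section~\ref{sec:repsym}.
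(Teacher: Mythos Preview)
Your proposal is correct and follows essentially the same route as the paper: invoke Claim~\ref{clm:inv-conj-symmetric} together with the class-function version of Schur's lemma (Lemma~\ref{lem:class-func}) to reduce to computing $\sum_\pi \mathcal{S}_{\ol{p}}(\pi)\chi_\lambda(\pi)$, expand via the definition of $\mathcal{S}_{\ol{p}}$, and then use the branching rule (Theorem~\ref{thm:branch}) plus orthogonality of characters to identify $\frac{1}{j!}\sum_{\pi\in\mathbb{S}_A}\chi_\lambda(\pi)$ with $\paths(\mathsf{Triv}_j,\lambda)$. The only cosmetic difference is that you collapse the average over $A$ via conjugacy of the subgroups $\mathbb{S}_A$, whereas the paper simply computes the inner expectation for each $A$ directly; either way the answer is independent of $A$.
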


\begin{proof}
By~\Cref{clm:inv-conj-symmetric}, we have that ${\cal S}_{\ol{p}}$ is a class function, so we may apply~\Cref{lem:class-func} to conclude that
\[
\widehat{\mathcal{S}_{\ol{p}}}(\rho_{\lambda}) = \mathsf{c}(\ol{p},\lambda) \cdot \mathsf{Id},
\]
where 
\[
\mathsf{c}(\ol{p},\lambda) = \frac{1}{\mathsf{dim}(\rho_{\lambda})} \cdot \bigg( \sum_{\sigma \in \mathbb{S}_n} \mathcal{S}_{\ol{p}}(\sigma) \cdot \chi_{{\lambda}} (\sigma) \bigg)
\]
and $\chi_\lambda$ denotes the character of the irreducible representation $\rho_\lambda$. Thus it remains to show that $ \sum_{\sigma \in \mathbb{S}_n} \mathcal{S}_{\ol{p}}(\sigma) \cdot \chi_{{\lambda}} (\sigma) $ is equal to the numerator of~\Cref{eq:cplambda}. By definition of ${\cal S}_{\ol{p}}$, we have that
\begin{eqnarray}~\label{eq:split-1}
\sum_{\sigma \in \mathbb{S}_n} \mathcal{S}_{\ol{p}}(\sigma) \cdot \chi_{{\lambda}} (\sigma)  &=& \sum_{0 \le j \le n} \ol{p}_j \mathop{\mathbf{E}}_{{\bf {\cal A}} : |{\bf {\cal A}}|=j} \mathop{\mathbf{E}}_{\bsigma \in \mathbb{U}_{\mathcal{A}}} \chi_{{\lambda}}(\bsigma).
\end{eqnarray}
We proceed to analyze $\mathop{\mathbf{E}}_{\bsigma \in \mathbb{U}_{\mathcal{A}}} \chi_{{\lambda}}(\bsigma)   $.
 Let $\rho_{\lambda}^{\mathcal{A}}$ denote the representation $\rho_{\lambda}$ restricted to the subgroup $\mathbb{S}_{\mathcal{A}}$. 
By Theorem~\ref{thm:branch}, the representation $\rho_{\lambda}^{\mathcal{A}}$ splits as follows: 
\[
\rho_{\lambda}^{\mathcal{A}} = \mathop{\oplus}_{\mu \vdash |\mathcal{A}| } \paths(\mu, \lambda) \rho_{\mu}.
\]
Thus, we have that 
\[
\mathop{\mathbf{E}}_{\bsigma \in \mathbb{U}_{\mathcal{A}}} \chi_{{\lambda}}(\bsigma) = \sum_{\mu \vdash |\mathcal{A}|} \paths(\mu, \lambda) \mathop{\mathbf{E}}_{\bsigma \in \mathbb{U}_{\mathcal{A}}}\chi_{\mu}(\bsigma) =\paths (\mathsf{Triv}_{|\mathcal{A}|}, \lambda).  
\]
The second equality follows from that fact that if $\mu$ is a non-trivial partition of $|{\cal A}|$ then $\mathop{\mathbf{E}}_{\bsigma \in \mathbb{U}_{\mathcal{A}}}\chi_{\mu}(\bsigma) =0$, while  if $\mu = \mathsf{Triv}_{|\mathcal{A}|}$ then $\mathop{\mathbf{E}}_{\bsigma \in \mathbb{U}_{\mathcal{A}}}\chi_{\mu}(\bsigma) =1$. 
Plugging this into (\ref{eq:split-1}) we get that
$ \sum_{\sigma \in \mathbb{S}_n} \mathcal{S}_{\ol{p}}(\sigma) \cdot \chi_{{\lambda}} (\sigma) = \sum_{j=0}^n p_j \cdot \paths(\mathsf{Triv}_j, \lambda)$, and the lemma is proved. 
\end{proof}

\subsection{Proof of~\Cref{lem:lower-bound-symmetric}} \label{sec:pf-lower-bound-sym}

We recall from~\Cref{eq:sigma-min} that
\[
\sigma_{\min, \mathsf{Up}(\lambda_{\mathsf{hook},\ell}),{\cal S}_{\ol{p}}} := \min_{\mu \in \mathsf{Up}(\lambda_{\mathsf{hook},\ell})} \sigma_{\min} (\widehat{{\cal S}_{\ol{p}}}(\rho_{\mu})).
\]
Fix any $\mu \in \mathsf{Up}(\lambda_{\mathsf{hook},\ell})$, so $\mu$ is a partition of $n$ of the form $(n-\ell',\ell_2,\dots,\ell_r)$ where $\ell' \leq \ell.$
By~\Cref{lem:eigen-sym} we have that the smallest singular value of $\widehat{{\cal S}_{\ol{p}}}(\rho_{\mu})$ is
\begin{equation} \label{eq:cpmu}
\mathsf{c}(\ol{p},\mu):=\frac{ \sum_{j=0}^n p_j \cdot \paths(\mathsf{Triv}_j, \mu)}{\mathsf{dim}(\rho_{\mu})}.
\end{equation}
To upper bound $\mathsf{dim}(\rho_\mu)$, we observe that
\[
\dim(\rho_\mu) \leq \dim(\tau_\mu) = 
{n \choose n-\ell',\ell_2,\dots,\ell_r}
\leq {\frac {n!}{(n-\ell')!}} \leq n^{\ell'} \leq n^\ell,
\]
where the first inequality is by~\Cref{thm:James-submodule}. For the numerator, we observe that if $j \leq n-\ell$ then there is at least one path in the Young lattice from $\mathsf{Triv}_j$ to $\mu$, so under the assumptions of~\Cref{lem:lower-bound-symmetric} the numerator of~\Cref{eq:cpmu} is at least $\kappa.$ This proves the lemma.
\qed


\section{Representations of heat kernel noise}~\label{sec:heat}

In this section, analogous to~\Cref{sec:symmetric}, we  lower bound~\Cref{eq:sigma-min} when the noise distribution ${\cal K}$ is ${\cal H}_t$, corresponding to ``heat kernel noise'' at temperature parameter $t$:

\begin{lemma} \label{lem:lower-bound-heat}
Let $t\geq 1$ and let $\ell \in \{1,\dots,cn\}$ for some suitably small universal constant $c>0$.  Then we have that
\begin{equation} \label{eq:sigma-min-symmetric-rep-2}
\sigma_{\min,\mathsf{Up}(\lambda_{\mathsf{hook},\ell}),{\cal H}_{t}} \geq {\frac 1 2} \cdot e^{-O(\ell t)/n}.
\end{equation}
\end{lemma}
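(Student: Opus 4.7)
The plan is to reduce the computation of $\sigma_{\min}(\widehat{\mathcal{H}_t}(\rho_\mu))$ to a classical content-sum estimate for Young diagrams. The key first observation is that $\mathcal{H}_t$ is a class function on $\mathbb{S}_n$: the set of transpositions forms a single conjugacy class, so the uniform distribution $U_T$ on transpositions is a class function, hence each convolution power $U_T^{\ast k}$ is one as well, and finally $\mathcal{H}_t = e^{-t}\sum_{k \ge 0} \frac{t^k}{k!} U_T^{\ast k}$ is a class function. Applying the class-function formula used in the proof of~\Cref{lem:eigen-sym} gives
$$
\widehat{U_T}(\rho_\mu) \;=\; \frac{\chi_\mu(\tau)}{\dim(\rho_\mu)}\cdot \mathsf{Id},
$$
where $\tau$ denotes any transposition. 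Since $\widehat{f\ast g} = \widehat{f}\cdot\widehat{g}$ and the Fourier transform is linear, this yields
$$
\widehat{\mathcal{H}_t}(\rho_\mu) \;=\; e^{-t}\sum_{k\ge 0} \frac{t^k}{k!}\,\widehat{U_T}(\rho_\mu)^k \;=\; \exp\!\Bigl(-t\bigl(1-\tfrac{\chi_\mu(\tau)}{\dim(\rho_\mu)}\bigr)\Bigr) \cdot \mathsf{Id}.
$$
Thus $\sigma_{\min}(\widehat{\mathcal{H}_t}(\rho_\mu)) = \exp(-t(1-\chi_\mu(\tau)/\dim(\rho_\mu)))$, and the problem reduces to proving an $O(\ell/n)$ upper bound on $1 - \chi_\mu(\tau)/\dim(\rho_\mu)$ uniformly over $\mu \in \mathsf{Up}(\lambda_{\mathsf{hook},\ell})$.

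Next I would invoke the classical Frobenius / Jucys--Murphy ``content sum'' identity
$$
\frac{\chi_\mu(\tau)}{\dim(\rho_\mu)} \;=\; \frac{2}{n(n-1)}\sum_{(i,j)\in \mu}(j-i),
$$
which follows from the fact that the central element $\sum_{\tau}\tau$ (summed over transpositions) acts on $\rho_\mu$ as multiplication by $\sum_{(i,j)\in \mu}(j-i)$. Every $\mu \in \mathsf{Up}(\lambda_{\mathsf{hook},\ell})$ satisfies $\mu_1 \ge n-\ell$; writing $\ell' := n-\mu_1 \le \ell$, we have $\sum_{i\ge 2}\mu_i = \ell'$ and in particular at most $\ell'$ nonempty rows beyond the first. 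Splitting the content sum into its first-row contribution $\binom{\mu_1}{2}$ and a ``tail'' over rows $i \ge 2$, a direct computation gives
$$
1 - \frac{\chi_\mu(\tau)}{\dim(\rho_\mu)} \;=\; \frac{\ell'(2n-\ell'-1)}{n(n-1)} \;-\; \frac{1}{n(n-1)}\sum_{i\ge 2}\mu_i(\mu_i+1-2i).
$$
The leading term is at most $2\ell/n$. For the tail, using $\sum_{i \ge 2}\mu_i \le \ell'$ together with the fact that every nonempty row has $i \le \ell'+1$, a crude triangle-inequality bound gives $\bigl|\sum_{i\ge 2}\mu_i(\mu_i+1-2i)\bigr| = O(\ell^2)$.

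Combining these estimates yields $1 - \chi_\mu(\tau)/\dim(\rho_\mu) \le O(\ell/n) + O(\ell^2/n^2)$, which is $O(\ell/n)$ provided $\ell \le cn$ for a small enough universal constant $c$. Substituting back gives $\sigma_{\min}(\widehat{\mathcal{H}_t}(\rho_\mu)) \ge e^{-O(t\ell/n)}$ uniformly over $\mu \in \mathsf{Up}(\lambda_{\mathsf{hook},\ell})$, which, after absorbing constants (and noting that the bound is trivial when the exponent is close to $0$, which is where the harmless factor of $\tfrac{1}{2}$ can be inserted), gives (\ref{eq:sigma-min-symmetric-rep-2}). I expect the main technical care to lie in the tail estimate: individual terms $\mu_i(\mu_i+1-2i)$ can be of either sign and of quadratic size, so the argument must use the global constraint that the ``tail'' of $\mu$ contains only $\ell'$ cells in total (which forces both the $\mu_i$'s and the nonempty row indices $i$ to be controlled in aggregate) in order to keep this contribution smaller than the leading $\ell/n$ term.
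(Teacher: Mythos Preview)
Your proof is correct and follows essentially the same route as the paper: both reduce to the Frobenius/Diaconis--Shahshahani content-sum formula for $\chi_\mu(\tau)/\dim(\rho_\mu)$ and then bound the tail contribution from rows $i\ge 2$ using $\sum_{i\ge 2}\mu_i \le \ell$. The only differences are cosmetic: the paper works with the lazy step distribution $\trans$ (holding probability $1/n$) rather than your $U_T$, and instead of summing the Poisson series in closed form to $e^{-t(1-c)}$ it lower-bounds the series via the median of $\mathsf{Poi}(t)$---which is precisely where the stray factor $\tfrac12$ in the statement comes from.
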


\subsection{Setup}
Let $\trans: \mathbb{S}_n \to [0,1]$ be the following probability distribution over $\mathbb{S}_n$:
\[
\trans(\pi) = 
\begin{cases}
1/n   & \text{if $\pi$ is the identity,}\\
2/n^2 & \text{if $\pi$ is a transposition,}\\
0     & \text{otherwise.} 
\end{cases}
\]

Since $\trans(\pi)$ depends only on the cycle structure of $\pi$, the function $\trans(\cdot)$ is a class function.
Fix any $\mu \in \mathsf{Up}(\lambda_{\mathsf{hook},\ell})$, so $\mu$ is a partition of $n$ of the form $(\mu_1,\dots,\mu_r)$ where $\mu_1 \geq n-\ell.$
As in the proof of~\Cref{lem:eigen-sym}
 we may apply~\Cref{lem:class-func} to conclude that
\[
\widehat{\trans}(\rho_{\mu}) = \mathsf{c}_{\trans,\mu} \cdot \mathsf{Id}
\]
for some constant $\mathsf{c}_{\trans,\mu}$.
By Corollary~1 of Diaconis and Shahshahani \cite{DS81}, we have that
\begin{equation} \label{eq:DS}
\mathsf{c}_{\trans,\mu} = 
{\frac 1 n} + {\frac {n-1} n} \cdot {\frac {\chi_{\mu}(\tau)}{\dim(\rho_\mu)}},
\end{equation}
where as before $\chi_\mu$ denotes the character of the irreducible representation $\rho_\mu$ and $\tau$ is any transposition. \cite{DS81} further shows that for $\rho_\mu$ an irreducible representation of $\mathbb{S}_n$ with $\mu$ as above and $\tau$ any transposition, it holds that
\begin{equation} 
{\frac {\chi_{\mu}(\tau)}{\dim(\rho_\mu)}} = {\frac 1 {n(n-1)}} \cdot \sum_{j=1}^{r} (\mu_j - j)(\mu_j - j + 1) - j(j-1). \label{eq:DS2} 
\end{equation}
In our setting we have
\begin{equation}
(\ref{eq:DS2}) \geq
{\frac {(n-\ell)(n-\ell-1)}{n(n-1)}} + {\frac 1 {n(n-1)}} \sum_{j=2}^r (\mu_j-j)(\mu_j-j+1) - j(j-1). \label{eq:pickle}
\end{equation}
where the inequality holds because $\mu_1 \geq n-\ell.$  Now, we observe that for each summand in~\Cref{eq:pickle}, we have
\begin{align*}
(\mu_j-j)(\mu_j-j+1) - j(j-1) &= \mu_j^2 - \mu_j(2j-1)\\
& \geq -\mu_j(2j-1)\\
&\geq {\frac {-\ell}{j-1}} \cdot (2j-1) \geq - 3\ell.
\end{align*}
The second inequality above holds because $\mu_2 + \cdots + \mu_j \leq \ell$ and the $\mu_j$'s are non-increasing, so $\mu_j \leq {\frac {\ell}{j-1}}.$ Since $r-1\leq \ell$, this means that
\[
(\ref{eq:DS2}) \geq
{\frac {(n-\ell)(n-\ell-1)}{n(n-1)}} - {\frac {3\ell^2}{n(n-1)}} \geq 1 - {\frac {O(\ell)} n},
\]
and recalling~\Cref{eq:DS} we get that
\begin{equation} \label{eq:ctranslb}
1 \geq \mathsf{c}_{\trans,\mu} \geq 1 - {\frac {O(\ell)} n}.
\end{equation}

\subsection{Proof of~\Cref{lem:lower-bound-heat}}

As in~\Cref{sec:symmetric}  we recall from~\Cref{eq:sigma-min} that
\[
\sigma_{\min, \mathsf{Up}(\lambda_{\mathsf{hook},\ell}),{\cal H}_{t}} := \min_{\mu \in \mathsf{Up}(\lambda_{\mathsf{hook},\ell})} \sigma_{\min} (\widehat{{\cal H}_{t}}(\rho_{\mu})),
\]
Fix any $\mu \in \mathsf{Up}(\lambda_{\mathsf{hook},\ell})$ (so $\mu$ is a partition of $n$ of the form $(\mu_1,\dots,\mu_r)$ where $\mu_1 \geq n-\ell$). 
We recall that the function ${\cal H}_t: \mathbb{S}_n \to [0,1]$ is defined by
\[
{\cal H}_t = \sum_{j=0}^\infty
\Pr_{\bT \sim \mathsf{Poi}(t)}[\bT=j](\trans)^j,
\]
where ``$(\trans)^T$'' denotes $T$-fold convolution of $\trans$. Since convolution corresponds to multiplication of Fourier coefficients, this gives that
\begin{equation} \label{eq:heat}
\widehat{{\cal H}_t}(\rho_\mu)= \mathsf{c}(t,\mu) \cdot \mathsf{Id}, \text{~where~}
\mathsf{c}(t,\mu) := 
\sum_{j=0}^\infty \Pr_{\bT \sim \mathsf{Poi}(t)}[\bT=j] (\mathsf{c}_{\trans,\mu})^j.
\end{equation}
Recalling \cite{Choi94} that the median of the Poisson distribution $\mathsf{Poi}(t)$ is at most $t+1/3$, we get that 
\[
\mathsf{c}(t,\mu) \geq {\frac 1 2} \cdot (\mathsf{c}_{\trans,\mu})^{t+1/3} \geq {\frac 1 2} \cdot e^{-O(\ell t)/n},
\]
(where the second inequality uses $\ell \leq cn$ and $t \geq 1$),
and the lemma is proved. \qed

\section{Representations of Cayley-Mallows model noise}~\label{sec:generalized}

In this section we  lower bound~\Cref{eq:sigma-min} when the noise distribution ${\cal K}$ is ${\cal M}_\theta$, corresponding to the Cayley-Mallows noise model with parameter $\theta$:

\begin{lemma} \label{lem:lower-bound-mallows}
Let $\theta>0$, let $\ell \in \{1,\dots,n\}$, and let $\eta := \dist(\theta,\ell) = \min_{j \in \{1,\dots,\ell\}} \big|e^\theta - j \big|.$  Then (recalling~\Cref{eq:sigma-min}) we have that
\begin{equation} \label{eq:sigma-min-symmetric}
\sigma_{\min,\mathsf{Up}(\mu_{\mathsf{hook},\ell}),{\cal M}_{\theta}} \geq (2n)^{-\ell} \eta^{2 \sqrt{\ell}}.
\end{equation}
\end{lemma}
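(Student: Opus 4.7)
The plan is to follow the template of the proofs of Lemmas~\ref{lem:lower-bound-symmetric} and~\ref{lem:lower-bound-heat}. First, since the Cayley distance satisfies $d(\pi,e) = n - \cycles(\pi)$, the weight $\mathcal{M}_\theta(\pi) \propto e^{-\theta d(\pi,e)}$ depends only on the cycle type of $\pi$; hence $\mathcal{M}_\theta$ is a class function, and by~\Cref{lem:class-func} we have $\widehat{\mathcal{M}_\theta}(\rho_\mu) = \mathsf{c}(\theta,\mu) \cdot \mathsf{Id}$ for every $\mu \vdash n$, with $\mathsf{c}(\theta,\mu) = \dim(\rho_\mu)^{-1}\sum_{\pi} \mathcal{M}_\theta(\pi)\chi_\mu(\pi)$.

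To evaluate $\mathsf{c}(\theta,\mu)$ in closed form, I would set $x := e^\theta$ and invoke the classical content-product character identity for the Ewens measure, which states that
\[
\sum_{\pi \in \mathbb{S}_n} x^{\cycles(\pi)} \chi_\mu(\pi) \;=\; \dim(\rho_\mu)\prod_{(i,j)\in\mu}(x + j - i).
\]
This identity can be justified by observing that the sum of Jucys--Murphy elements $\sum_k J_k$ acts on $\rho_\mu$ by the sum-of-contents scalar, or by a short induction on $n$. Its $\mu=(n)$ case gives $Z(\theta) = e^{-\theta n}\prod_{k=0}^{n-1}(x+k)$, whence
\[
\mathsf{c}(\theta,\mu) \;=\; \frac{\prod_{(i,j)\in\mu}(x+j-i)}{\prod_{k=0}^{n-1}(x+k)}.
\]
For $\mu \in \mathsf{Up}(\lambda_{\mathsf{hook},\ell})$ we have $\mu_1 \geq n-\ell$, so the first-row contents $0,1,\ldots,\mu_1-1$ of the numerator cancel against the first $\mu_1$ factors of the denominator, leaving a ratio of two products each over $s := n - \mu_1 \leq \ell$ terms:
\[
\mathsf{c}(\theta,\mu) \;=\; \frac{\prod_{(i,j)\in\mu,\,i\ge 2}(x+j-i)}{\prod_{k=\mu_1}^{n-1}(x+k)}.
\]

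The remaining and main analytic step is to lower bound this reduced ratio. One may assume $\eta \leq 1/2$ (the complementary case $\eta > 1/2$ requires only a short separate analysis in which every factor is bounded away from zero by a small multiple of $\eta$). In this regime there is $j^* \in \{1,\ldots,\ell\}$ with $|x - j^*| = \eta$, and I would classify the numerator factors $|x+c|$ by content: (a) $c \geq 0$ gives $|x+c| \geq x > 1$; (b) $c = -v$ with $v \in \{1,\ldots,\ell\}$ and $|v - j^*| \geq 2$ gives $|x+c| \geq 3/2$; (c) $c = -(j^* \pm 1)$ (if applicable) gives $|x+c| = 1-\eta \geq \eta$; (d) $c = -j^*$ gives $|x+c| = \eta$. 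The combinatorial heart of the argument is the diagonal-counting bound: for every $v \geq 1$, the number $L_v$ of cells $(i,j) \in \mu$ with $i \geq 2$ on the subdiagonal $i-j=v$ satisfies $L_v \leq \sqrt{\ell}$. Indeed, such cells form a staircase $(v+1,1),(v+2,2),\ldots,(v+L_v,L_v)$, and $(v+L_v,L_v) \in \mu$ forces $\mu_i \geq L_v$ for every $2 \leq i \leq v+L_v$, giving $(v+L_v-1)L_v \leq \sum_{i\ge 2}\mu_i \leq \ell$ and hence $L_v^2 \leq \ell$. Applying this to $v \in \{j^*-1, j^*, j^*+1\}$, at most $2\sqrt{\ell}$ numerator factors are ``dangerous'' (as small as $\eta$, using $1-\eta \geq \eta$ to unify cases (c) and (d)); all other factors are at least an absolute constant. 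This yields a numerator lower bound of $\eta^{2\sqrt{\ell}} \cdot \Omega(1)^\ell$. For the denominator, under $\eta \leq 1/2$ we have $x \leq \ell + 1/2 \leq 2n$, so each of its $\leq \ell$ factors is at most $O(n)$, giving an upper bound of $O(n)^\ell$. Combining these estimates and absorbing the absolute constants into the $(2n)^{-\ell}$ factor proves the lemma.

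The main obstacle I anticipate is the combinatorial bookkeeping in the last step: correctly isolating that only the subdiagonals at $-j^*$ and its two neighbors can produce near-$\eta$ factors, using the diagonal bound $L_v \leq \sqrt{\ell}$ to cap the total count of such factors by $2\sqrt{\ell}$, and verifying that all remaining non-negative-content and distant-negative-content factors each contribute at least an absolute constant so that their aggregate contribution is only $\Omega(1)^\ell$. Deploying the Ewens content-product character identity is the other nontrivial (but classical) ingredient.
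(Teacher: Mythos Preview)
Your approach is essentially identical to the paper's: both establish the content-product formula $c_{\mu,\theta} = \prod_{u \in \mathcal{A}}(q+c(u)) \big/ \prod_{k=\mu_1}^{n-1}(q+k)$ (the paper via Stanley's Schur-polynomial identity rather than Jucys--Murphy), then bound the ratio by noting that only the two integer contents nearest $-q$ can yield numerator factors below~$1$, combined with the diagonal bound that each fixed content appears in at most $\sqrt{\ell}$ cells of~$\mathcal{A}$. The paper's case split is $q \geq \ell+1$ versus $q \leq \ell$ rather than your $\eta \lessgtr 1/2$, and it is slightly cleaner in observing that every non-dangerous factor is actually $\geq 1$ (not merely $\Omega(1)$), so no constant needs absorbing; one small point to tighten in your write-up is that of the three subdiagonals $\{j^*-1,j^*,j^*+1\}$ only two can give factors below~$1$ (one neighbor gives $1+\eta$), which is exactly why the exponent is $2\sqrt{\ell}$.
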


Similar to the previous two sections,~\Cref{lem:lower-bound-mallows} follows immediately from the following lower bound on singular values of certain irreducible representations:
\begin{lemma}~\label{lem:eigen-mallows}
Let $\mu$ be a partition of $n$ of the form $(\mu_1,\dots,\mu_r)$ where $\mu_1 \geq n-\ell$.  Let $\theta >0$ and let $\eta := \dist(\theta,\ell) = \min_{j \in \{1,\dots,\ell\}} \big|e^\theta - j \big|.$  Then we have that 
\[
\widehat{\mathcal{M}_{\theta}}(\rho_{\mu}) = c_{\mu, \theta} \cdot \mathsf{Id} \ \textrm{ where } \ \ |c_{\mu, \theta}|  \ge (2n)^{-\ell} \eta^{2 \sqrt{\ell}}.
\]
\end{lemma}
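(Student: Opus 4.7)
The plan is to derive an explicit closed-form expression for $c_{\mu,\theta}$ via the representation theory of $\mathbb{S}_n$, and then lower bound it using a combinatorial analysis of the contents of the Young diagram of $\mu$.

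First, I would observe that the Cayley distance satisfies $d(\pi, e) = n - \cycles(\pi)$, so writing $y := e^\theta$, the distribution $\mathcal{M}_\theta$ has the form $\mathcal{M}_\theta(\pi) = y^{\cycles(\pi)}/Z$, where $Z = y(y+1)\cdots(y+n-1)$ by the classical identity for the cycle index of $\mathbb{S}_n$. Since $\cycles(\pi)$ depends only on the conjugacy class of $\pi$, the distribution $\mathcal{M}_\theta$ is a class function, so Lemma~\ref{lem:class-func} immediately yields $\widehat{\mathcal{M}_\theta}(\rho_\mu) = c_{\mu,\theta} \cdot \mathsf{Id}$, with
\[
c_{\mu,\theta} = \frac{1}{\dim(\rho_\mu) \cdot Z} \sum_{\pi \in \mathbb{S}_n} y^{\cycles(\pi)} \chi_\mu(\pi).
\]

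Second, I would evaluate the character sum via Frobenius's formula $s_\mu = \sum_\lambda \chi_\mu(\lambda) p_\lambda / z_\lambda$ combined with the (verifiable on small cases, classical) symmetric-function identity $s_\mu\big|_{p_k = y \text{ for all }k} = \prod_{(i,j)\in\mu} (y + j - i)/h(i,j)$, where $h(i,j)$ denotes hook length. Combined with the hook-length formula $\dim(\rho_\mu) = n!/\prod h(i,j)$, this gives $\sum_\pi y^{\cycles(\pi)} \chi_\mu(\pi) = \dim(\rho_\mu) \cdot \prod_{(i,j)\in\mu}(y+j-i)$, hence the clean ratio
\[
c_{\mu,\theta} = \frac{\prod_{(i,j) \in \mu}(y + j - i)}{\prod_{k=0}^{n-1}(y + k)}.
\]
The cells in row~$1$ of $\mu$ contribute factors $(y+j-1)$ for $j=1,\ldots,\mu_1$, which exactly cancel the first $\mu_1$ factors of the denominator. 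Setting $m := n - \mu_1 \leq \ell$, this leaves
\[
c_{\mu,\theta} = \frac{\prod_{i\geq 2,\ 1 \leq j \leq \mu_i}(y + j - i)}{\prod_{k=\mu_1}^{n-1}(y + k)},
\]
with exactly $m$ factors in each of the numerator and denominator.

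Third, I would bound the two products separately. The denominator is elementary: each factor is at most $y + n$, giving the upper bound $(2n)^\ell$ (after handling the degenerate case $y$ very large separately). For the numerator, each factor is $(y+c)$ with $|c| = |j-i| \leq m \leq \ell$, so $c \in \{-\ell,\ldots,\ell\}$. Factors with $c \geq 0$ satisfy $|y+c| \geq y > 1$ and are harmless. For $c = -t$ with $t \in [\ell]$, we have $|y-t| \geq \eta$ for the integer $t_0$ closest to $y$ in $[\ell]$, and $|y - t| \geq 1 - \eta \geq 1/2$ for every other $t \in [\ell]$ (assuming $\eta \leq 1/2$; the complementary regime is trivial).

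The crux of the argument, and what I expect to be the main obstacle, is the combinatorial bound on the multiplicity with which a single content can appear in $\mu$. Specifically, I would show: if content $-t$ (for $t \geq 1$) appears $s$ times in $\mu$, then $s(t + s - 1) \leq m \leq \ell$, and hence $s \leq \sqrt{\ell}$. The reason is that the cells of content $-t$ lie along the diagonal $(t+k, k)$ for $k = 1,\ldots,s$, so $\mu_{t+k} \geq k$; by the non-increasing property of $\mu$, this forces $\mu_i \geq s$ for all $i \in [2, t+s]$, which uses up at least $s(t+s-1)$ cells below row~$1$. At most two integers in $[\ell]$ can be within $1/2$ of $y$ (namely $\lfloor y \rfloor$ and $\lceil y \rceil$ when $y$ is close to a half-integer), and each contributes multiplicity at most $\sqrt{\ell}$, so the total number of ``bad'' factors of size $\eta$ is at most $2\sqrt{\ell}$. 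Thus the numerator is at least $\eta^{2\sqrt{\ell}} \cdot (1/2)^\ell$, and combining with the denominator bound gives $|c_{\mu,\theta}| \geq \eta^{2\sqrt{\ell}}/(2n)^\ell$ after absorbing constants, as claimed.
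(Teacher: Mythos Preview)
Your approach is essentially identical to the paper's: you derive the same closed-form product formula for $c_{\mu,\theta}$ (the paper obtains the key character identity from Schur-polynomial specializations, citing Stanley, but your Frobenius-formula/power-sum route is equivalent), cancel the first row, and bound the multiplicity of any fixed content in the remaining cells by~$\sqrt{\ell}$.

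Two bookkeeping points where the paper is tighter. First, the paper explicitly splits into the cases $q := e^\theta \geq \ell+1$ and $q \leq \ell$; your ``degenerate case $y$ very large'' is the former, and there the paper pairs each numerator factor with a denominator factor and observes that each ratio is at least $(q-\ell)/(q+n-1) \geq 1/(2n)$, giving $(2n)^{-\ell}$ directly with no $\eta$ needed. Second, and more importantly for the stated constant, in the small-$q$ case your threshold of $1/2$ is looser than necessary: at most two integer contents $j_0,j_1$ can satisfy $|q+j|<1$, and for \emph{every other} content the factor is at least~$1$ (not merely $1/2$). Since both ``bad'' contents are necessarily negative, say $-t$ with $t\in\{1,\dots,\ell\}$, the corresponding factors are $|q-t|\geq \eta$ by the very definition of $\eta$. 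Hence the numerator is at least $\eta^{2\sqrt{\ell}}$ outright, with no residual $(1/2)^\ell$. As you wrote it, your bound yields $|c_{\mu,\theta}|\geq \eta^{2\sqrt{\ell}}/(4n)^\ell$ rather than $\eta^{2\sqrt{\ell}}/(2n)^\ell$, and that factor of $2^\ell$ cannot be ``absorbed into constants''; you need the sharper threshold to match the lemma as stated.
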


To prove~\Cref{lem:eigen-mallows}, we will need the notions of \emph{content} and \emph{hook length} for boxes in a Young diagram:

\begin{definition}~\label{def:content} 
Let $\mu$ be a partition $\mu \vdash n$.  The \emph{hook length} of a box $u$ in the Young diagram for $\mu,$ denoted by $h(u)$, is the sum 
\[
\text{(\# of boxes to the right of $u$ in its row)} + 
\text{(\# of boxes below $u$ in its column)} + 1 \text{~(for $u$ itself)}.
\]
The \emph{content} $c(u)$ of a box $u$ is $c(u) := j-i,$ where $j$ is its column number (from the left, starting with column 1) and $i$ is its row number (from the top, starting with row 1).  
\end{definition}

The left portion of~\Cref{fig:yd-c} depicts a Young diagram annotated with the hook lengths of each of its boxes.
The right portion of~\Cref{fig:yd-c} depicts the same Young diagram annotated with the contents of each of its boxes.

\begin{center}
\begin{figure}[ht]
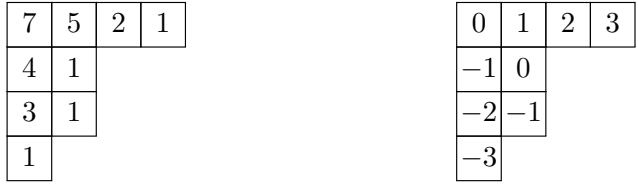

\quad
\quad
\quad
\quad
\quad
\quad
\quad
\quad
\quad
\quad
\begin{ytableau}
7 & 5 & 2  & 1 \\
4 & 1 \\
3 & 1 \\
1
\end{ytableau}
\quad
\quad
\quad 
\quad
\quad
\quad
\quad 
\quad
\quad
\begin{ytableau}
0 & 1 & 2  & 3 \\
-1 &0 \\
-2 & -1 \\
-3
\end{ytableau}
\caption{On the left is a Young diagram in which each box has been labeled with its hook length; on the right is a Young diagram in which each box has been labeled with its content. }
\label{fig:yd-c}
\end{figure}
\end{center}

We will need the following technical result to prove Lemma~\ref{lem:eigen-mallows}:

\begin{lemma}~\label{lem:formula}
Let $\mu \vdash n$ and let $\chi_{\mu}$ be the corresponding character in $\mathbb{S}_n$. For any $q \in \mathbb{R}$, 
\[
\frac{1}{n!} \sum_{\sigma \in \mathbb{S}_n} \chi_{\mu}(\sigma) \cdot q^{\cycles(\sigma)} =\prod_{u \in \mu} \frac{q+ c(u)}{h(u)}, 
\]
where the subscript ``$u \in \mu$'' means that $u$ ranges over all the boxes in the Young diagram corresponding to $\mu$. 
\end{lemma}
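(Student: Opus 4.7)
\medskip
\noindent\textbf{Proof plan for Lemma~\ref{lem:formula}.}
The plan is to observe that both sides of the claimed identity are polynomials in $q$ of degree at most $n$, so it suffices to verify the identity at infinitely many specific values of $q$; I will use all positive integers $q = N$. This reduces the problem to the identity
\[
\frac{1}{n!}\sum_{\sigma \in \mathbb{S}_n} \chi_\mu(\sigma) \cdot N^{\cycles(\sigma)} \;=\; \prod_{u \in \mu}\frac{N + c(u)}{h(u)} \qquad (N \in \mathbb{N}),
\]
which I will recognize as a standard consequence of Schur--Weyl duality.

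The main step is to identify the left-hand side with the dimension of a specific $GL_N$-module. Let $V = \mathbb{C}^N$ and consider the natural action of $\mathbb{S}_n$ on $V^{\otimes n}$ by permutation of tensor factors. A direct computation (decompose $\sigma$ into cycles, then note that a single $k$-cycle acting on $V^{\otimes k}$ permutes basis elements and has exactly $N$ fixed points, hence trace $N$) gives
\[
\mathrm{tr}_{V^{\otimes n}}(\sigma) \;=\; N^{\cycles(\sigma)}.
\]
Schur--Weyl duality provides the bimodule decomposition $V^{\otimes n} = \bigoplus_{\mu \vdash n} V_\mu \otimes W_\mu$, where $W_\mu = \rho_\mu$ is the irreducible $\mathbb{S}_n$-module indexed by $\mu$ and $V_\mu$ is the irreducible polynomial $GL_N$-module of highest weight $\mu$ (with the convention that $V_\mu = 0$ when $\mu$ has more than $N$ parts). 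Taking the trace of $\sigma$ on both sides yields
\[
N^{\cycles(\sigma)} \;=\; \sum_{\mu \vdash n} \dim(V_\mu) \cdot \chi_\mu(\sigma).
\]
Multiplying by $\chi_\mu(\sigma)$, summing over $\sigma \in \mathbb{S}_n$, and using orthogonality of symmetric-group characters (which are real-valued, so $\chi_\mu(\sigma^{-1}) = \chi_\mu(\sigma)$) collapses the sum to
\[
\frac{1}{n!}\sum_{\sigma \in \mathbb{S}_n} \chi_\mu(\sigma) \cdot N^{\cycles(\sigma)} \;=\; \dim(V_\mu).
\]

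The final step is the classical \emph{hook-content formula}, $\dim(V_\mu) = \prod_{u \in \mu} (N + c(u))/h(u)$, which is a standard consequence of the Weyl dimension formula specialized to $GL_N$ (see, e.g., Theorem~7.21.2 of Stanley's \emph{Enumerative Combinatorics, Vol.~2}). This matches the right-hand side at $q = N$. Extending by polynomiality in $q$ finishes the proof.

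The main (and essentially only) obstacle is invoking the hook-content formula for $\dim(V_\mu)$; the rest of the argument is a routine combination of Schur--Weyl duality, a direct trace calculation on $V^{\otimes n}$, and character orthogonality. An equivalent route, avoiding the representation-theoretic language, is to work with symmetric functions and combine $p_\lambda(1^N) = N^{\ell(\lambda)}$ with the character expansion $s_\mu = \sum_\lambda (\chi_\mu(\lambda)/z_\lambda)\, p_\lambda$ and the principal specialization formula $s_\mu(1^N) = \prod_{u \in \mu}(N+c(u))/h(u)$.
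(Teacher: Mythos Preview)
Your proposal is correct and follows essentially the same overall strategy as the paper: both sides are polynomials in $q$ of degree at most $n$, so it suffices to check the identity at integer values, and at an integer $N$ the left-hand side is recognized as the principal specialization $s_\mu(1^N)$ of the Schur polynomial, which equals $\prod_{u\in\mu}(N+c(u))/h(u)$ by the hook-content formula.

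The one difference is in how you arrive at $\tfrac{1}{n!}\sum_\sigma \chi_\mu(\sigma) N^{\cycles(\sigma)} = s_\mu(1^N)$. The paper simply quotes this as equation~7.78 of Stanley (the character expansion of Schur functions in the power-sum basis, specialized at $1^N$), whereas your main argument derives it via Schur--Weyl duality: you compute $\mathrm{tr}_{V^{\otimes n}}(\sigma)=N^{\cycles(\sigma)}$ directly, decompose $V^{\otimes n}$ as a $GL_N\times\mathbb{S}_n$-bimodule, and extract $\dim V_\mu$ by character orthogonality. This is a genuinely representation-theoretic route to the same specialization identity, and your ``alternative route'' via $s_\mu = \sum_\lambda (\chi_\mu(\lambda)/z_\lambda)\,p_\lambda$ and $p_\lambda(1^N)=N^{\ell(\lambda)}$ is exactly the paper's argument. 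Both approaches ultimately rely on the hook-content formula (Corollary/Theorem~7.21 in Stanley) for the right-hand side, so there is no substantive gap.
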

\begin{proof}
The above identity is given as Exercise~7.50 in Stanley's book~\cite{StanleyECv2}. For the sake of completeness, we provide the proof here. 

For any $\bar{t}=(t_1, \ldots, t_n)$, we define the  polynomial 
\[
a_{\bar{t}}(x_1, \ldots, x_n) := \mathsf{det} \begin{bmatrix}
    x_1^{t_1}       & x_2^{t_1} & x_3^{t_1} & \dots & x_{n}^{t_1} \\
    x_1^{t_2}       & x_2^{t_2} & x_3^{t_2} & \dots & x_{n}^{t_2} \\
    \hdotsfor{5} \\
    x_1^{t_n}       & x_2^{t_n} & x_3^{t_n} & \dots & x_{n}^{t_n}
\end{bmatrix}.
\]
Given any partition $\mu \vdash n$, we now define the Schur polynomial $s_{\mu}(x_1, \ldots, x_n)$ as
follows: Define $\bar{t}_{\mu} = (\mu_1 +n-1, \ldots, \mu_n +0)$ and $\bar{t}_{0} = (n-1, \ldots, 0)$. Then, 
\[
s_{\mu}(x_1, \ldots, x_n) := \frac{a_{\bar{t}_{\mu}}(x_1, \ldots,x_n)}
{a_{\bar{t}_{0}}(x_1, \ldots,x_n)}. 
\]
The denominator is just the Vandermonde determinant of the variables $(x_1, \ldots, x_n)$. As the polynomial $a_{\bar{t}_{\mu}}(x_1, \ldots,x_n)$ is alternating, it follows that $s_{\mu}(x_1, \ldots, x_n)$ is a polynomial (as opposed to a rational function) and further, it is symmetric. 

The following is a fundamental fact connecting Schur polynomials and cycles: For any $0 \le k \le n$, 
\begin{equation}~\label{eq:stanley-1}
s_{\mu}(\underbrace{1, \ldots, 1}_{k}, \underbrace{0,\ldots, 0}_{n-k}) = \sum_{\sigma \in \mathbb{S}_n} \frac{1}{n!} \cdot \chi_{\mu}(\sigma) \cdot k^{\cycles(\sigma)}
\end{equation}
(see equation~7.78 in \cite{StanleyECv2}).
On the other hand, there are known explicit formulas for evaluations of the Schur polynomial at specific inputs. In particular, Corollary~7.21.4 of \cite{StanleyECv2} states that 
\begin{equation}~\label{eq:stanley-2}
s_{\mu}(\underbrace{1, \ldots, 1}_{k}, \underbrace{0,\ldots, 0}_{n-k}) = \prod_{u \in \mu} \frac{k+ c(u)}{h(u)}. 
\end{equation}
Combining (\ref{eq:stanley-1}) and (\ref{eq:stanley-2}), we get that for any $0 \le k \le n$, 
we have 
\[
\frac{1}{n!} \sum_{\sigma \in \mathbb{S}_n} \chi_{\mu}(\sigma) \cdot k^{\cycles(\sigma)} =\prod_{u \in \mu} \frac{k+ c(u)}{h(u)}.
\]
However, note that both the left and the right hand sides can be seen as polynomials of degree at most $n$ in the variable $k$. Since they agree at $n+1$ values $k=0, \ldots, n$,  they must be identical as formal functions. This concludes the proof. 
\end{proof}

\begin{proofof}{Lemma~\ref{lem:eigen-mallows}}
Recall that the distribution ${\cal M}_\theta$ over $\mathbb{S}_n$ is defined by ${\cal M}_{\theta}(\pi) = e^{-\theta d(\pi,e)}/Z(\theta)$, where $Z(\theta) = \sum_{\pi \in \mathbb{S}_n} e^{-\theta d(\pi,e)}$ is a normalizing constant.  Since the Cayley distance $d(\sigma,\tau)$ is equal to $n-\cycles(\sigma^{-1}\tau)$, where $\cycles(\pi)$ is the number of cycles in $\pi$, we have that 
\[
{\cal M}_{\theta}(\pi)={\frac {e^{\theta \cdot \cycles(\pi)}} C},
\text{~where~}
C=\sum_{\pi \in \mathbb{S}_n} e^{\theta \cdot \cycles(\pi)}.
\]

Since the $\cycles(\cdot)$ function is a class function so is ${\cal M}_{\theta}$, so we can apply Lemma~\ref{lem:class-func} and we get that $\widehat{\mathcal{M}_{\theta}}(\rho_{\mu}) = c_{\mu, \theta} \cdot \mathsf{Id}$, where 
\[
c_{\mu, \theta} = \frac{\sum_{\sigma \in \mathbb{S}_n} \mathcal{M}_{\theta}(\sigma) \cdot \chi_{\mu}(\sigma)}{\mathsf{dim}(\rho_{\mu})} 
= \frac{\sum_{\sigma \in \mathbb{S}_n} e^{\theta \cdot \cycles(\sigma)} \cdot \chi_{\mu}(\sigma)}{\mathsf{dim}(\rho_{\mu}) \cdot (\sum_{\sigma \in \mathbb{S}_n} e^{\theta \cdot \cycles(\sigma)})}
=\frac{\sum_{\sigma \in \mathbb{S}_n} q^{\cycles(\sigma)} \cdot \chi_{\mu}(\sigma)}{\mathsf{dim}(\rho_{\mu}) \cdot (\sum_{\sigma \in \mathbb{S}_n} q^{\cycles(\sigma)})},
\text{~where~}q := e^{\theta}.
\]
We re-express the numerator by applying Lemma~\ref{lem:formula} to get 
\begin{equation}~\label{eq:nume}
\sum_{\sigma \in \mathbb{S}_n} q^{\cycles(\sigma)} \cdot \chi_{\mu}(\sigma)  = n! \cdot \prod_{u \in \mu} \frac{q+ c(u)}{h(u)}. 
\end{equation} 
To analyze the denominator of $c_{\mu,\theta}$, applying Lemma~\ref{lem:formula} to the trivial partition $\mathsf{Triv}_n = (n)$ of $n$ (the character of which is identically 1), we get that 
\begin{equation}~\label{eq:denom}
\sum_{\sigma \in \mathbb{S}_n} q^{\cycles(\sigma)}  = n! \cdot \prod_{u \in \mathsf{Triv}_n} \frac{q+ c(u)}{h(u)}
= q(q+1)\cdots(q+n-1).
\end{equation}
For the rest of the denominator, we recall the following well-known fact about the dimension of irreducible representations of the symmetric group:
\begin{fact} [Hook length formula, see e.g.~Theorem~3.41 of \cite{meliot2017representation}] \label{fact:dimension}
For $\mu \vdash n$, $\mathsf{dim}(\rho_{\mu}) = \frac{n!}{\prod_{u \in \mu} h(u)}$. 
\end{fact}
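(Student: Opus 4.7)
The plan is to derive the hook length formula as an immediate corollary of Lemma~\ref{lem:formula}, which has just been established in the preceding proof via Schur polynomial machinery. The key observation is that the identity
\[
\frac{1}{n!}\sum_{\sigma \in \mathbb{S}_n}\chi_{\mu}(\sigma)\, q^{\cycles(\sigma)} \;=\; \prod_{u \in \mu}\frac{q + c(u)}{h(u)}
\]
is an equality of polynomials in $q$, and the right-hand side is manifestly of degree exactly $n$ (it is a product of $n$ linear factors in $q$), so the dimension of $\rho_\mu$ can be read off by comparing the coefficients of $q^n$ on the two sides.

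On the left-hand side, the coefficient of $q^n$ is $\chi_\mu(e)/n!$, because the identity permutation $e$ is the \emph{unique} element of $\mathbb{S}_n$ with $n$ cycles: any non-identity $\sigma$ contains a non-trivial cycle of length at least $2$ and hence has $\cycles(\sigma) \le n-1$. Moreover $\chi_\mu(e) = \mathrm{tr}(\rho_\mu(e)) = \mathrm{tr}(\mathsf{Id}_{\dim(\rho_\mu)}) = \dim(\rho_\mu)$. On the right-hand side, the coefficient of $q^n$ is obtained by picking the ``$q$'' term out of each of the $n$ linear factors, producing $1/\prod_{u \in \mu} h(u)$. Equating the two leading coefficients and clearing $n!$ yields $\dim(\rho_\mu) = n!/\prod_{u \in \mu} h(u)$, exactly as claimed.

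In this approach there is essentially no obstacle to overcome once Lemma~\ref{lem:formula} is in hand; the only verification needed is that no non-identity permutation of $[n]$ has $n$ cycles, which is immediate from the cycle decomposition. Should one wish to avoid Lemma~\ref{lem:formula} entirely, the classical alternative would be to first identify $\dim(\rho_\mu)$ with the number $f^\mu$ of standard Young tableaux of shape $\mu$ via the Specht module construction, and then establish $f^\mu = n!/\prod_{u} h(u)$ either by the Greene--Nijenhuis--Wilf probabilistic hook walk (matching the branching-rule recursion $f^\mu = \sum_{c} f^{\mu\setminus c}$ with the analogous telescoping for $n!/\prod_u h(u)$, where $c$ ranges over removable corner boxes) or by algebraic manipulation of the Frobenius determinantal character formula. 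Both of these routes are substantially more involved than the one-line leading-coefficient extraction from Lemma~\ref{lem:formula} outlined above, which is the approach I would take.
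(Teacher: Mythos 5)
Your argument is correct. Note, though, that the paper does not prove this statement at all: it is stated as a \emph{Fact} with a citation to the literature (Theorem~3.41 of \cite{meliot2017representation}), so there is no in-paper proof to compare against. What you have done is supply a genuine derivation from Lemma~\ref{lem:formula}, and it checks out: the right-hand side $\prod_{u\in\mu}\frac{q+c(u)}{h(u)}$ is $\bigl(\prod_{u\in\mu}h(u)\bigr)^{-1}$ times a monic degree-$n$ polynomial in $q$, so its leading coefficient is $1/\prod_{u\in\mu}h(u)$; on the left-hand side the identity is the unique $\sigma\in\mathbb{S}_n$ with $\cycles(\sigma)=n$, so the coefficient of $q^n$ is $\chi_\mu(e)/n!=\dim(\rho_\mu)/n!$, and equating the two gives the formula. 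The one point worth being explicit about is non-circularity: Lemma~\ref{lem:formula} is proved in the paper from Stanley's equation~(7.78) and Corollary~7.21.4 (the hook content formula), and the standard proofs of the latter go through the bialternant/determinantal definition of Schur polynomials rather than through the hook length formula, so your derivation does not secretly assume its own conclusion. Your proposed fallback routes (Specht modules plus the hook walk, or the Frobenius formula) are the textbook proofs and are indeed more work; the leading-coefficient extraction is the right move in the context of this paper.
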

Combining (\ref{eq:nume}), (\ref{eq:denom}) and Fact~\ref{fact:dimension}, we get 
\begin{equation}~\label{eq:rexp}
c_{\mu, \theta} = \frac{\prod_{u \in \mu} (q+c(u))}{q(q+1)\cdots(q+n-1)}.
\end{equation}
Let $\mathcal{A}$ denote the set consisting of the cells of the Young diagram of $\mu$ which are not in the first row. Since $n-\mu_1=\ell'$ for some $\ell' \leq \ell$, the above expression simplifies to 
\begin{equation} \label{eq:simplified}
c_{\mu,  \theta}  = \frac{\prod_{u \in \mathcal{A}} (q+c(u))}{(q+n-\ell')\cdots(q+n-1)}. 
\end{equation}
To bound this ratio, first observe that both the numerator and denominator are $\ell'$-way products. There are two possibilities now: 

\begin{enumerate}
\item {\bf Case 1: $q \ge \ell+1$.} In this case we observe that each cell $u \in \mathcal{A}$ satisfies $c(u) \ge -\ell' \geq -\ell.$  Thus $c_{\mu,\theta}$ can be expressed as a product of $\ell'$ many fractions, each of which is at least $ \frac{q-\ell}{q+n-1} \ge \frac{1}{\ell+n}$. This implies that 
\[
c_{\mu, \theta} \geq \left(\frac{1}{n+\ell}\right)^{\ell'} \ge (2n)^{-\ell}. 
\]
\item 
{\bf Case 2: $q \leq\ell$.} In this case, the denominator of~\Cref{eq:simplified} is at most $(2n)^\ell$. To lower bound the numerator, observe that for every cell $u$ of ${\cal A}$, the value of $c(u)$ is an integer in $\{-\ell,\dots,\ell\}.$  Let $j_0$ and $j_1$ denote the two values in $\{-\ell,\dots,\ell\}$ for which $|q-j|$ achieves its smallest value $\eta$ and its next smallest value (note that these values are equal if $\eta=1/2$).  Next, we observe that at most $\sqrt{\ell}$ many cells of ${\cal A}$ have content equal to any given fixed integer value. Since $j_0$ and $j_1$ are the only possible values of $j \in \{-\ell,\dots,\ell\}$ for which $|q+j|<1$, it follows that
$$
\prod_{u \in \mathcal{A}} |(q+c(u))| \geq
\left( \prod_{u \in \mathcal{A}:c(u)=j_0} |(q+c(u))|\right) \cdot
\left( \prod_{u \in \mathcal{A}:c(u)=j_1} |(q+c(u))|\right) \ge \eta^{2 \sqrt{\ell}}.
$$
This finishes the proof. 

\end{enumerate}

\end{proofof}


\section{Our positive results for noisy rankings:  Putting the pieces together}

In this brief section we put all the pieces together to obtain our main positive results, Theorems~\ref{thm:ub-symmetric},~\ref{thm:ub-heat} and~\ref{thm:ub-Mallows}, for the symmetric, heat kernel, and generalized Mallows noise models respectively.

\medskip

\noindent {\bf Symmetric noise.} Under the assumptions of~\Cref{thm:ub-symmetric} (that $\sum_{j=0}^{n - \log k} p_j \geq {\frac 1 {n^{O(\log k)}}}$), taking $\ell=\log k$ in~\Cref{lem:lower-bound-symmetric}, we have that
$\sigma_{\min,\mathsf{Up}(\lambda_{\mathsf{hook},\log k}),{\cal S}_{\ol{p}}} \geq {\frac {1} {n^{O(\log k)}}}.$
Since (as discussed in~\Cref{sec:samplability}) ${\cal S}_{\ol{p}}$ is efficiently samplable given $\ol{p}$, by~\Cref{thm:min-sing-val} in time $\poly(n^{\log k},1/\delta,\log(1/\tau))$ with probability $1-\tau$ it is possible to obtain $\pm \delta$-accurate estimates of all of the $(\log k)$-way marginals of $f$.
Setting $\delta = {\frac {\eps} {2k^{O(\log k)}}}$ and applying~\Cref{thm:algo-rec}, we get~\Cref{thm:ub-symmetric}.

\medskip

\noindent {\bf Heat kernel noise.}  First observe that we may assume that the temperature parameter $t$ is at least 1 (since otherwise it is easy to artificially add noise to achieve $t = 1$).  Under the assumptions of~\Cref{thm:ub-heat} (that $t=O(n \log n)$), taking $\ell=\log k$ in~\Cref{lem:lower-bound-heat}, we have that
$\sigma_{\min,\mathsf{Up}(\lambda_{\mathsf{hook},\log k}),{\cal H}_{t}} \geq {\frac {1} {n^{O(\log k)}}}.$ \Cref{thm:ub-heat} follows as in the previous paragraph (this time using the efficient samplability of ${\cal H}_t$ given $t$).

\medskip

\noindent {\bf Cayley-Mallows noise.} Under the assumptions of~\Cref{thm:ub-Mallows}, taking $\ell=\log k$ in~\Cref{lem:lower-bound-mallows} we get that $\sigma_{\min,\mathsf{Up}(\lambda_{\mathsf{hook},\log k}),{\cal M}_{\theta}} \geq {\frac {1} {n^{O(\log k)}}} \cdot \dist(\theta, \log k)^{2\sqrt{\log k}}.$  \Cref{thm:ub-Mallows} follows as in the previous paragraph (this time using the efficient samplability of ${\cal M}_\theta$ given $\theta$).


\section{Lower bound for Cayley-Mallows models}~\label{sec:lowerbound}

Recall that because of the $\poly(\dist(\theta,\log k)^{-\sqrt{\log k}})$ dependence in~\Cref{thm:ub-Mallows}, the algorithm of that theorem is inefficient if $e^\theta$ is very close to an integer.
In this section we prove~\Cref{thm:lower-bound-intro}, which establishes that \emph{any algorithm} for learning in the presence of Cayley-Mallows noise \emph{must} be inefficient if $e^\theta$ is very close to an integer.

\subsection{A key technical result}

The following lemma is at the heart of our lower bound.  It shows that if $e^\theta$ is close to an integer, then any partition $\mu$ of $n \geq m$ which extends a particular partition $\lambda_{\mathsf{sq}}$ of $m$ must be such that the Fourier coefficient $\widehat{{\cal M}_\theta}(\rho_\mu)$ of Cayley-Mallows noise has small singular values.\ignore{ \rnote{If the main take-away of this section is~\Cref{thm:lower-bound} then we don't really need the business about $n \geq m$, it would be enough to just have the $n=m$ case.  The more general statement is scarecely more complex but strictly speaking we don't need the added complexity; should we keep it? }
}

 \begin{lemma}~\label{lem:upper-bound-singular-1}
 Let $\lambda_{\mathsf{sq}}$ denote the partition $(t,\dots,t)$ of $m=t(t+j)$ whose Young tableau is a rectangle with $t+j$ rows and $t$ columns. Let $\theta>0$ be such that $\big| e^\theta-j \big| \le \eta$ where $\eta \leq 1/2.$
 Let $n \ge m$, $\mu \vdash n$ and $\lambda_{\mathsf{sq}} \Uparrow \mu$ (recall~\Cref{def:Young-lattice}).
  Then \[
\widehat{\mathcal{M}_{\theta}}(\rho_{\mu})= c_{\mu, \theta} \cdot \mathsf{Id},
\quad \text{where~}c_{\mu,\theta} \le \eta^t.
\]
Here $\rho_\mu$ denotes the irreducible representation of $\mathbb{S}_n$ corresponding to the partition $\mu$.
 \end{lemma}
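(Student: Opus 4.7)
My plan is to apply equation~(\ref{eq:rexp}) (valid because $\mathcal{M}_\theta$ is a class function), which gives
\[
c_{\mu,\theta} = \frac{\prod_{u \in \mu}(q+c(u))}{\prod_{k=0}^{n-1}(q+k)}, \qquad q := e^\theta > 1.
\]
I first reduce to the base case $\mu = \lambda_{\mathsf{sq}}$ using the following monotonicity: if $\mu' = \mu \cup \{u_0\}$ (with $|\mu|=n$) is obtained by adding a single corner cell $u_0$, then the formula immediately gives $c_{\mu',\theta} = c_{\mu,\theta} \cdot (q + c(u_0))/(q + n)$. Since any addable cell to a partition of $n$ has $|c(u_0)| \le n$, we have $|q + c(u_0)| \le q + n$, yielding $|c_{\mu',\theta}| \le |c_{\mu,\theta}|$. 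Because $\lambda_{\mathsf{sq}} \Uparrow \mu$ means $\mu$ can be built from $\lambda_{\mathsf{sq}}$ by successive corner-box additions, this reduces the lemma to proving $|c_{\lambda_{\mathsf{sq}},\theta}| \le \eta^t$.

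For the base case, I observe that in the $(t+j) \times t$ rectangle $\lambda_{\mathsf{sq}}$, cell $(i,c)$ has content $c - i$, so exactly $t$ cells lie on the content-$(-j)$ diagonal $D_{-j} := \{(j+1,1),(j+2,2),\ldots,(j+t,t)\}$. These contribute a factor $(q - j)^t$ to the numerator, yielding $c_{\lambda_{\mathsf{sq}},\theta} = (q - j)^t \cdot R(q)$ where $R(q)$ is the ratio of the remaining $m - t$ numerator factors to the $m$ denominator factors. Since $|q - j|^t \le \eta^t$, it suffices to show $|R(q)| \le 1$ whenever $|q - j| \le 1/2$. I plan to do this by constructing an injective matching $\phi$ from the $m - t$ remaining numerator factors (indexed by cells $u \in \lambda_{\mathsf{sq}} \setminus D_{-j}$) to denominator indices $\{0, 1, \ldots, m-1\}$ such that $\phi(u) \ge |c(u)|$ for each $u$. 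Then each matched ratio satisfies $|q + c(u)| \le q + |c(u)| \le q + \phi(u)$, bounding the product of matched ratios by one, while each of the $t$ unmatched denominator factors $(q + k)$ is at least $q > 1$, so the product of unmatched factors exceeds one.

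\textbf{The main technical step} is verifying Hall's condition for this matching: for every $v_0 \ge 0$, the number of cells in $\lambda_{\mathsf{sq}} \setminus D_{-j}$ with $|c(u)| \ge v_0$ must be at most $m - v_0$. Equivalently, at least $v_0 - t\,\Ind[j \ge v_0]$ cells of $\lambda_{\mathsf{sq}}$ must have $|c(u)| < v_0$. This is trivial when $v_0 \le t$ (the $t$ content-$0$ cells of the main diagonal alone suffice), and for larger $v_0$ the verification is a routine case analysis on the relative sizes of $v_0$, $t$, and $j$, each case reducing to an elementary inequality from the explicit diagonal-length counts of the rectangular Young diagram (the content-$v$ diagonal has $t$ cells for each $v \in \{-j, \ldots, 0\}$, $t - v$ cells for $v \in \{0, \ldots, t-1\}$, and $t + j + v$ cells for $v \in \{-(t+j-1), \ldots, -(j+1)\}$).
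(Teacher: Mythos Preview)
Your proof is correct and follows essentially the same route as the paper's: both arguments use the explicit product formula~(\ref{eq:rexp}), isolate the (at least) $t$ cells on the content-$(-j)$ diagonal to produce the factor $|q-j|^t \le \eta^t$, and then bound the remaining ratio by~$1$ via a cell-to-index matching with $|c(u)|$ dominated by the matched denominator index.

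There is one genuine organizational difference worth noting. The paper treats general $\mu$ with $\lambda_{\mathsf{sq}} \Uparrow \mu$ directly, invoking an ``obvious bijection'' $\Phi:\mathcal{A}\to\mathcal{B}$ (cells outside the first row of $\mu$ to the last $n-\mu_1$ denominator indices) with $c(\Phi(u))\ge |c(u)|$; this bijection is the row-by-row enumeration, but the paper does not spell it out. Your monotone reduction---observing that $|c_{\mu',\theta}| \le |c_{\mu,\theta}|$ whenever $\mu'$ is obtained from $\mu$ by adding one box---is a clean device that lets you work only with the concrete rectangle $\lambda_{\mathsf{sq}}$, where the Hall-condition verification is more transparent. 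Both approaches leave the matching step somewhat implicit (the paper via ``obvious bijection,'' you via ``routine case analysis''), and both are easily completed; your reduction arguably makes that completion more straightforward since only one fixed shape is in play.
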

 \begin{proof}
Let $\mu=(\mu_1,\dots,\mu_r)$.  By Lemma~\ref{lem:eigen-mallows}, we have that 
\[
\widehat{\mathcal{M}_{\theta}}(\rho_{\mu})= c_{\mu, \theta} \cdot \mathsf{Id}, 
\]
where~\Cref{eq:simplified} gives the precise value of $c_{\mu, \theta}$ as 
\begin{equation}~\label{eq:upper-bound-2}
c_{\mu, \theta} = \frac{\prod_{u \in \mathcal{A}} (q+c(u))}{\prod_{u \in {\cal B}} (q+c(u))},
\quad \text{where} \quad q=e^\theta.
\end{equation}
Here $\mathcal{A}$ denotes the set of cells of the Young diagram of $\mu$ which are not in the first row and ${\cal B}$ denotes the rightmost $n-\mu_1$ many cells in the Young diagram of the trivial partition $\mathsf{Triv}_n=(n)$.  Note that in this lemma, we are  trying to upper bound~\Cref{eq:upper-bound-2} whereas Lemma~\ref{lem:eigen-mallows} was about lower bounding this quantity. 

To upper bound~\Cref{eq:upper-bound-2}, we first observe that there is an obvious bijection $\Phi: \mathcal{A} \rightarrow \mathcal{B}$ such that if $\Phi(u) =v$, then $c(v)>|c(u)|>0$.

Next, let $\mathcal{A}_{-j} \subset {\cal A}$ be ${\cal A} := \{(r,s): s-r=j 
\textrm{ and } (r,s) \in \mathcal{A}\}$. Since $\lambda_{\mathsf{sq}} \Uparrow \mu$, 
it follows that $|\mathcal{A}_{-j}| \ge t$. 
As a result, we can upper bound $c_{\mu, \theta}$ as follows:
\begin{align*}
c_{\mu,\theta} = \frac{\prod_{u \in \mathcal{A}} (q+c(u))}{\prod_{u \in \mathcal{B}} (q+c(u))} &= \prod_{u \in \mathcal{A}} \frac{q+c(u)}{q +c(\Phi(u))} = \left(\prod_{u \in \mathcal{A}_{-j}} \frac{q+c(u)}{q +c(\Phi(u))}\right)
\left(\prod_{u \in \mathcal{A} \setminus\mathcal{A}_{-j}} \frac{q+c(u)}{q +c(\Phi(u))}\right)\\
&\le \prod_{u \in \mathcal{A}_{-j}}q+c(u) \tag{using $c(\Phi(u))>|c(u)|>0$ and $q>0$} \\
&\le \eta^{t}. 
\end{align*} 
 \end{proof}

\subsection{Proof of~\Cref{thm:lower-bound-intro}}

\Cref{thm:lower-bound-intro} is an immediate consequence of the following result.  It shows that if $e^\theta$ is close to an integer $j$, then it may be statistically impossible to learn a distribution $f$ supported on $k$ rankings without using many samples from ${\cal M}_\theta \ast f$:

\begin{theorem}~\label{thm:lower-bound}
Given $j \in \N$, there are infinitely many values of $k$ and $m =m(k) \approx {\frac {\log k}{\log \log k}}$ such that the following holds:    there are two distributions $f_1,f_2$ over $\mathbb{S}_m$ with the following properties:
\begin{enumerate}
\item $\dtv(f_1,f_2)=1$ (i.e. the distributions $f_1$ and $f_2$ have disjoint support);
\item $|\supp(f_1)|,|\supp(f_2)| \leq k$;
\item For any $\theta>0$ such that $|e^\theta-j|\leq \eta \leq 1/2$, we have that
$\dtv(
\mathcal{M}_{\theta} \ast f_1,
\mathcal{M}_{\theta} \ast f_2) \le 2 \cdot \eta^{\Theta\left(\sqrt{\frac {\log k}{\log \log k}}\right)}.$
\end{enumerate}
\end{theorem}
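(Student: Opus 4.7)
My plan is to prove Theorem~\ref{thm:lower-bound} directly; Theorem~\ref{thm:lower-bound-intro} will then follow by the standard two-point method (any learner that outputs $h$ with $d_{TV}(h, f_i) \leq 0.99$ distinguishes disjointly-supported $f_1$ from $f_2$, so its sample complexity is $\Omega(1/d_{TV}(\mathcal{M}_\theta \ast f_1, \mathcal{M}_\theta \ast f_2))$). The construction will use the single irreducible character $\chi_{\lambda_{\mathsf{sq}}}$ of $\mathbb{S}_m$, where $\lambda_{\mathsf{sq}} = (t,\ldots,t) \vdash m = t(t+j)$ is the rectangular partition from Lemma~\ref{lem:upper-bound-singular-1}. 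The central idea is that $\chi_{\lambda_{\mathsf{sq}}}$ lies in a single Fourier isotypic component, and convolution by the class function $\mathcal{M}_\theta$ therefore acts on it as multiplication by the scalar $c_{\lambda_{\mathsf{sq}},\theta}$ whose magnitude Lemma~\ref{lem:upper-bound-singular-1} has already bounded.

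To build $f_1$ and $f_2$, I will use that characters of $\mathbb{S}_m$ are integer valued and that $\chi_{\lambda_{\mathsf{sq}}}$ is orthogonal to the trivial character (so $\sum_\sigma \chi_{\lambda_{\mathsf{sq}}}(\sigma) = 0$) while not identically zero (since $\chi_{\lambda_{\mathsf{sq}}}(e) = \dim(\rho_{\lambda_{\mathsf{sq}}}) > 0$). Split $\chi_{\lambda_{\mathsf{sq}}} = \chi^+ - \chi^-$ into its positive and negative parts: these are nonnegative, have disjoint supports, and share a common $\ell_1$-mass $Z := \|\chi^+\|_1 = \|\chi^-\|_1$. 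Setting $f_1 := \chi^+/Z$ and $f_2 := \chi^-/Z$ gives probability distributions on $\mathbb{S}_m$ with $d_{TV}(f_1,f_2) = 1$ and support size at most $m!$. Taking $t = \Theta(\sqrt{\log k/\log \log k})$, Stirling yields $m! \leq m^m = 2^{\Theta(t^2\log t)} \leq k$, meeting the support bound and giving the promised $m \approx \log k/\log\log k$.

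For the crucial Fourier step, Schur orthogonality shows that $\widehat{\chi_{\lambda_{\mathsf{sq}}}}(\rho_\nu) := \sum_\sigma \chi_{\lambda_{\mathsf{sq}}}(\sigma) \rho_\nu(\sigma)$ equals $(m!/\dim(\rho_{\lambda_{\mathsf{sq}}})) \cdot \mathsf{Id}$ when $\nu = \lambda_{\mathsf{sq}}$ and vanishes otherwise; that is, $\chi_{\lambda_{\mathsf{sq}}}$ lives on a single isotypic component. Combining this with Lemma~\ref{lem:eigen-mallows} (which gives $\widehat{\mathcal{M}_\theta}(\rho_{\lambda_{\mathsf{sq}}}) = c_{\lambda_{\mathsf{sq}},\theta} \cdot \mathsf{Id}$) and the convolution law $\widehat{\mathcal{M}_\theta \ast \chi_{\lambda_{\mathsf{sq}}}}(\rho) = \widehat{\mathcal{M}_\theta}(\rho) \cdot \widehat{\chi_{\lambda_{\mathsf{sq}}}}(\rho)$ yields the clean identity $\mathcal{M}_\theta \ast \chi_{\lambda_{\mathsf{sq}}} = c_{\lambda_{\mathsf{sq}},\theta} \cdot \chi_{\lambda_{\mathsf{sq}}}$. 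Since $f_1 - f_2 = \chi_{\lambda_{\mathsf{sq}}}/Z$ and $\|\chi_{\lambda_{\mathsf{sq}}}\|_1 = 2Z$, this forces $\|\mathcal{M}_\theta \ast (f_1 - f_2)\|_1 = 2|c_{\lambda_{\mathsf{sq}},\theta}|$, so $d_{TV}(\mathcal{M}_\theta \ast f_1, \mathcal{M}_\theta \ast f_2) = |c_{\lambda_{\mathsf{sq}},\theta}|$. Applying Lemma~\ref{lem:upper-bound-singular-1} with $n = m$ and $\mu = \lambda_{\mathsf{sq}}$ bounds this by $\eta^t$, and substituting the chosen $t$ gives the theorem.

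The main obstacle is the Fourier-support step above — verifying that $\chi_{\lambda_{\mathsf{sq}}}$ is an eigenfunction of convolution by every class function with eigenvalue exactly $c_{\lambda_{\mathsf{sq}},\theta}$. This is what lets the per-irrep bound of Lemma~\ref{lem:upper-bound-singular-1} translate directly into an $\ell_1$ bound with no slack, and it is the one place where one must carefully push the convolution through the Fourier transform using Schur's lemma. Everything else is routine: the isotypic calculation follows from standard orthogonality, the support bookkeeping from Stirling, and the reduction from Theorem~\ref{thm:lower-bound} to Theorem~\ref{thm:lower-bound-intro} from the two-point method.
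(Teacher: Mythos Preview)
Your proposal is correct and follows essentially the same approach as the paper: split the irreducible character $\chi_{\lambda_{\mathsf{sq}}}$ into its positive and negative parts to define $f_1,f_2$, use Schur orthogonality to see that $\chi_{\lambda_{\mathsf{sq}}}$ is an eigenfunction of convolution by the class function $\mathcal{M}_\theta$ with eigenvalue $c_{\lambda_{\mathsf{sq}},\theta}$, and then invoke Lemma~\ref{lem:upper-bound-singular-1} to bound $|c_{\lambda_{\mathsf{sq}},\theta}|\le\eta^t$. Your parameter bookkeeping (setting $k=m!$ versus choosing $t$ in terms of $k$) and your derivation of the exact identity $d_{TV}=|c_{\lambda_{\mathsf{sq}},\theta}|$ match the paper's computation.
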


\ignore{
}

\begin{proof}
Let $t \geq j$ be any integer, let $m=t(t+j)$, and let $k=m!$.  We first construct the two distributions $f_1,f_2$ over $\mathbb{S}_m$ and argue that properties (1) and (2) hold.

Let $\lambda_{\mathsf{sq}} \vdash m$ be the partition whose Young tableau is a rectangle with $t+j$ rows and $t$ columns. 
Let us  consider the character $\chi_{\mathsf{sq}}: \mathbb{S}_m \rightarrow \mathbb{Q}$  corresponding to the partition $\lambda_{\mathsf{sq}}$.  By~\Cref{fact:rational} we have that $\chi_{\mathsf{sq}}$ is rational valued, and by~\Cref{thm:characters} we have that $\sum_{\sigma \in \mathbb{S}_n} \chi_{\mathsf{sq}}(\sigma) =0$. Thus, we have that 
\begin{equation}~\label{eq:normalization}
\sum_{\sigma  \in \mathbb{S}_n} |\chi_{\mathsf{sq}}(\sigma)| \cdot \mathbf{1}_{\chi_{\mathsf{sq}}(\sigma)>0} =\sum_{\sigma  \in \mathbb{S}_n} |\chi_{\mathsf{sq}}(\sigma)| \cdot \mathbf{1}_{\chi_{\mathsf{sq}}(\sigma)<0} =: C_{\mathsf{sq}}
\end{equation}
for some $C_{\mathsf{sq}}$ (which is nonzero again by~\Cref{thm:characters}).
We now define distributions $f_1$ and $f_2$ over $\mathbb{S}_m$ as 
\[
f_1(\sigma) = \begin{cases} \frac{1}{C_{\mathsf{sq}}} \cdot \chi_{ \mathsf{sq}}(\sigma) 
&\textrm{ if }\chi_{ \mathsf{sq}}(\sigma)>0 \\
0 &\textrm{ otherwise},
\end{cases} \quad \quad  f_2(\sigma) = \begin{cases} \frac{-1}{C_{\mathsf{sq}}} \cdot \chi_{ \mathsf{sq}}(\sigma) 
&\textrm{ if }\chi_{ \mathsf{sq}}(\sigma)<0 \\
0 &\textrm{ otherwise}.
\end{cases}
\]

From their definitions and~\Cref{eq:normalization} it is immediate that $f_1$ and $f_2$ are distributions over $\mathbb{S}_m$ which have disjoint support.  Since $|\mathbb{S}_m|=k$, this gives items $1$ and $2$ of the theorem.

To prove the third item, observe (recalling the comment immediately after~\Cref{def:characters}) that the function $g: \mathbb{S}_m \rightarrow \mathbb{C}$, defined as $g(\sigma) := f_1(\sigma) - f_2(\sigma)= \frac{1}{C_{\mathsf{sq}}} \cdot \chi_{ \mathsf{sq}}(\sigma)$, is a class function. Choose any partition $\lambda \vdash m$ and the corresponding irreducible representation $\rho_{\lambda}$ of $\mathbb{S}_m$. By applying Lemma~\ref{lem:class-func}, we have that
\begin{equation}
\widehat{g}(\rho_{\lambda}) = c_{\lambda} \cdot \mathsf{Id} \ \ \textrm{ where }  \ \ c_{\lambda} = \frac{\sum_{\sigma \in \mathbb{S}_m} g(\sigma) \cdot \chi_{\lambda}(\sigma)}{\dim(\rho_{\lambda})} .\label{eq:g}
\end{equation}

We analyze the multiplier $c_\lambda$ by noting that
\begin{eqnarray}
 c_{\lambda} = \frac{\sum_{\sigma \in \mathbb{S}_m} g(\sigma) \cdot \chi_{\lambda}(\sigma)}{\dim(\rho_{\lambda})} &=&  \frac{\sum_{\sigma \in \mathbb{S}_m} \chi_{\mathsf{sq}}(\sigma) \cdot \chi_{\lambda}(\sigma)}{\dim(\rho_{\lambda}) \cdot C_{\mathsf{sq}}} \nonumber  \\
 &=& \frac{m! \cdot \Ind[\lambda = \lambda_{\mathsf{sq}}]}{\dim(\rho_{\lambda}) \cdot C_{\mathsf{sq}}} \ \ \textrm{using~\Cref{thm:characters}}. \label{eq:clambda}
\end{eqnarray} 
Thus, we have
\begin{align}
\Vert \mathcal{M}_{\theta} \ast f_1 - \mathcal{M}_{\theta} \ast f_2 \Vert_1 
&=
\sum_{\sigma \in \mathbb{S}_m} |{\cal M}_\theta \ast f_1(\sigma) - {\cal M}_\theta \ast f_2(\sigma)| \nonumber \\
&=\sum_{\sigma \in \mathbb{S}_m} |{\cal M}_\theta \ast g(\sigma)| \nonumber \tag{linearity and $g=f_1 - f_2$}\\
&=
{\frac 1 {m!}}
\sum_{\sigma \in \mathbb{S}_m} \left|
 \sum_{\mu \vdash m}  \dim(\rho_\mu) \mathsf{Tr}[\widehat{{{\cal M}_\theta} \ast g}(\rho_\mu) \rho_\mu(\sigma^{-1})]
\right| \tag{\Cref{def:Fourier}, inverse Fourier transform of ${\cal M}_\theta \ast g$}\\
&= 
{\frac 1 {m!}}
\sum_{\sigma \in \mathbb{S}_m} \left|
 \sum_{\mu \vdash m}  \dim(\rho_\mu) \mathsf{Tr}[\widehat{{\cal M}_\theta}(\rho_\mu)  \widehat{g}(\rho_\mu) \rho_\mu(\sigma^{-1})]
\right| \tag{convolution identity}\\
&= 
{\frac 1 {\dim(\rho_{\lambda_{\mathsf{sq}}}) \cdot C_{\mathsf{sq}}}}
\sum_{\sigma \in \mathbb{S}_m} \left|
   \dim(\rho_{\lambda_{\mathsf{sq}}}) \mathsf{Tr}[\widehat{\cal M_\theta}(\rho_{\lambda_{\mathsf{sq}}})  \rho_{\lambda_{\mathsf{sq}}}(\sigma^{-1})]
\right| \tag{Equations~\ref{eq:g} and~\ref{eq:clambda}}\\
&={\frac 1 { C_{\mathsf{sq}}}}
\sum_{\sigma \in \mathbb{S}_m} \left|
    \mathsf{Tr}[\widehat{\cal M_\theta}(\rho_{\lambda_{\mathsf{sq}}})  \rho_{\lambda_{\mathsf{sq}}}(\sigma^{-1})]
\right|
\end{align}

To deal with $\widehat{\cal M_\theta}(\rho_{\lambda_{\mathsf{sq}}})$, we apply~\Cref{lem:upper-bound-singular-1}. In particular, by setting $n=m$ and $\mu = \lambda_{\mathsf{sq}}$ in~\Cref{lem:upper-bound-singular-1}, we get that 
\[
\widehat{\mathcal{M}_{\theta}}(\rho_{\lambda_{\mathsf{sq}}})= c_{\lambda_{\mathsf{sq}}, \theta} \cdot \mathsf{Id} ,
\]
where $|c_{\lambda_{\mathsf{sq}}, \theta}| \le \eta^t$, and we thus get that

\begin{align}
\Vert \mathcal{M}_{\theta} \ast f_1 - \mathcal{M}_{\theta} \ast f_2 \Vert_1 \le \frac{\eta^t}{C_{\mathsf{sq}}} \cdot \sum_{\sigma \in \mathbb{S}_m} 
\left|
 \mathsf{Tr}[ \rho_{\lambda_{\mathsf{sq}}}(\sigma^{-1})
\right|
&=
 \frac{\eta^t}{C_{\mathsf{sq}}} \cdot \sum_{\sigma \in \mathbb{S}_m} 
\left|
\chi_{{\mathsf{sq}}}(\sigma^{-1})
\right|. \label{eq:almost}
\end{align}
Finally, recalling that 
\[
C_{\mathsf{sq}} = \frac{\sum_{\sigma \in \mathbb{S}_n} |\chi_{\mathsf{sq}}(\sigma)|}{2},
\]
we get that the RHS of~\Cref{eq:almost} is $2 \eta^t$. Recalling that $t\geq\sqrt{m/2}$, the theorem is proved. 
\end{proof}

\ignore{

\subsection{Proof of~\Cref{thm:lb-Mallows}} \label{sec:eightpointthree}

\ignore{
}

THIS SUBSECTION NEEDS TO BE FINISHED IF WE WANT TO HAVE A THEOREM ABOUT GENERAL $n$

\Cref{thm:lower-bound} only deals with distributions over $\mathbb{S}_m$ where $m \approx {\frac {\log k}{\log \log k}}$.  We now generalize that result to hold for distributions over $\mathbb{S}_n$ for arbitrary $n$:

\begin{theorem}~\label{thm:lower-bound2}
Let $j \in \N$ and let $\theta>0$ be such that $|e^\theta-j|\leq \eta < 1/2.$  Then there are infinitely many values of $k$ and $m =m(k) \approx {\frac {\log k}{\log \log k}}$ such that the following holds: \red{for all $n \geq m$},  there are two distributions $f'_1,f'_2$ over \red{$\mathbb{S}_n$} with the following properties:
\begin{enumerate}
\item $\dtv(f'_1,f'_2)=1$ (i.e. the distributions $f'_1$ and $f'_2$ have disjoint support);
\item $|\supp(f_1)|,|\supp(f_2)| \leq k$;
\item $\dtv(
\mathcal{M}_{\theta} \ast f_1,
\mathcal{M}_{\theta} \ast f_2) \le 2 \cdot \eta^{\Theta\left(\sqrt{\frac {\log k}{\log \log k}}\right)}.$
\end{enumerate}
\end{theorem}

\begin{proof}
Choose $k$ and the two distributions $f_1$, $f_2$ supported on $\mathbb{S}_m$ (for $m \approx \frac{\log k}{\log \log k}$) from Theorem~\ref{thm:lower-bound}, where $m = t(t+j)$ for $t \ge j$. We recall that from Equations~\ref{eq:g} and \ref{eq:clambda}, writing $g=f_1-f_2$, we have 
\begin{equation}~\label{eq:construct}
\widehat{g}(\rho_{\lambda_{}}) = c_{\lambda} \cdot \mathsf{Id} \textrm{ where } \   c_{\lambda}= \frac{m! \cdot \mathbf{1}(\lambda = \lambda_{\mathsf{sq}})}{\dim(\rho_{\lambda}) \cdot C_{\mathsf{sq}}}.
\end{equation}
We now extend the distributions $f_1,f_2$ over $\mathbb{S}_m$ to distributions $f'_1,f'_2$ over $\mathbb{S}_n$ in the obvious way, by identifying $\mathbb{S}_m$ with the subgroup of $\mathbb{S}_n$ which keeps the elements $\{m+1, \ldots, n\}$ fixed, and we define $g': \mathbb{S}_n \to \R$ as $g'=f'_1-f'_2.$  To analyze these distributions, it is helpful for us to classify the Young diagrams of $[n]$ according to their locations in Young's lattice with respect to $\lambda_{\mathsf{sq}}$. In particular, we define
$\mathcal{L}_{u}$ and 
$\mathcal{L}_{n}$ as
\[
\mathcal{L}_{u} = \{\mu \vdash n : \lambda_{\mathsf{sq}} \Uparrow \mu\} \quad \quad \mathcal{L}_{n} = \{\mu \vdash n : \lambda_{\mathsf{sq}} \not \Uparrow \mu\}
\]
 \Cref{eq:construct} and \Cref{thm:branch} together implies that 
 \begin{equation}~\label{eq:zero-Fourier-1}
 \widehat{g'}(\rho_{\mu}) =0 \ \  \ \  \ \ \textrm{for }\mu \in \mathcal{L}_{n}. 
 \end{equation}
 To bound $\widehat{\mathcal{M}_{\theta}}(\rho_{\mu})$ for $\mu \in \mathcal{L}_u$, we apply Lemma~\ref{lem:upper-bound-singular-1} to get 
 \begin{equation}~\label{eq:upper-bound-3}
 \widehat{\mathcal{M}_{\theta}}(\rho_{\mu}) =c_{\mu,\theta} \cdot \mathsf{Id} \ \textrm{where} \ \ |c_{\mu,\theta}| \le \delta^t. 
 \end{equation}
 Consequently, we have

 \begin{align}
\Vert \mathcal{M}_{\theta} \ast f'_1 - \mathcal{M}_{\theta} \ast f'_2 \Vert_1  &=
 \sum_{\sigma \in \mathbb{S}_n} |{\cal M}_\theta \ast f'_1(\sigma) - {\cal M}_\theta \ast f'_2(\sigma)| \nonumber \\
 &=\sum_{\sigma \in \mathbb{S}_n} |{\cal M}_\theta \ast g'(\sigma)| \nonumber \tag{linearity and $g'=f'_1 - f'_2$}\\
&=
{\frac 1 {n!}}
\sum_{\sigma \in \mathbb{S}_n} \left|
 \sum_{\mu \vdash n}  \dim(\rho_\mu) \mathsf{Tr}[\widehat{{{\cal M}_\theta} \ast g'}(\rho_\mu) \rho_\mu(\sigma^{-1})]
\right| \tag{\Cref{def:Fourier}, inverse Fourier transform of ${\cal M}_\theta \ast g'$}\\
&=
{\frac 1 {n!}}
\sum_{\sigma \in \mathbb{S}_n} \left|
 \sum_{\mu \vdash n}  \dim(\rho_\mu) \mathsf{Tr}[\widehat{{\cal M}_\theta}(\rho_\mu)  \widehat{g'}(\rho_\mu) \rho_\mu(\sigma^{-1})]
\right| \tag{convolution identity}\\
 \end{align}

 \gray{
 OLD STUFF
 \begin{eqnarray}
\Vert \mathcal{M}_{\theta} \ast f_1 - \mathcal{M}_{\theta} \ast f_2 \Vert_1  &\le& \sqrt{n!} \cdot \Vert \mathcal{M}_{\theta} \ast f_1 - \mathcal{M}_{\theta} \ast f_2 \Vert_2 \nonumber \\
&\leq& \sqrt{
\sum_{\mu \vdash n} \Vert 
\widehat{\mathcal{M}_{\theta} \ast f_1}(\rho_{\mu}) - \widehat{\mathcal{M}_{\theta} \ast f_2}(\rho_{\mu}) 
 \Vert_F^2} \quad \textrm{using Definition~\ref{def:Fourier}} \nonumber \\
 &=& \sqrt{
\sum_{\mu \vdash n} \Vert 
\widehat{\mathcal{M}_{\theta}}(\rho_{\mu}) \cdot  \widehat{f_1- f_2} (\rho_{\mu}) 
 \Vert_F^2}  \nonumber \\
 &=& \sqrt{
\sum_{\mu  \in \mathcal{L}_{u}} \Vert 
\widehat{\mathcal{M}_{\theta}}(\rho_{\mu}) \cdot  \widehat{f_1- f_2} (\rho_{\mu}) 
 \Vert_F^2} \nonumber \quad \textrm{using (\ref{eq:zero-Fourier-1})} \\
 &\le & \sqrt{
\sum_{\mu  \in \mathcal{L}_{u}} \Vert 
\widehat{\mathcal{M}_{\theta}}(\rho_{\mu})
\Vert_{2}^2 
 \cdot \Vert \widehat{f_1- f_2} (\rho_{\mu}) \Vert_F^2}\nonumber \\
 &\le&
 \delta^t  \sqrt{
\sum_{\mu  \in \mathcal{L}_{u}} \Vert \widehat{f_1- f_2} (\rho_{\mu}) 
 \Vert_F^2} \ \ \textrm{using (\ref{eq:upper-bound-3})} \nonumber \\
 &\le& \sqrt{n!} \cdot \delta^t \cdot 
 \end{eqnarray}
 }
\end{proof}

}

\ignore{

\section{Algorithms and lower bounds for learning from partial rankings}~\label{sec:partial-ranking}

\red{rewrite this section so it doesn't talk about the $\eta$-dominant condition}

\gray{

In this section, we extend our algorithms to the setting where we only get partial rankings from a mixture model. While incomparable to the symmetric noise model and the Generalized Mallows model considered earlier in the paper, the analysis here turns out to be significantly simpler. We start by formally defining the partial ranking noise model. 
To formally define this model, for any $1 \le i \le n$, let $i^1$ be the string ``$i$" and $i^0$ be the empty string. 
\begin{definition}~\label{def:partial-noise}
A partial noise on $\mathbb{S}_n$ is specified by a non-negative vector $\mathbf{p} = (\mathbf{p}_0, \ldots, \mathbf{p}_n)$ such that $\sum_{j \ge 0} \mathbf{p}_j =1$. Given a probability distribution $\mathcal{D}$ over $\mathbb{S}_n$, we define $\partn{\mathbf{p}}{\mathcal{D}}$ as a distribution over ordered tuples of $[n]$ defined as follows: 
\begin{enumerate}
\item Choose $0 \leq j \leq n$ with probability $\mathbf{p}_j$ and choose a uniformly random subset $S$ of $ [n]$ of size $j$. 
\item Sample $\sigma \sim \mathcal{D}$ and output the string sequence $\sigma(1)^{1 \in S} \cdot \ldots \cdot
\sigma(n)^{n \in S}$. 
\end{enumerate}
In other words, we choose a permutation (or ranking) $\sigma \sim \mathcal{D}$ and output the ranking $\sigma(1) \cdot \ldots \cdot \sigma(n)$ dropping all the positions which are not in $S$. 
\end{definition}
Before we state the main theorem of this section, we need more definition. 
\begin{definition}~\label{def:upward}
Let $\mathbf{p} = (\mathbf{p}_0, \ldots, \mathbf{p}_n)$ (such that $\sum_{j \ge 0} \mathbf{p}_j =1$) be a non-negative vector and define the distribution $\mathcal{D}_{\mathbf{p}}$ on $\{0,1\}^n$ as follows: Choose $0 \le j \le n$ with probability $\mathbf{p}_j$; Now sample a uniformly random subset $S \subseteq [n]$ of size $j$. We say that $\mathbf{p}$ is $\eta$-dominant for sets of size $t$ if for any subset $S'$ of size $t$, 
\[
\Pr_{x \sim \mathcal{D}_{\mathbf{p}}} [\forall j \in S': x_j=1] \ge \eta. 
\]
\end{definition}

We now state the main theorem of this section. 
\begin{theorem}~\label{thm:main-partial-rank}
There is an algorithm with the following guarantee: Let $f$ be an unknown distribution over $\mathbb{S}_n$ such that $\mathsf{supp}(f) \le k$ and for every $x$ such
that $f(x)>0$, $f(x) \ge \epsilon$. Suppose $\mathbf{p}$ is a vector such that we get access to random samples
from $\partn{\mathbf{p}}{f}$. Further, $\mathbf{p}$ is $\eta$-dominant for sets of size $2\log k$. Given any error parameter $\delta>0$, the algorithm runs in time $\mathsf{poly}(n,k^{\log k}, 1/\eta, 1/\epsilon, 1/\delta)$ and outputs a distribution $g$ over $\mathbb{S}_n$ such that $\Vert f - g\Vert_1 \le \delta$. 
\end{theorem}
\begin{proof}
Consider the $m=\binom{n}{2}$ unordered pairs of distinct elements from $[n]$ and choose any total order for these pairs. Once this order is fixed, every permutation $\sigma \in \mathbb{S}_n$ can be uniquely mapped to an element of $\{0,1\}^m$ as follows: $\mathsf{Bin}: \sigma \mapsto x$ such that if $(i,j)$ is the pair corresponding to coordinate $\ell$ (where $i<j$), then $x_{\ell}=0$ if $\sigma(i) < \sigma(j)$ and $1$ otherwise. Applying this mapping element by element, every distribution over $\mathbb{S}_n$ also maps uniquely to a distribution over $\{0,1\}^m$. 

Let us now define $f^{\mathsf{bin}}$ denote the image of $f$ under the map. Note that $|\mathsf{supp}(f^{\mathsf{bin}})| \le k$ and that for every $x$ such that $f^{\mathsf{bin}}(x)>0$, $f^{\mathsf{bin}}(x) >\epsilon$. Next we make the following claim: 
\begin{claim}~\label{clm:partial-compute}
Let $J$ be any set of size at most $\log k$. Then, for any error $\kappa>0$, in time $\mathsf{poly}(n,k, 1/\kappa, 1/\eta)$, we can compute (up to error $\pm \kappa$)
$
\Pr_{x \sim f^{\mathsf{bin}}}[\forall j \in J \ x_j=1]
$. 
\end{claim}
\begin{proof}
For any $j \in J$, there is a pair of distinct elements in $(r,s)$ and $\mathsf{Bin}(r,s) =j$. Let us define $\mathsf{Bin}^{-1}(J) = \cup_{j \in  J: \mathsf{Bin}((r,s))=j} (r,s)$. Clearly, $|\mathsf{Bin}^{-1}(J) | \le 2|J|$. We now consider the following estimator for
$
\Pr_{x \sim f^{\mathsf{bin}}}[\forall j \in J \ x_j=1]
$.  
\begin{enumerate}
\item Set $\mathsf{Count-Total}=0$ and $\mathsf{Count-pos}=0$. 
\item Repeat $T = \frac{2}{\kappa^2 \cdot \eta}$ times. 
\item \hspace*{7pt} Sample $\gamma \sim \partn{\mathbf{p}}{f}$. 
If $\mathsf{Bin}^{-1}(J)$ is not contained in $\gamma$, discard the sample. 
\item \hspace*{7pt} Else,  $\mathsf{Count-Total} +=1$; If $\forall j \in J$, $(r,s) = \mathsf{Bin}^{-1}(j)$ and $r$ appears before $s$ in $\gamma$, then $\mathsf{Count-pos} +=1$. 
\item Output $\frac{\mathsf{Count-pos}}{\mathsf{Count-Total}}$. 
\end{enumerate}
To analyze this estimator, let us define $\mathcal{E}$ to be the event that the sample $\gamma$ is not discarded, i.e., $\mathsf{Bin}^{-1}(J)$ is  contained in $\gamma$. We now make the following claims. 
\begin{claim}~\label{clm:event-bound-1}
For $\gamma \sim \partn{\mathbf{p}}{f}$, $\Pr[\mathcal{E}] \ge \eta$. 
\end{claim}
\begin{proof}
Note that $|\mathsf{Bin}^{-1}(J)| \le 2 \log k$. Since $\mathbf{p}$ is $\eta$-dominating for sets of $2 \log k$, with probability at least $\eta$, the set $\mathsf{Bin}^{-1}(J)$ appears in $\gamma$ finishing the proof. 
\end{proof}
\begin{claim}~\label{clm:empirical}
Conditioned on $\mathcal{E}$, 
\[
\Pr[\forall j \in J, (r,s) = \mathsf{Bin}^{-1}(j): \textrm{r appears before s}] = \Pr_{x \sim f^{\mathsf{bin}}}[\forall j \in J \ x_j=1]. 
\]
\end{claim}
\begin{proof}
This proof is obvious but the notation needs to be improved substantially -- otherwise, the proof reads like crap. So, I will do this later. 
\end{proof}
Combining Claims~\ref{clm:event-bound-1} and \ref{clm:empirical}, we immediately get Claim~\ref{clm:partial-compute}. 
\end{proof}
Once we have Claim~\ref{clm:partial-compute}, we can as a black-box, apply the Wigderson-Yehudayoff algorithm to recover  $f^{\mathsf{bin}}$ (and consequently $f$). 
\end{proof}

\red{State the corresponding lower bound. Prove it.}

}
}

\newpage
\appendix

\section{Basics of representation theory over the symmetric group}  \label{sec:rep}
Representation theory of the symmetric group $\mathbb{S}_n$ is at the technical core of this paper. In this appendix we briefly review the definitions and results that we require, starting first with general groups and then specializing to $\mathbb{S}_n$ as necessary.  See Curtis and Reiner~\cite{curtis1966representation} (or many other sources) for an extensive reference on representation theory of finite groups
and James~\cite{james2006representation} or M{\'e}liot~\cite{meliot2017representation} for an extensive reference on representation theory of $S_n$.

\subsection{General groups} \label{sec:repgeneral}

We start by recalling the definition of a representation:

\begin{definition}~\label{def:representation}
	For any group $G$, a \emph{representation} $\rho: G \rightarrow \mathbb{C}^{m \times m}$ is a group homomorphism,  i.e.~a function from $G$ to $\mathbb{C}^{m \times m}$ that satisfies $\rho(g) \cdot \rho(h) = \rho(g \cdot h)$ for all $g,h \in G.$ The \emph{dimension} of such a representation $\rho$ is $m$.
\end{definition}
In this paper, unless otherwise mentioned, all  representations $\rho$  are \emph{unitary} -- in other words, for every $g \in G$, $\rho(g)$ is a unitary matrix. Over finite groups, any representation can be made unitary by applying a similarity transformation; by this we mean that if $\rho$ is a representation, then there is an invertible matrix $Z$ such that the new map $\tilde{\rho}$ defined as $\tilde{\rho} (g) = Z^{-1} \cdot \rho(g) \cdot Z$ is a unitary representation. (The reader should verify that as long as $Z$ is invertible, the map $\tilde{\rho}$ is always a representation if $\rho$ is a representation.) 
Two such representations $\rho$ and $\tilde{\rho}$ are said to be \emph{equivalent}.

Next we recall the notion of an irreducible representation:

\begin{definition}~\label{def:irred}
A representation $\rho: G \rightarrow \mathbb{C}^{m \times m}$ is said to be \emph{reducible} if there exists a proper subspace $V$ of $\mathbb{C}^{m}$ such that $\rho (g) \cdot V \subseteq V$ for all $g \in G$. If there is no such proper subspace $V$, then $\rho$ is said to be \emph{irreducible}. 
\end{definition}

It is well known that any finite group has only finitely many irreducible representations, up to the above notion of equivalence, and that every representation of a finite group $G$ can be written as a direct sum of irreducible representations:

\begin{theorem} [Maschke's theorem, see e.g.~Theorem 1.3 of \cite{meliot2017representation}] \label{thm:Maschke}
For $G$ a finite group, there is a finite set of distinct irreducible representations $\{\rho_1, \ldots, \rho_r\}$ such that for any representation $\rho: G \rightarrow \mathbb{C}^{m \times m}$, there is a invertible transformation $Z \in \mathbb{C}^{m \times m}$ such that  $Z^{-1}  \rho Z$ is block diagonal where each block is one of $\{\rho_1, \ldots, \rho_r\}$. In other words, $Z^{-1} \rho Z$ is equal to the direct sum $\oplus_{\ell=1}^M \mu_\ell$ where each $\mu_\ell$ is an element of $\{\rho_1, \ldots, \rho_r\}$.
\end{theorem}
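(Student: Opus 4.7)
The plan is to prove Maschke's theorem in three stages, following the classical approach. First, I would show that any representation of a finite group is equivalent to a unitary representation via the standard ``averaging trick.'' Given $\rho: G \to \mathbb{C}^{m \times m}$, I define a new Hermitian inner product on $\mathbb{C}^m$ by $\langle u, v\rangle_G := \frac{1}{|G|}\sum_{g \in G}\langle \rho(g)u, \rho(g)v\rangle$, where the inner product on the right is the standard one. This is positive definite (as a finite positive average of positive definite forms), and by reindexing the sum, $\rho(h)$ preserves $\langle\cdot,\cdot\rangle_G$ for every $h \in G$. Letting $Z$ be the change-of-basis matrix from the standard basis to an orthonormal basis for $\langle\cdot,\cdot\rangle_G$, the conjugated representation $Z^{-1}\rho Z$ is unitary in the usual sense.

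Next, I would show by induction on $m$ that any unitary representation decomposes into a direct sum of irreducibles. If $\rho$ is irreducible we are done. Otherwise, by \Cref{def:irred} there is a proper nonzero invariant subspace $V \subset \mathbb{C}^m$. The key point is that, because $\rho$ is unitary, the orthogonal complement $V^{\perp}$ is also invariant: for $v \in V^{\perp}$, $w \in V$, and $g \in G$, we have $\langle \rho(g)v, w\rangle = \langle v, \rho(g)^{*}w\rangle = \langle v, \rho(g^{-1})w\rangle = 0$, since $\rho(g^{-1})w \in V$. Thus $\mathbb{C}^m = V \oplus V^{\perp}$ decomposes into a direct sum of two invariant subspaces of strictly smaller dimension, and applying the induction hypothesis to each summand yields the block-diagonal decomposition after a further change of basis; composing this with the unitarization transformation gives the desired $Z$.

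Finally, I would argue that there are only finitely many equivalence classes of irreducible representations. The cleanest way is to consider the left regular representation $\rho_{\mathrm{reg}}: G \to \mathbb{C}^{|G| \times |G|}$ acting on $\mathbb{C}[G]$. A standard computation using Schur orthogonality (or directly, by comparing dimensions) shows that every irreducible representation of $G$ is equivalent to a subrepresentation of $\rho_{\mathrm{reg}}$. Since $\rho_{\mathrm{reg}}$ has finite dimension $|G|$, the decomposition obtained in the previous step can contain only finitely many distinct irreducibles (up to equivalence), so the set $\{\rho_1,\ldots,\rho_r\}$ claimed in the theorem is well-defined and finite.

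The main obstacle is conceptual rather than technical: the averaging trick in the first step is the characteristic feature that distinguishes finite-group (or more generally, compact-group) representation theory from the modular case, and it is what allows the clean orthogonal-complement argument in step two. The argument for finiteness of the number of irreducibles is the least elementary, but it reduces to the standard structural fact about the regular representation, which I would simply cite from a reference such as \cite{meliot2017representation} if a self-contained proof were not required.
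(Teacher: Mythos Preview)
Your proof is correct and follows the classical unitarization-plus-orthogonal-complement argument for complete reducibility, together with the regular-representation argument for finiteness of the set of irreducibles. However, the paper does not actually prove \Cref{thm:Maschke}: it is stated in the appendix as standard background and simply referred to Theorem~1.3 of \cite{meliot2017representation}. So there is no ``paper's own proof'' to compare against; your write-up supplies a self-contained proof where the paper is content to cite one.
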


We remind the reader that elements $g,h$ in a group $G$ are said to be \emph{conjugates} if there is an element $t \in G$ such that $tgt^{-1}=h.$  Define $\mathsf{Cl}(g)$, the \emph{conjugacy class} of $g$, to be $\{h: h \textrm{ is conjugate to } g\}$; it is easy to see that the different conjugacy classes form a partition of $G$.  

We recall some very standard facts about irreducible representations: 

\begin{theorem} [see e.g. Theorem~2.3.1 of \cite{Wigderson:representationtheory}] \label{thm:irrep-facts} 
Let $G$ be a finite group and let $\{\rho_1, \ldots, \rho_r\}$ be the set of its irreducible representations, where $\rho_i: G \rightarrow \mathbb{C}^{d_i \times d_i}$. Then 
\begin{enumerate}
\item $\sum_{i=1}^r d_i^2 = |G|$. 
\item The number of conjugacy classes is equal to $r$, the number of distinct irreducible representations.
\item For $1 \le s,t \le d_i$, let $\rho_{i,s,t}: G \rightarrow \mathbb{C}$  be the $(s,t)$ entry of $\rho_i(g)$. Then,  for $1 \le i_1, i_2 \le r$, $1 \le s_1,t_1 \le d_{i_1}$ and $1 \le s_2,t_2 \le d_{i_2}$
\[
\Ex_{g \in G}[\rho_{i_1, s_1, t_1}(g) \cdot \overline{\rho_{i_2, s_2, t_2}(g)}] = \begin{cases} \frac{1}{d_{i_1}} &\textrm{if }i_1=i_2, \ s_1=s_2 \textrm{ and } t_1=t_2 \\
0&\textrm{ otherwise} \end{cases}
\]
\item The representations $\rho_1, \ldots, \rho_r$ are unitary. 
\end{enumerate}
\end{theorem}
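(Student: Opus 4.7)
The plan is to prove the four parts in an order that makes the logical dependencies cleanest: unitarity (part 4) first, then Schur's Lemma and orthogonality (part 3), then the sum-of-squares identity (part 1), and finally the count of conjugacy classes (part 2). Each part is standard, but they build on each other in a natural way.

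First I would establish part 4 by an averaging trick. Given an arbitrary representation $\rho: G \to \mathbb{C}^{m \times m}$, define a new Hermitian inner product on $\mathbb{C}^m$ by $\langle v, w\rangle_G := \sum_{g \in G} \langle \rho(g) v, \rho(g) w\rangle$. This is manifestly $G$-invariant, and choosing an orthonormal basis with respect to $\langle \cdot, \cdot\rangle_G$ produces a change of basis $Z$ so that $Z^{-1} \rho Z$ is unitary. Specializing to each irreducible gives part 4. Next I would prove Schur's Lemma: if $\rho, \sigma$ are irreducible representations and $T \in \mathbb{C}^{d_\sigma \times d_\rho}$ satisfies $\sigma(g) T = T \rho(g)$ for all $g$, then $T = 0$ unless $\rho \simeq \sigma$, in which case $T$ is a scalar multiple of the identity (the kernel and image of $T$ are $G$-invariant subspaces, hence trivial by irreducibility; when $\rho = \sigma$, take an eigenvalue $\lambda$ of $T$ and apply the argument to $T - \lambda I$).

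For part 3 (Schur orthogonality), I would take an arbitrary matrix $M$ of appropriate dimensions and average to form $T_M := \frac{1}{|G|}\sum_{g \in G} \rho_{i_2}(g^{-1}) M \rho_{i_1}(g)$. A direct computation shows $\rho_{i_2}(h) T_M = T_M \rho_{i_1}(h)$ for all $h$, so Schur's Lemma applies: $T_M = 0$ if $i_1 \neq i_2$, and $T_M = \frac{\mathrm{tr}(M)}{d_{i_1}} I$ if $i_1 = i_2$ (the trace is determined by taking the trace of both sides of $T_M = c \cdot I$ and using that $\mathrm{tr}(\rho_{i_1}(g^{-1}) M \rho_{i_1}(g)) = \mathrm{tr}(M)$). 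Reading off the $(s_1, t_1)$ entry of $T_M$ with $M$ chosen as a suitable elementary matrix produces the stated orthogonality relation.

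For part 1, I would introduce the (left) regular representation $\rho_{\mathrm{reg}}: G \to \mathbb{C}^{|G| \times |G|}$ on $\mathbb{C}[G]$, where $\rho_{\mathrm{reg}}(g)$ permutes the basis $\{e_h\}_{h \in G}$ by $e_h \mapsto e_{gh}$. Its character satisfies $\chi_{\mathrm{reg}}(e) = |G|$ and $\chi_{\mathrm{reg}}(g) = 0$ for $g \neq e$. Decomposing $\rho_{\mathrm{reg}} = \bigoplus_i m_i \rho_i$ via Theorem~\ref{thm:Maschke} and extracting multiplicities using the character orthogonality derived from part 3, one finds $m_i = d_i$, so $|G| = \dim(\rho_{\mathrm{reg}}) = \sum_i m_i d_i = \sum_i d_i^2$. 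Finally, for part 2, I would argue that the irreducible characters $\{\chi_i\}_{i=1}^r$ span the space of class functions on $G$. They are linearly independent by the orthogonality from part 3 (traces of the relation in part 3 give $\langle \chi_i, \chi_j\rangle = \delta_{ij}$); to show they span, one checks that the orthogonal complement is zero by averaging any putative class function against the regular representation and invoking Schur's lemma on each isotypic component. Since the space of class functions has dimension equal to the number of conjugacy classes of $G$, this yields $r = \#\{\mathrm{conjugacy\ classes}\}$. The main obstacle throughout is Schur's Lemma --- once it is in hand, every other piece is a short averaging or dimension-counting argument.
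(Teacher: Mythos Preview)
Your proof sketch is correct and follows the standard textbook development of these facts (unitarization by averaging, Schur's Lemma, Schur orthogonality via the averaged intertwiner $T_M$, decomposition of the regular representation, and the character-basis argument for class functions). There is nothing to compare against, however: the paper does not prove Theorem~\ref{thm:irrep-facts} at all. It is stated as background with a citation to an external reference (``see e.g.\ Theorem~2.3.1 of \cite{Wigderson:representationtheory}'') and used without proof. So your write-up is a valid self-contained proof of a result the paper simply quotes.
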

A restatement of (3) above is that the functions $\{\rho_{i,s,t}(\cdot)\}$ are orthogonal. Combining this with $\sum_{i=1}^r d_i^2 = |G|$ (given by (1)), we get that the functions $\{\rho_{i,s,t}\}_{1\le i\le r, 1 \le s,t \le d_i}$ form an orthogonal basis for $\mathbb{C}^{G}$. 

With an orthonormal basis for the set of complex-valued functions on $G$ in hand (in other words, a basis for the group algebra $\mathbb{C}[G]$), we are ready to define the \emph{Fourier transform} of a function $f: G \rightarrow \mathbb{C}$:

\begin{definition}~\label{def:Fourier}
Let $G$ be a finite group with irreducible representations given by $\{\rho_1, \ldots, \rho_r\}$ and let $f: G \to \mathbb{C}.$  The \emph{Fourier transform} of $f$ is given by matrices $\widehat{f}(\rho_1), \ldots, \widehat{f}(\rho_r)$, where 
$$
\widehat{f}(\rho_i) = \sum_{g \in G} f(g) \cdot \rho_i(g). 
$$
The inverse transform is given by
\[
f(g) = {\frac 1 {|G|}} \sum_{i=1}^r \dim(\rho_i) \mathsf{Tr}[\widehat{f}(\rho_i) \rho_i(g^{-1})].
\]
\end{definition}

\emph{Parseval's identity} states that for any $f$ as above, we have 
\begin{equation}~\label{eq:Parseval}
\sum_{i=1}^r \Vert \widehat{f}(\rho_i) \Vert_F^2 = |G| \cdot  \sum_{g \in G} |f(g)|^2.
\end{equation}

We next recall the definition of characters and class functions for a group $G$. 
\begin{definition}~\label{def:class-function}
Given a finite group $G$, a function $f: G \rightarrow \mathbb{C}$ is said to be a \emph{class function of $G$} if $f(g)$ only depends on the conjugacy class of $g$, i.e.~$f(g) = f(hgh^{-1})$ for every $h \in G$.
\end{definition}

\begin{definition}~\label{def:characters}
The \emph{character} $\chi_\rho: G \rightarrow \mathbb{C}$ corresponding to a representation $\rho: G \rightarrow \mathbb{C}^{m \times m}$ is given by $\chi_\rho(g) := \mathsf{Tr}(\rho(g))$. 
\end{definition}
We observe that $\chi_\rho(\cdot)$ is a class function of $G$, and that if $\rho$ and $\tilde{\rho}$ are unitarily equivalent, then $\chi_\rho(\cdot) = \chi_{\tilde{\rho}}(\cdot)$. 
We recall some standard facts about characters and class functions:

\begin{theorem}~\label{thm:characters}
Let $G$ be a finite group and let $\{\rho_1, \ldots, \rho_r\}$ be its set of irreducible representations. Let $\chi_{\rho_1}, \ldots, \chi_{\rho_r}$ be the corresponding characters. Then we have: 
\begin{enumerate}
\item ~ [Schur's lemma] $\mathbf{E}_{g \in G}[\chi_{\rho_i}(g) \cdot \overline{\chi_{\rho_j}(g)}] = \delta_{i,j}$. 
\item The functions $\{\chi_{\rho_i}(\cdot)\}_{1 \le i \le r}$ forms an orthonormal basis for all class functions of $G$.  
\end{enumerate}
\end{theorem}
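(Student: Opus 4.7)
The plan is to derive both parts as direct corollaries of the matrix-entry orthogonality relations stated in Theorem~\ref{thm:irrep-facts}, together with the conjugacy-class counting result also listed there. The key observation is that every character is a diagonal sum $\chi_{\rho_i}(g) = \sum_{s=1}^{d_i} \rho_{i,s,s}(g)$, so statements about characters reduce to statements about the matrix-entry functions $\rho_{i,s,t}$, whose inner products are already known.

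For part~(1), I would expand
\[
\mathbf{E}_{g \in G}\!\left[\chi_{\rho_i}(g)\,\overline{\chi_{\rho_j}(g)}\right]
= \sum_{s=1}^{d_i} \sum_{t=1}^{d_j} \mathbf{E}_{g \in G}\!\left[\rho_{i,s,s}(g)\,\overline{\rho_{j,t,t}(g)}\right]
\]
and apply part~(3) of Theorem~\ref{thm:irrep-facts}. Every term with $i \neq j$ vanishes; when $i = j$, only diagonal-diagonal terms with $s = t$ survive, each contributing $1/d_i$, and there are exactly $d_i$ such terms, giving a total of $1$. This yields the Kronecker delta.

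For part~(2), I would argue in three short steps. First, each $\chi_{\rho_i}$ is a class function because traces are invariant under conjugation: $\chi_{\rho_i}(hgh^{-1}) = \mathsf{Tr}(\rho_i(h)\rho_i(g)\rho_i(h)^{-1}) = \mathsf{Tr}(\rho_i(g)) = \chi_{\rho_i}(g)$. Second, by part~(1) just proved, the family $\{\chi_{\rho_i}\}_{i=1}^r$ is orthonormal in $\mathbb{C}[G]$ under the normalized inner product $\langle f_1,f_2\rangle := \mathbf{E}_{g \in G}[f_1(g)\overline{f_2(g)}]$, and is in particular linearly independent. Third, the space of class functions has dimension equal to the number of conjugacy classes, which by part~(2) of Theorem~\ref{thm:irrep-facts} is exactly $r$; so an orthonormal set of $r$ class functions must span the whole space.

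The only nontrivial step in the plan is the dimension count in the last step, and we are explicitly allowed to cite it from Theorem~\ref{thm:irrep-facts}. Thus no serious obstacle arises; the proof is essentially a bookkeeping exercise built on top of the matrix orthogonality relations and the class-count equality stated earlier.
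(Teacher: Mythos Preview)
Your proof is correct and is the standard derivation of character orthogonality from matrix-entry orthogonality. However, the paper does not actually prove Theorem~\ref{thm:characters}: it is stated without proof in the appendix as one of several ``standard facts about characters and class functions'' that the paper merely recalls as background. So there is no paper proof to compare against; your argument simply supplies the routine verification that the paper chose to omit, relying exactly on the ingredients (parts~(2) and~(3) of Theorem~\ref{thm:irrep-facts}) that the paper has already set up for this purpose.
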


The following (standard) claim shows that the Fourier transform of any class function $f$ is a diagonal matrix (in fact, a scalar multiple of the identity matrix):
\begin{lemma}~\label{lem:class-func}
Let $f: G \rightarrow \mathbb{C}$ be a class function and let $\rho : G \rightarrow \mathbb{C}^{m \times m}$ be an irreducible representation of $G$. Then $\widehat{f}(\rho) = c \cdot \mathsf{Id}$ where $c = \frac{\sum_{g \in G} f(g) \chi_{\rho}(g)}{m}$ and $\mathsf{Id}$ is the identity matrix.
\end{lemma}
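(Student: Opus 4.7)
The plan is to combine two standard ingredients: the class-function symmetry of $f$ (to show that $\widehat{f}(\rho)$ commutes with every $\rho(h)$) together with Schur's lemma (to conclude from this that $\widehat{f}(\rho)$ must be a scalar multiple of the identity). The constant is then pinned down by a trace computation.

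First I would show that $\rho(h) \widehat{f}(\rho) \rho(h)^{-1} = \widehat{f}(\rho)$ for every $h \in G$. Starting from the definition $\widehat{f}(\rho) = \sum_{g \in G} f(g) \rho(g)$ and pushing $\rho(h)$ and $\rho(h)^{-1}$ inside the sum yields $\sum_g f(g) \rho(hgh^{-1})$. Re-indexing via $g' = hgh^{-1}$ (equivalently $g = h^{-1} g' h$) and using the class-function property $f(h^{-1} g' h) = f(g')$ turns this back into $\sum_{g'} f(g') \rho(g') = \widehat{f}(\rho)$. Hence $\rho(h)$ commutes with $\widehat{f}(\rho)$ for all $h \in G$.

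Next, I would invoke Schur's lemma: since $\rho$ is irreducible and the matrix $\widehat{f}(\rho) \in \mathbb{C}^{m \times m}$ intertwines $\rho$ with itself (i.e. commutes with $\rho(h)$ for every $h$), it must be a scalar multiple of the identity. Thus $\widehat{f}(\rho) = c \cdot \mathsf{Id}$ for some $c \in \mathbb{C}$.

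Finally, to identify $c$ I would take traces on both sides. On the one hand $\mathsf{Tr}(c \cdot \mathsf{Id}) = c \cdot m$. On the other hand, using linearity of trace and the definition of the character, $\mathsf{Tr}\bigl(\widehat{f}(\rho)\bigr) = \sum_{g \in G} f(g) \, \mathsf{Tr}(\rho(g)) = \sum_{g \in G} f(g) \chi_\rho(g).$ Equating these gives $c = \frac{\sum_{g \in G} f(g) \chi_\rho(g)}{m}$, as claimed. There is no real obstacle here; the only ``nontrivial'' step is the use of Schur's lemma, which is a standard black box in this setting.
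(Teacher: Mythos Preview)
Your proposal is correct and follows essentially the same approach as the paper: show that $\widehat{f}(\rho)$ commutes with every $\rho(h)$ using the class-function property, apply Schur's lemma to deduce it is a scalar multiple of the identity, and read off the scalar by taking traces. The only cosmetic difference is that the paper writes the commutation as $\rho(h)\widehat{f}(\rho)=\widehat{f}(\rho)\rho(h)$ and performs the re-indexing slightly differently, but the argument is the same.
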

\begin{proof}
Choose any $h \in G,$ and observe that 
\begin{eqnarray*}
\rho(h) \cdot \widehat{f}(\rho)  &=&  \rho (h) \cdot \big(
\sum_{g \in G} f(g) \rho(g) \big) \\
&=&  \rho (h) \cdot \big(
\sum_{g \in G} f(h^{-1}  g  h) \rho(h^{-1}  g  h) \big)= \rho (h) \cdot \big(
\sum_{g \in G} f(  g  ) \rho(h^{-1}  g  h) \big) \\
&=& \rho(h) \cdot \rho(h^{-1}) \cdot \big(
\sum_{g \in G} f(  g  ) \rho(g) \big) \cdot \rho(h) = \widehat{f}(\rho) \cdot \rho(h). 
\end{eqnarray*}
As a consequence of Schur's lemma, we have that if a matrix $A$ is such that $A \cdot \rho(h) = \rho(h) \cdot A$ for all $h \in G$, then $A = c \cdot \mathsf{Id}$. Thus, we get that $\widehat{f}(\rho) =c \cdot \mathsf{Id}$. The lemma follows by taking trace on both sides. 
\end{proof}

\subsection{Representation theory of the symmetric group} \label{sec:repsym}

Representation theory of the symmetric group has many applications to algebra, combinatorics and statistical physics and has been intensively studied (as mentioned earlier, see e.g.~\cite{james2006representation,meliot2017representation} for detailed treatments).  Below we only recall a few basics which we will need.

The first notion we require is that of a \emph{Young diagram.}
Consider a partition $\lambda  = (\lambda_1, \ldots, \lambda_k)$ of $n$ where $\lambda_1 \ge \lambda_2 \ge \ldots \ge  \lambda_k>0$ and $\lambda_1 + \ldots + \lambda_k=n$. We indicate that $\lambda$ is such a partition by writing ``$\lambda\vdash n$.''
The Young diagram corresponding to such a partition $\lambda$ is a two-dimensional left-justified array of empty cells in which the $i^{th}$ row has $\lambda_i$ cells. See the left portion of Figure~\ref{fig:young-diagram} for an example of a Young diagram.  
A \emph{Young tableau} corresponding to a partition $\lambda$ is obtained by filling in the $n$ cells of the Young diagram with the elements of $[n]$, using each element exactly once, where the ordering within rows of the Young diagram is irrelevant.

\bigskip 

\begin{center}
\begin{figure}[ht]
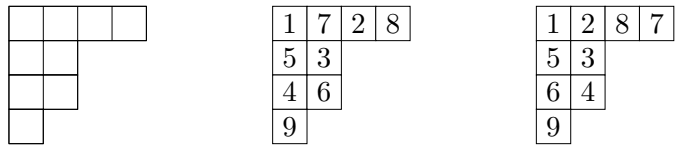

\quad
\quad
\quad
\quad 
\quad
\quad
\quad
\quad
\quad
\quad 
\yng(4,2,2,1) 
\quad
\quad
\quad
\quad 
\young(1728,53,46,9) 
\quad \quad \quad \quad
\young(1287,53,64,9) 
\caption{On the left is the Young diagram for the partition $\lambda = (4,2,2,1)$.  The middle and right present two equivalent Young tableaus for $(\{1,7,2,8\}, \{5,3\}, \{4,6\}, \{9\})$.}
\label{fig:young-diagram}
\end{figure}
\end{center}

For each partition $\lambda=(\lambda_1,\dots,\lambda_k)$ of $n$, there is an associated representation, denoted $\tau_{\lambda}$, which we now define.  Let $N_{\lambda} = {n \choose \lambda_1,\dots,\lambda_k}$ be the number of Young tableaus corresponding to partition $\lambda$, and let $\mathsf{Y}_{\lambda,1}, \ldots, \mathsf{Y}_{\lambda,N_{\lambda}}$ be an enumeration of these tableaus in some order.

\begin{definition}\label{def:replambda}
The \emph{permutation representation $\tau_\lambda$ corresponding to $\lambda$} is defined as follows:  For each $g \in \mathbb{S}_n$, $\tau_\lambda(g)$ is the $N_\lambda \times N_\lambda$ matrix (where we view rows and columns as indexed by Young tableaus corresponding to $\lambda$) which has $\tau_{\lambda}(g)(i,j)=1$ iff $\mathsf{Y}_{\lambda,i}$ maps to $\mathsf{Y}_{\lambda,j}$ under the action of $g$.
\end{definition}
 It is easy to check that $\tau_{\lambda} : \mathbb{S}_n \rightarrow \mathbb{C}^{N_{\lambda} \times N_{\lambda}}$ as defined above is indeed a representation. In fact, since the range of $\tau_{\lambda}$ is always a permutation matrix, $\tau_{\lambda}$ is also a unitary representation.
 
It turns that for $\lambda \not = (n)$, the permutation representation $\tau_{\lambda}$ is not an irreducible representation. However, it also turns out that all of the irreducible representations of $\mathbb{S}_n$ can be obtained from the permutation representations.  To explain this, we need to define a partial order over partitions of $n$:

\begin{definition}~\label{def:dominance-order}
For two partitions $\lambda$ and $\mu$ of $n$, we say that \emph{$\lambda$ dominates $\mu$}, written $\lambda$ $\rhd \mu$, if $\sum_{j \le i} \lambda_j \ge \sum_{j \le i} \mu_j$ for all $i >0$. The partial order defined by $\rhd$ is said to be the \emph{dominance order} over the partitions (equivalently, Young diagrams) of $n$. 
\end{definition}
\begin{center}
\begin{figure}[ht]
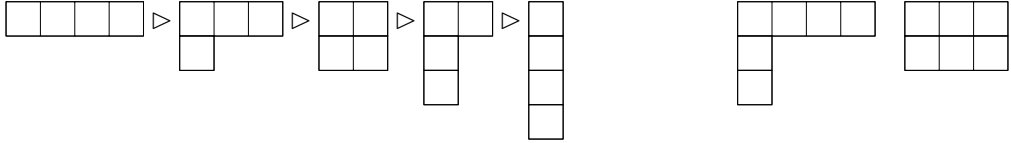

\[
\yng(4)  \rhd 
\yng(3,1)  \rhd  \yng(2,2) \rhd \yng(2,1,1) \rhd \yng(1,1,1,1)
\quad \quad \quad \quad \quad \quad
\yng(4,1,1) \quad \yng(3,3)
\]
\label{fig:young-diagram-order}
\caption{The left part of the picture depicts the dominance order across the partitions of 4; it happens to be the case that the dominance order is a total order across the partitions of 4.  This is not true in general; as depicted on the right, the two partitions $(4,1,1)$ and $(3,3)$ of 6 are incomparable under the dominance order.}
\end{figure}
\end{center}

The next result explains how the irreducible representations of $\mathbb{S}_n$ can be obtained from the representations $\{\tau_{\lambda}\}_{\lambda \vdash n}$: 
\begin{theorem} [James submodule theorem, see e.g. Theorem~3.34 of \cite{meliot2017representation}] \label{thm:James-submodule}
The irreducible representations of $\mathbb{S}_n$ are in one-to-one correspondence with the partitions $\lambda \vdash n$; we denote the irreducible representation corresponding to $\lambda$ by $\rho_{\lambda}$. In particular, when $\lambda=(n)$, then $\rho_{\lambda}$ is the trivial irreducible representation (which maps each $g \in G$ to 1).
Moreover, each permutation representation $\tau_\lambda$ is a direct sum of irreducible representations corresponding to partitions which dominate $\lambda$, i.e.
\[
\tau_{\lambda} = \mathop{\oplus}_{\mu \rhd \lambda} \mathop{\oplus}_{\ell=1}^{K_{\lambda, \mu}} \rho_{\mu}.
\]
Here the $K_{\lambda, \mu}$'s are non-negative integers, known as the \emph{Kostka numbers}, which are such that $K_{\lambda, \lambda}=1$. 
\end{theorem}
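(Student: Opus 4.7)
The plan is to prove the James submodule theorem via the classical construction of Specht modules $S^{\lambda}$, realized as submodules of the permutation modules $\tau_{\lambda} = M^{\lambda}$. The first step is to set up notation: identify $\tau_{\lambda}$ with the $\C$-span $M^{\lambda}$ of \emph{tabloids} of shape $\lambda$ (Young tableaux modulo row-permutations), with $\mathbb{S}_n$ acting by permuting entries. For each tableau $t$ of shape $\lambda$ with row-stabilizer $R_t$ and column-stabilizer $C_t$, define the \emph{polytabloid} $e_t := \sum_{\pi \in C_t} \mathrm{sgn}(\pi)\, \pi \cdot \{t\}$, where $\{t\}$ is the tabloid class of $t$, and let $S^{\lambda} \subseteq M^{\lambda}$ be the $\mathbb{S}_n$-submodule spanned by all $e_t$.

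Next I would carry out the three core steps. First, prove that each $S^{\lambda}$ is irreducible using \emph{James's submodule theorem} (the sign lemma): endow $M^{\lambda}$ with the inner product making distinct tabloids orthonormal; then for any $\mathbb{S}_n$-submodule $V \subseteq M^{\lambda}$, one shows that either $V \supseteq S^{\lambda}$ or $V \subseteq (S^{\lambda})^{\perp}$, by examining the action of column-antisymmetrizers $\kappa_t := \sum_{\pi \in C_t} \mathrm{sgn}(\pi)\pi$ on an arbitrary element of $V$. Since we are in characteristic zero, $S^{\lambda} \not\subseteq (S^{\lambda})^{\perp}$ (indeed $\langle e_t, e_t\rangle = |C_t| \neq 0$), so no proper nonzero submodule of $S^{\lambda}$ can exist. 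Second, prove pairwise inequivalence: a key combinatorial lemma says that if $\mu \not\rhd \lambda$, then any $\mathbb{S}_n$-map $S^{\mu} \to M^{\lambda}$ is zero; this uses a swap-within-a-column argument (Garnir-style) which forces rows of $\lambda$ to accommodate more entries than dominance permits. Applied symmetrically, this yields $S^{\lambda} \cong S^{\mu} \Rightarrow \lambda = \mu$.

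Third, I would complete the correspondence by counting. The number of partitions of $n$ equals the number of conjugacy classes of $\mathbb{S}_n$ (both classify cycle types), which by Theorem~\ref{thm:irrep-facts}(2) equals the number of inequivalent irreducibles; since the $\{S^{\lambda}\}_{\lambda \vdash n}$ are pairwise inequivalent irreducibles, they constitute \emph{all} irreducibles. The special case $\lambda = (n)$ gives $S^{(n)} = M^{(n)} = \C$, the trivial representation. For the decomposition $\tau_{\lambda} = \bigoplus_{\mu} K_{\lambda,\mu}\, \rho_{\mu}$, Maschke (Theorem~\ref{thm:Maschke}) gives a direct-sum decomposition, and the multiplicity of $\rho_{\mu}$ is $\dim \mathrm{Hom}_{\mathbb{S}_n}(S^{\mu}, M^{\lambda})$; by the inequivalence step this vanishes unless $\mu \rhd \lambda$, giving the dominance restriction. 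Finally, one identifies $K_{\lambda,\mu}$ with the number of semistandard Young tableaux of shape $\mu$ and content $\lambda$ by writing down an explicit basis of $\mathrm{Hom}_{\mathbb{S}_n}(S^{\mu}, M^{\lambda})$; the fact that there is exactly one such tableau when $\mu = \lambda$ (the canonical one with $i$'s in row $i$) yields $K_{\lambda,\lambda} = 1$.

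The main obstacle will be the submodule theorem in step one, which is the heart of James's construction and requires careful combinatorial bookkeeping about how $\kappa_t$ acts on a tabloid $\{s\}$: specifically, showing that $\kappa_t\{s\}$ is a nonzero scalar multiple of $e_t$ whenever no two entries sharing a column of $t$ share a row of $s$, and zero otherwise. The second nontrivial piece is the Garnir-type inequality underpinning the vanishing of $\mathrm{Hom}(S^{\mu}, M^{\lambda})$ when $\mu \not\rhd \lambda$; once these two combinatorial lemmas are in hand, the remainder of the argument (irreducibility, counting, multiplicity formula) is essentially formal. I expect Maschke's theorem and the character orthogonality of Theorem~\ref{thm:irrep-facts} to do most of the ``high-level'' work, with the combinatorics of polytabloids providing the content specific to $\mathbb{S}_n$.
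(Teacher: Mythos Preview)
The paper does not prove this theorem at all: it is stated in the appendix as standard background and simply cited to M\'eliot's textbook (Theorem~3.34 of \cite{meliot2017representation}). So there is no ``paper's own proof'' to compare against; the authors are treating the James submodule theorem as a black box from the representation-theory literature.

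Your proposal is a correct and essentially complete outline of the classical textbook proof via Specht modules, and it is precisely the route taken in the cited references (James, M\'eliot). The three pillars you identify---the submodule lemma giving irreducibility of $S^\lambda$, the dominance lemma giving $\mathrm{Hom}(S^\mu, M^\lambda)=0$ unless $\mu \rhd \lambda$, and the counting of conjugacy classes---are exactly the standard ingredients, and the identification of $K_{\lambda,\mu}$ with the count of semistandard tableaux of shape $\mu$ and content $\lambda$ is the correct way to recover $K_{\lambda,\lambda}=1$. There is nothing to correct here; you have reconstructed the proof that the paper is citing.
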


\ignore{
The next thing we want to think about is the algebra of functions $\mathbb{R}[\mathbb{S}_n]$. In particular, consider any function $f: \mathbb{S}_n \rightarrow \mathbb{R}$ -- much like abelian groups, functions over $\mathbb{S}_n$ admit a non-abelian Fourier expansion defined in the following way. For any representation $\tau: \mathbb{S}_n \rightarrow \mathbb{C}^{m \times m}$, $$\widehat{f}(\tau) = \sum_{\sigma \in \mathbb{S}_n} f(\sigma) \cdot \tau(\sigma).$$
Though we probably won't need it, here is the definition of Fourier transform and the Fourier inversion formula in this context: 
\[
\widehat{f}(\rho_{\lambda}) = \sum_{\sigma} f(\sigma) \cdot \rho_{\lambda}(\sigma), \  \  f(\sigma) = \sum_{\lambda} \frac{1}{|\mathbb{S}_n|} d_{\lambda} \cdot \mathsf{Tr}(\widehat{f}(\rho_{\lambda})^\dagger \cdot \rho_{\lambda}(\sigma))
\]
Now, consider any sparse function $f: \mathbb{S}_n \rightarrow \mathbb{R}$ (say $\Vert f \Vert_0=k$ and $\Vert f \Vert_1=1$). It is not too difficult to see that $\Vert \widehat{f}(\rho_{\lambda}) \Vert_F$ is large  for at least some $\rho_{\lambda}$. To see this, we recall Parseval's identity: 
\[
\sum_{\lambda} \Vert \widehat{f}(\rho_{\lambda}) \Vert_F^2 = \sum_{\sigma} |f(\sigma)|^2 \cdot (\sum_{\lambda} d_{\lambda}). 
\]
Thus, we also get that 
$$
\sum_{\lambda} d_{\lambda} \Vert \widehat{f}(\rho_{\lambda}) \Vert_2^2 \ge \sum_{\sigma} |f(\sigma)|^2 \cdot (\sum_{\lambda} d_{\lambda}).
$$
This implies that $\max_{\lambda} \Vert \widehat{f}(\rho_{\lambda}) \Vert_2^2 \ge \sum_{\sigma} |f(\sigma)|^2$. Of course, the issue is that we are not content with showing the existence of some large Fourier coefficient. Rather, we have to show that for small $\lambda$, $\widehat{f}(\rho_{\lambda})$ is large provided $f$ is sparse. The idea that ``low-frequency" Fourier transforms can be useful for inference is not new -- In fact, Huang \emph{et al.}~\cite{huang2009fourier} explicitly suggested using low-frequency Fourier transforms as a way to represent distributions over $\mathbb{S}_n$. However, there were not any rigorous results proven. For sparse functions, \cite{jagabathula2009inferring} showed that under some (linear algebraic) conditions $f$ can be recovered provided we know the value of $\widehat{f}(\rho_{\lambda})$ exactly. Our focus will be on extending this work and understanding the conditions under which $f$ can be recovered from noisy values of $\widehat{f}(\rho_{\lambda})$. 
}

\subsubsection{Restrictions of irreducible representations} 
Fix $\lambda\vdash n$ and consider the irreducible representation $\rho_\lambda$ of $\mathbb{S}_n$. For any $m \leq n$,  $\mathbb{S}_m$ can be viewed as the  subgroup of $\mathbb{S}_n$ where elements $\{m+1,\dots,n\}$ are fixed. Hence $\rho_{\lambda}$ can also be viewed as a representation of $\mathbb{S}_n$; this representation of $\mathbb{S}_m$ is written $\rho^m_\lambda$ and is called the \emph{restriction} of $\rho_\lambda$ to $\mathbb{S}_m$. Note that $\rho^m_\lambda$ may not be an \emph{irreducible} representation of $\mathbb{S}_m$.  By~\Cref{thm:Maschke}, we have that $\rho^m_\lambda$ is equivalent to some direct sum

\[
\ignore{\rho_{\lambda} \cong }\mathop{\oplus}_{\mu \vdash m} M_{\lambda, \mu} \rho_{\mu},
\]
in which there are $M_{\lambda,\mu}$ many copies of $p_\mu$, for some non-negative integers $M_{\lambda, \mu}$. These integers are given by the so-called ``branching rule" on Young's lattice, which we now describe. 

\begin{definition}~\label{def:Young-lattice}
\emph{Young's lattice} is the partially ordered set of Young diagrams in which the partial order is given by inclusion in the following sense: given partitions $\mu$ and $\lambda$, we write ``$\mu \uparrow \lambda$'' if $\lambda$ can be obtained by adding one box to $\mu$ (in such a way that $\lambda$ is a valid partition, of course).   If there are partitions $\mu^1,\dots,\mu^r$ such that
$\mu^1 \uparrow \mu^2 \uparrow \cdots \uparrow \mu^r$, we write ``$\mu^1 \Uparrow \mu^r$.''
\end{definition}

It is convenient to draw Young's lattice in such a way that the  $n$-th level contains all and only the Young diagrams with $n$ boxes. The diagram in~\Cref{fig:young} depicts the first five levels of Young's lattice. 

\begin{figure}
\begin{center}
\includegraphics[scale=0.8]{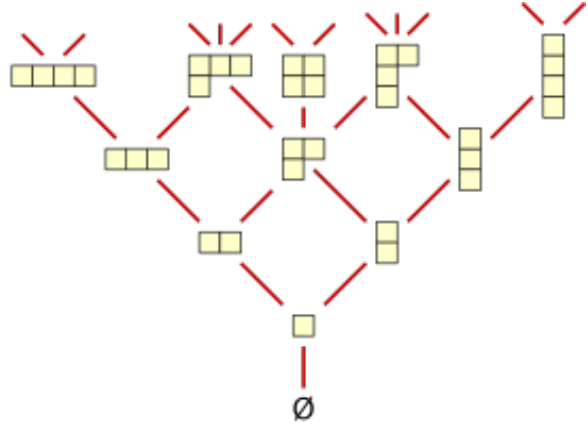}
\caption{The first five levels of Young's lattice.}
\label{fig:young}
\end{center}
\end{figure}

The next result, known as the ``branching rule,'' states that for $\lambda \vdash n$, $\rho_{\lambda}$ 
splits into a direct sum of $\rho_{\mu}$ over all $\mu  \uparrow \lambda$ when $\rho_{\lambda}$ is restricted to $\mathbb{S}_{n-1}$:

\begin{lemma} [Branching rule] \label{lem:branch1}
Let $\lambda$ be a partition of $n$ and let $\rho_{\lambda}$ be the corresponding irreducible representation of $\mathbb{S}_n$. Then
$\rho_{\lambda}^{n-1}$, the restriction of $\rho_\lambda$ to $\mathbb{S}_{n-1}$, is equivalent to
\[ 
\mathop{\oplus}_{\mu \vdash n-1 \, : \, \mu\uparrow \lambda} \rho_{\mu}.
\]
\end{lemma}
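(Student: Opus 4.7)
\medskip
\noindent\textbf{Proof plan.} The plan is to reduce to a character identity and then invoke the Murnaghan--Nakayama rule. By \Cref{thm:characters}, a representation of a finite group is determined up to equivalence by its character, so it suffices to verify
\[
\chi^\lambda(\sigma) \;=\; \sum_{\mu \,\vdash\, n-1,\; \mu \uparrow \lambda} \chi^\mu(\sigma) \qquad \text{for all } \sigma \in \mathbb{S}_{n-1}.
\]

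The main step will be to observe that when $\sigma \in \mathbb{S}_{n-1}$ is regarded as an element of $\mathbb{S}_n$, its cycle decomposition acquires exactly one extra fixed point, namely the $1$-cycle $(n)$. The Murnaghan--Nakayama rule expresses $\chi^\lambda(\sigma)$ as a signed sum over sequences of rim-hook removals from $\lambda$ whose sizes match the cycle lengths of $\sigma$. Peeling off the $1$-cycle $(n)$ \emph{first} corresponds to removing a single box from $\lambda$; such a box must be a removable corner (giving some $\mu \uparrow \lambda$), and because every length-$1$ rim hook has height $0$, the sign is $+1$. The remaining rim-hook removals then contribute exactly $\chi^\mu(\sigma)$ (with $\sigma$ now viewed back inside $\mathbb{S}_{n-1}$), which gives the displayed identity.

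The only real obstacle is that this invokes Murnaghan--Nakayama, a non-trivial but classical result. A fully self-contained alternative would use James's Specht-module filtration: realize $\rho_\lambda$ on the Specht module $S^\lambda$ with basis of standard polytabloids $\{e_t\}$, order the removable boxes of $\lambda$ from top to bottom as $b^{(1)},\dots,b^{(r)}$ with corresponding partitions $\mu^{(1)},\dots,\mu^{(r)} \uparrow \lambda$, and define $V_i \subseteq S^\lambda$ to be the span of those $e_t$ whose entry $n$ lies in some $b^{(j)}$ with $j \ge i$. Each $V_i$ is $\mathbb{S}_{n-1}$-invariant since $\mathbb{S}_{n-1}$ fixes the symbol $n$, and the delicate step would be to establish an $\mathbb{S}_{n-1}$-isomorphism $V_i/V_{i+1} \cong S^{\mu^{(i)}}$ via the map $e_t \mapsto e_{t'}$ that deletes $n$ from $t$; the top-to-bottom ordering of corners is essential here so that the Garnir relations defining $S^{\mu^{(i)}}$ lift to $V_i$ without introducing terms outside $V_{i+1}$. \Cref{thm:Maschke} would then split the filtration over $\mathbb{C}$ into the direct sum $\bigoplus_i \rho_{\mu^{(i)}}$, completing the proof.
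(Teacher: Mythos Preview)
The paper does not actually supply a proof of \Cref{lem:branch1}; it simply states the branching rule as a classical fact (with the surrounding references to \cite{james2006representation,meliot2017representation}) and immediately moves on to its inductive consequence \Cref{thm:branch}. So there is no ``paper's own proof'' to compare against.

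Your proposal is mathematically sound, and both routes you outline are standard textbook arguments. A small caution on the first route: in many developments (including the one in \cite{meliot2017representation} that the paper cites for \Cref{fact:rational}), the Murnaghan--Nakayama rule is itself \emph{derived} from the branching rule via iterated restriction/induction and Young's rule, so invoking it here risks circularity depending on which logical order the reader has in mind. Your second route --- the Specht-module filtration by the position of the entry $n$, with quotients $V_i/V_{i+1}\cong S^{\mu^{(i)}}$ --- is the more self-contained and direct argument, and is essentially what appears in \cite{james2006representation}. If you were writing this for the paper, the cleanest option would be simply to cite the result (e.g., Theorem~2.8.3 in \cite{james2006representation} or the relevant section of \cite{meliot2017representation}), matching what the authors in fact do.
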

By applying~\Cref{lem:branch1} inductively we get a complete description of how $\rho_{\lambda}$ splits when it is restricted to any $\mathbb{S}_m$, $m < n$:

\begin{theorem}~\label{thm:branch}
Let $\lambda \vdash n$ and let $\rho_{\lambda}$ be the corresponding irreducible representation of $\mathbb{S}_n$. For $m < n$ we have that $\rho_{\lambda}^{m},$ the restriction of $\rho_{\lambda}$ to $\mathbb{S}_{m}$, is equivalent to
\[
 \mathop{\oplus}_{\mu \vdash m} \paths (\mu, \lambda) \rho_{\mu},
\]
where $\paths(\mu, \lambda)$ denotes the number of paths in Young's lattice from $\mu$ to $\lambda$. 
\end{theorem}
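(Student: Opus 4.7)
The plan is to prove this by induction on $n-m$, using the branching rule (Lemma~\ref{lem:branch1}) as the engine and a simple combinatorial identity about paths in Young's lattice to combine the pieces.

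\textbf{Base case ($n-m=1$).} This is precisely Lemma~\ref{lem:branch1}: $\rho_\lambda^{n-1}$ is equivalent to $\bigoplus_{\mu\uparrow\lambda}\rho_\mu$. This matches the claimed formula because the number of paths in Young's lattice from a partition $\mu\vdash n-1$ up to $\lambda\vdash n$ is exactly $1$ when $\mu\uparrow\lambda$ and $0$ otherwise.

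\textbf{Inductive step.} Fix $m<n-1$ and assume the statement has been established for the pair $(m+1,n)$; that is, $\rho_\lambda^{m+1}$ is equivalent to $\bigoplus_{\nu\vdash m+1}\paths(\nu,\lambda)\,\rho_\nu$. Restricting further to $\mathbb{S}_m$ commutes with taking direct sums, so
\[
\rho_\lambda^{m} \;\cong\; \bigoplus_{\nu\vdash m+1}\paths(\nu,\lambda)\,\rho_\nu^{m}.
\]
Applying Lemma~\ref{lem:branch1} to each irreducible $\rho_\nu$ of $\mathbb{S}_{m+1}$ gives $\rho_\nu^{m}\cong\bigoplus_{\mu\uparrow\nu}\rho_\mu$. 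Substituting and interchanging the two direct sums, the multiplicity of $\rho_\mu$ in $\rho_\lambda^m$ becomes
\[
\sum_{\nu\vdash m+1,\ \mu\uparrow\nu}\paths(\nu,\lambda).
\]

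\textbf{Combinatorial identity.} It remains to observe that this last sum equals $\paths(\mu,\lambda)$. Indeed, any path from $\mu$ up to $\lambda$ in Young's lattice begins with a single-box addition $\mu\uparrow\nu$ for some $\nu\vdash m+1$, followed by a path from $\nu$ up to $\lambda$; conversely, every choice of such a $\nu$ together with a path $\nu\Uparrow\lambda$ yields a distinct path $\mu\Uparrow\lambda$. This is a bijection, which gives the identity and completes the induction.

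\textbf{What could be tricky.} The argument is essentially bookkeeping once the branching rule is granted, so there is no serious obstacle; the one point to be careful about is that ``equivalent'' means equivalent as representations (i.e.\ up to simultaneous change of basis), and one must verify that the direct-sum decompositions commute with restriction and with conjugation by the intertwining matrices inherited from the inductive step. This is standard because restriction is an additive functor on the category of $\mathbb{S}_n$-representations, so the equivalences at each stage compose into an equivalence $\rho_\lambda^m \cong \bigoplus_{\mu\vdash m}\paths(\mu,\lambda)\,\rho_\mu$, as claimed.
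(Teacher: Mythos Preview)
Your proof is correct and follows exactly the approach the paper indicates: the paper simply states that the theorem follows ``by applying~\Cref{lem:branch1} inductively,'' and your argument spells out precisely that induction on $n-m$ together with the obvious path-counting identity. There is nothing to add.
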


\noindent
{\bf Irreducible characters of the symmetric group.} Finally, we recall the following fundamental fact (which is a consequence, e.g., of the Murnaghan-Nakayama rule) which we will use:

\begin{fact}~\label{fact:rational}
[see e.g.~Theorem~3.10 in \cite{meliot2017representation}]
Let $\chi: \mathbb{S}_m \rightarrow \mathbb{C}$ be a character of $\mathbb{S}_m$. Then in fact $\chi$ is $\mathbb{Q}$-valued. 
\end{fact}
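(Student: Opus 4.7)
My plan is to prove this via a standard Galois-theoretic argument that exploits a structural feature of $\mathbb{S}_m$ not shared by general finite groups, namely that conjugacy classes are determined by cycle type. The key point is that this cycle-type structure is preserved under taking powers of group elements that are coprime to their order, and this compatibility is exactly what is needed to match the Galois action on cyclotomic fields.

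First, I would set up the general framework. Let $\rho : \mathbb{S}_m \to \mathrm{GL}_d(\mathbb{C})$ be an irreducible representation with character $\chi$. Fix $\sigma \in \mathbb{S}_m$ with order $N$. Since $\rho(\sigma)^N = \mathsf{Id}$, the matrix $\rho(\sigma)$ is diagonalizable with eigenvalues $\zeta_1, \ldots, \zeta_d$ that are $N$-th roots of unity, so $\chi(\sigma) = \sum_i \zeta_i \in \mathbb{Q}(\zeta_N)$, where $\zeta_N$ is a primitive $N$-th root of unity. Moreover, $\chi(\sigma^k) = \sum_i \zeta_i^k$. By the standard theory of cyclotomic fields, $\mathrm{Gal}(\mathbb{Q}(\zeta_N)/\mathbb{Q}) \cong (\mathbb{Z}/N\mathbb{Z})^{\times}$, where the automorphism $\psi_k$ corresponding to $k \in (\mathbb{Z}/N\mathbb{Z})^{\times}$ acts by $\zeta_N \mapsto \zeta_N^k$. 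Applying $\psi_k$ to $\chi(\sigma) = \sum_i \zeta_i$ yields $\psi_k(\chi(\sigma)) = \sum_i \zeta_i^k = \chi(\sigma^k)$.

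Second, I would invoke the symmetric-group-specific fact: if $\gcd(k, N) = 1$ then $\sigma^k$ is conjugate to $\sigma$ in $\mathbb{S}_m$. The reason is that $\sigma$ has some cycle decomposition with cycle lengths $\ell_1, \ldots, \ell_r$, and $N = \mathrm{lcm}(\ell_1, \ldots, \ell_r)$; hence $\gcd(k,N)=1$ implies $\gcd(k,\ell_i)=1$ for every $i$, so raising each $\ell_i$-cycle to the $k$-th power again produces an $\ell_i$-cycle. Thus $\sigma^k$ has the same cycle type as $\sigma$, and since conjugacy in $\mathbb{S}_m$ is determined exactly by cycle type, $\sigma^k$ and $\sigma$ lie in the same conjugacy class. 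Since $\chi$ is a class function, $\chi(\sigma^k) = \chi(\sigma)$.

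Combining the two ingredients, every Galois automorphism $\psi_k$ fixes $\chi(\sigma)$, so $\chi(\sigma) \in \mathbb{Q}$. Since $\sigma$ was arbitrary, $\chi$ is $\mathbb{Q}$-valued. The main obstacle — really the only nontrivial step — is the cycle-type preservation under coprime powers, which is where the argument crucially uses that we are working with $\mathbb{S}_m$ rather than an arbitrary finite group. As a remark, one could alternatively prove the stronger statement that $\chi$ is in fact $\mathbb{Z}$-valued by invoking the Murnaghan–Nakayama rule, which expresses $\chi_\lambda(\sigma)$ as a signed sum of $\pm 1$ terms over border-strip tableaux; but the Galois-theoretic proof above is cleaner and suffices for the rationality statement that the paper actually uses.
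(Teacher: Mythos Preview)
Your argument is correct and is the standard Galois-theoretic proof of this fact. However, the paper does not actually prove \Cref{fact:rational}: it simply cites it (Theorem~3.10 in M{\'e}liot's book) and remarks that it is a consequence of the Murnaghan--Nakayama rule. So there is no ``paper's own proof'' to compare against beyond that one-line pointer.

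Your route via the Galois action on $\mathbb{Q}(\zeta_N)$ and the cycle-type-preservation of coprime powers is exactly the classical argument; the only nontrivial step, as you correctly identify, is that $\sigma$ and $\sigma^k$ are conjugate in $\mathbb{S}_m$ whenever $\gcd(k,\mathrm{ord}(\sigma))=1$, which uses the cycle-type characterization of conjugacy classes. The Murnaghan--Nakayama alternative you mention at the end is in fact the route the paper gestures toward, and it gives the stronger conclusion $\chi(\sigma)\in\mathbb{Z}$; but since the paper only ever uses rationality (specifically, in \Cref{sec:lowerbound} to split $\chi_{\mathsf{sq}}$ into its positive and negative parts), your Galois argument is entirely sufficient.
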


\subsubsection*{Acknowledgments}
We thank Mike Saks for allowing us to include his proof of Claim~\ref{clm:WY-n-ary} here. We also thank  Vic Reiner and Yuval Roichman for answering several  questions about representation theory. Anindya is grateful to Aravindan Vijayaraghavan for many useful discussions about ranking models. 
\bibliography{allrefs}{}
\bibliographystyle{alpha}

\end{document}